\documentclass[11pt]{article}

\usepackage[final]{acl}

\usepackage{times}
\usepackage{latexsym}

\usepackage[T1]{fontenc}

\usepackage[utf8]{inputenc}

\usepackage{microtype}

\usepackage{inconsolata}

\usepackage{graphicx}

\usepackage{bm}
\newcommand{\vx}{\bm{x}}
\newcommand{\vy}{\bm{y}}

\newcommand{\g}{\bm{g}}
\newcommand{\f}{\bm{f}}

\newcommand{\m}{\texttt{m}}

\newcommand{\X}{\mathcal{X}}
\newcommand{\Y}{\mathcal{Y}}

\newcommand{\E}{\mathbb{E}}
\newcommand{\N}{\mathbb{N}}
\newcommand{\R}{\mathbb{R}}
\newcommand{\D}{\mathbb{D}}

\usepackage[]{mdframed}
\usepackage{enumitem}
\usepackage{tikz}
\usepackage{wrapfig}
\usepackage{tabularx}
\usepackage{mdframed}
\usepackage{amsmath}
\usepackage{amssymb}
\usepackage{mathtools}
\usepackage{amsthm}
\usepackage{booktabs}
\theoremstyle{plain}
\newtheorem{theorem}{Theorem}[section]
\newtheorem{proposition}[theorem]{Proposition}
\newtheorem{lemma}[theorem]{Lemma}

\theoremstyle{definition}

\theoremstyle{remark}

\usepackage{fontawesome5}

\title{MDM-Prime-v2: Binary Encoding and Index Shuffling Enable \\ Scaling of Diffusion Language Models}

\author{
 \textbf{Chen-Hao Chao\textsuperscript{1}},
 \textbf{Wei-Fang Sun\textsuperscript{2}},
 \textbf{Junwei Quan\textsuperscript{1}},
 \textbf{Chun-Yi Lee\textsuperscript{3}},
 \textbf{Rahul G. Krishnan\textsuperscript{1}}
\\
 \textsuperscript{1}University of Toronto, Vector Institute
\\
 \textsuperscript{2}NVIDIA AI Technology Center
\\
 \textsuperscript{3}National Taiwan University
\\
 \small{
   \textbf{Correspondence:} \href{rahulgk@cs.toronto.edu}{chchao@cs.toronto.edu}, \href{rahulgk@cs.toronto.edu}{rahulgk@cs.toronto.edu}
 }
 \\
 \small{
   \faGithub~\textbf{Code:} \url{https://github.com/chen-hao-chao/mdm-prime-v2}
 }
}

\begin{document}
\maketitle
\begin{abstract}
Masked diffusion models (MDM) exhibit superior generalization when learned using a \underline{P}a\underline{r}t\underline{i}al \underline{m}asking schem\underline{e} (Prime). This approach converts tokens into sub-tokens and models the diffusion process at the sub-token level. We identify two limitations of the MDM-Prime framework. First, we find that the functional form of the subtokenizer significantly increases the cross-entropy loss in the objective when paired with commonly used Byte-Pair-Encoding (BPE) tokenizers. Second, we lack tools to guide the hyperparameter choice of the token granularity in the subtokenizer.  To address these limitations, we analyze the optimal design of the subtokenizer that minimizes MDM-Prime training objective and develop MDM-Prime-v2, a masked diffusion language model which incorporates \textit{Binary Encoding} and \textit{Index Shuffling}. Our analysis characterizes how token granularity and sub-token entropy influence the training objective and downstream performance, providing principled criteria for subtokenizer design. When extending the model size to 1.1B parameters, MDM-Prime-v2 demonstrates superior average zero-shot accuracy across eight commonsense reasoning benchmarks, outperforming similar-sized baselines including GPT-Neo, OPT, Pythia, Bloom, SMDM, and TinyLLaMA.
\end{abstract}

\section{Introduction}
\begin{figure}[t!]
    \centering
    \vspace{-0.5em}
    \includegraphics[width=\linewidth]{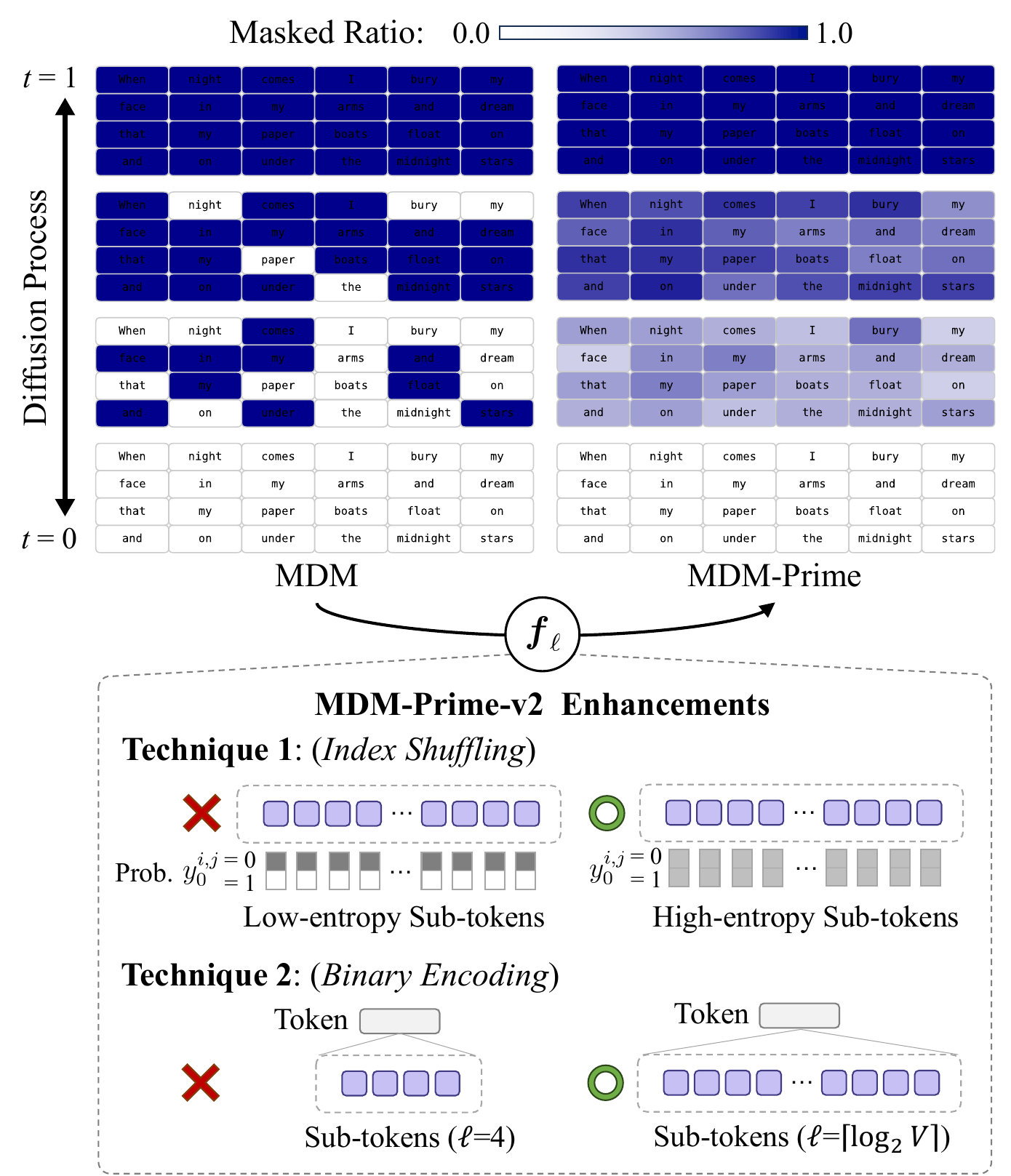}
    \caption{Overview of MDM, MDM-Prime, and the MDM-Prime-v2 enhancements. $\ell$ is the token granularity in base-$b$ encoding and $V$ is the vocabulary size. $y_0^{i,j}$ is a sub-token with positional indices $i$ and $j$. MDM-Prime-v2 adopts two techniques: Technique 1 employs an index shuffling operation to encode high-entropy sub-tokens. Technique 2 sets $\ell=\lceil\log_2 V\rceil$ for $\f_\ell$, which performs binary encoding.  The gray cell intensity in the lower section represents sub-token probability.}
    \vspace{-1.5em}
    \label{fig:teaser}
\end{figure}

Generative pretraining serves as the cornerstone of scalable language modeling~\cite{kaplan2020scalinglawsneurallanguage}. Autoregressive models (ARM) (e.g., \citet{radford2019language}) and masked diffusion models (MDM) (e.g., \citet{sahoo2024simplifieddiff, nie2025llmdiff}) represent two paradigms: the former utilizes the chain rule to decompose the joint likelihood of the data, while the latter employs stochastic unmasking to reconstruct data from masked sequences. 

\citet{chao2025mdmprime} demonstrated that subtokenization improves the expressivity of MDM latent representations, enabling partial word editing during diffusion steps and substantially improving sampling behavior. This framework was formalized as MDM-Prime, a generalized class of MDMs. In MDMs, each token takes one of two states, \textit{masked} or \textit{unmasked}. MDM-Prime introduces a richer set of intermediate latent representations via partially masked tokens. MDM-Prime uses a subtokenizer $\f_\ell$ with adjustable token granularity $\ell$ to encode each token into a sequence of $\ell$ sub-tokens. By modeling the diffusion process at the sub-token level, partially masked tokens can be produced naturally as intermediate transition states, leading to a fine-grained denoising process, as illustrated in the upper section of Fig.~\ref{fig:teaser}. 

We investigate two fundamental questions regarding $\f_\ell$. First, we identify a link between the subtokenizer, which determines how the model perceives inputs, and the tokenizer, which defines the data distribution. Our analysis of $\f_\ell$ reveals that high-entropy sub-tokens decrease the conditional (predictive) entropy, effectively minimizing the objective. Second, instead of treating $\ell$ as an empirically tuned hyperparameter, we characterize the relationship between $\ell$ and downstream performance, and establish a principled criterion for its selection. Based on this insight, we introduce a technique called index shuffling to permute token indices. We integrate these improvements into a unified framework named MDM-Prime-v2.

We perform a scaling analysis of MDM-Prime-v2. By varying compute budgets from $3\times10^{18}$ to $3\times10^{20}$ FLOPs, we determine compute-optimal configurations and derive its scaling behavior via Chinchilla analysis~\cite{hoffmann2022chinchilla}. We scale MDM-Prime-v2 to 1.1B parameters and our model demonstrates superior performance over similar-sized baselines including GPT-Neo~\cite{gpt-neo}, OPT~\cite{zhang2022optopenpretrainedtransformer}, Pythia~\cite{biderman2023pythia}, Bloom~\cite{workshop2023bloom176bparameteropenaccessmultilingual}, SMDM~\cite{nie2025scalingmaskeddiffusionmodels} and TinyLLaMA~\cite{zhang2024tinyllama} on zero-shot commonsense reasoning benchmarks.
The contributions of this work are summarized as follows:
\begin{itemize}[leftmargin=10pt]
\vspace{-0.7em}
    \item We analyze the MDM-Prime objective and characterize how sub-token entropy and token granularity $\ell$ influence its optimal value, providing a theoretical basis for subtokenizer design.\vspace{-0.3em}
    \item Based on the aforementioned analysis, we establish two practical techniques: a technique to enhance sub-token entropy, called index shuffling, and a selection criterion for $\ell$ under which $\f_\ell$ performs binary encoding. \vspace{-0.3em}
    \item We demonstrate that MDM-Prime-v2 outperforms MDMs and ARMs at the 1.1B parameter scale, achieving the highest average zero-shot accuracy across eight benchmarks. 
\end{itemize}

\section{Background}
\label{sec:background}


\subsection{Masked Diffusion Models}
\label{sec:background:mdm}
Let $\vx = [x^1, \cdots, x^L] \in \X^L$ be a sequence of $L$ tokens 
, where $x^i$ denotes an element of $\vx$ and $\X \triangleq \{0,\cdots,V-1\}$ represents a set of $V$ token indices. Given a continuous time variable $t \in [0,1]$, $\vx_0\in\mathcal{X}^L$ denote the sample drawn from the data distribution $q_{\text{data}}$, and $\vx_t\in (\X \cup \{\m\} )^L \triangleq \tilde{\X}^L$ denotes the latent variable introduced by the forward diffusion process, where $\texttt{m}$ represents the masked token. Let $\delta_{x'}(x)$ be the Kronecker delta function, which equals $1$ if $x = x'$ and $0$ otherwise. The forward diffusion process is performed through a kernel $q_{\alpha}(\vx_{t}|\vx_0)=\prod_{i=1}^L q_{\alpha}(x_{t}^i| x^i_0)$ following a strictly decreasing time-dependent scheduling function $\alpha_t: [0,1] \to [0,1]$. The element-wise kernel $q_{\alpha}$ is defined as follows~\cite{sahoo2024simplifieddiff}:
\begin{equation}
\label{eq:kernel}
 q_{\alpha}(x^i_t\mid x_0^i)=(1-\alpha_t)\delta_{\texttt{m}}(x^i_t)+\alpha_t \delta_{x_0^i}(x^i_t).
\end{equation}
Let $s,t\in[0,1]$ be time variables such that $s < t$. The reverse diffusion process is performed by iteratively applying $p(\vx_{s}|\vx_t)=\E_{p(\vx_0|\vx_t)}[q_\alpha(\vx_{s}|\vx_{t},\vx_{0})]$~\cite{austin2021structurediff}, by first drawing $\vx_0\sim p(\cdot\mid \vx_t)$ and then sampling $\vx_{s} \sim q_\alpha(\cdot \mid\vx_{t},\vx_{0})$ in each timestep, starting from $t=1$. The reverse kernel $q_\alpha(\vx_{s}|\vx_{t},\vx_{0})=\prod_{i=1}^L q_\alpha(x^i_{s}|x^i_{t},x^i_{0})$ is defined as:
\vspace{-0.2em}
\begin{equation}
\label{eq:mdm_reverse_kernel}
\begin{aligned}
    q_\alpha&(x^i_{s}|x^i_{t},x^i_{0}) \\
    &=\begin{cases}
      \delta_{x^i_t}(x^i_{s}),&\hspace{-0.6em}\text{if } x^i_t \in \X,\\
      \frac{1-\alpha_{s}}{1-\alpha_t}\delta_{\texttt{m}}(x^i_{s})+\frac{\alpha_{s}-\alpha_{t}}{1-\alpha_t} \delta_{x^i_0}(x^i_{s}),&\hspace{-0.6em}\text{if }x^i_t = \texttt{m}.
    \end{cases}    
\end{aligned}
\end{equation}
Intuitively, each masked token transitions to its original value $x^i_{0}$ with probability $\frac{\alpha_{s}-\alpha_{t}}{1-\alpha_t}$ and retains the masked value with probability $\frac{1-\alpha_{s}}{1-\alpha_t}$.

To learn $p(\vx_0|\vx_t)$, one common approach is to minimize a weighted cross-entropy loss:
\begin{equation}
\label{eq:mdm_elbo}
\begin{aligned}
    \mathcal{L}=\int_0^1 w(t) \E_{q_\alpha(\vx_0,\vx_t)}\left[-\log p(\vx_0|\vx_t) \right] dt,
\end{aligned}
\end{equation}
where $q(\vx_0,\vx_t)\triangleq q_\text{data}(\vx_0)q_{\alpha}(\vx_t| \vx_0)$. When $w(t)=1$, Eq.~(\ref{eq:mdm_elbo}) represents an integral of conditional cross-entropy loss~\cite{austin2021structurediff,chang2022maskgit,gat2024dfm}. Under the choice $w(t)=\frac{\alpha'_t}{\alpha_t-1}$ (where $\alpha_t'=\frac{d}{dt}\alpha_t$) and the factorized posterior $p(\vx_0|\vx_t)=\prod_{i=1}^L p (x_0^i|\vx_t)$, this objective yields a variational upper bound on the negative log-likelihood of the data distribution~\cite{sahoo2024simplifieddiff, shi2024simplifieddiff}. 


\vspace{-0.3em}
\subsection{Generalization via Partial Masking}
\label{sec:background:prime}
Given the token granularity $\ell$, MDM-Prime~\cite{chao2025mdmprime} represents each token $x^i_0\in \X$ as a sequence of $\ell$ sub-tokens $\vy^i_0=[y^{i,1}_0,\cdots,y^{i,\ell}_0]\in \Y^\ell$ via an invertible function $f_\ell:\X\to\Y^{\ell}$, known as subtokenizer, where $\Y \triangleq \{0, \cdots, b-1\}$ denotes a set of sub-token indices with $b=\lceil\sqrt[\ell]{V}\rceil\in \N$. The vectorized function $\f_\ell$ applied to the full sequence is defined as: $\f_\ell(\vx_0)\triangleq [f_\ell(x^1_0), \cdots, f_\ell(x^L_0)]=[\vy^1_0,\cdots,\vy^L_0]=[(y^{1,1}_0,\cdots,y^{1,\ell}_0),\cdots,(y^{L,1}_0,\cdots,y^{L,\ell}_0)]=\vy_0$. The maximum $\ell$ for $\f_\ell$ is $\lceil\log_2 V\rceil$, where the function encodes tokens into binary sub-tokens. This operation is composable: for granularities $\ell_1, \ell_2$ such that $\frac{\ell_2}{\ell_1} \in \mathbb{N}$, the function can be decomposed as $\f_{\ell_2} = \f_{\ell_2/\ell_1} \circ \f_{\ell_1}$. MDM-Prime uses base-$b$ encoding for $\f_\ell$, denoted as $\f_\ell\triangleq \f_\text{base-$b$}$, and implements it using a lookup table. 


The latent variable $\vy_t\in (\Y \cup \{\m\} )^{L\times \ell} \triangleq \tilde{\Y}^{L\times \ell}$ is sampled through independent masking, analogous to Eq.~(\ref{eq:kernel}). The forward kernel $q_{\alpha}(\vy_t| \vy_0)=\prod_{i=1}^{L}\prod_{j=1}^{\ell} q_{\alpha}(y^{i,j}_t| y_0^{i,j})$ is defined as follows: 
\begin{equation}
\label{eq:prime_kernel}
 q_{\alpha}(y^{i,j}_t\mid y_0^{i,j})=(1-\alpha_t)\delta_{\texttt{m}}(y^{i,j}_t)+\alpha_t \delta_{y_0^{i,j}}(y^{i,j}_t).
\end{equation}
Since $\f_\ell$ is invertible and both $\vx_0$ and $\vy_0$ are discrete, the change-of-variable principle indicates that the target distribution is invariant: $q_\text{data}(\vx_0)=q_\text{data}\circ \f^{-1}_{\ell}(\vy_0)\triangleq q_\text{data\_y}(\vy_0)$. 
Therefore, MDM-Prime approximates the same objective as MDM by substituting $\vx_0$ as $\vy_0$. The objective can be expressed in a similar way as Eq.~(\ref{eq:mdm_elbo}):
\begin{equation}
\label{eq:prime_elbo}
\begin{aligned}
\mathcal{L}^{(\ell)}=\int_0^1 w(t) \E_{q_\alpha(\vy_0,\vy_t)}\left[-\log p_{\ell} (\vy_0| \vy_t) \right] dt,
\end{aligned}
\end{equation}
where $q_\alpha(\vy_0,\vy_t)=q_\text{data\_y}(\vy_0)q_{\alpha}(\vy_t| \vy_0)$, and $p_{\ell}(\vy_0| \vy_t)$ denotes the MDM-Prime model. The weighting factor $w(t)=\frac{\alpha'_t}{\alpha_t-1}$ is the same as standard MDMs. We elaborate on how to approximate the variational bound of MDM-Prime using Eq.~(\ref{eq:prime_elbo}) in Appendix~\ref{sec:apx:proof:nll}.

Adapting the model architecture from MDM to MDM-Prime requires only a simple modification of the embedding lookup table. In this setup, sub-token embeddings are aggregated into token embeddings and subsequently processed by the neural network at the token level. This design preserves the computational cost (FLOPs) of each training iteration. A detailed description of the model architecture is provided in Appendix~\ref{sec:apx:specification:architecture} and Fig.~\ref{fig:architecture}~(a).

\section{Methodology}
This section details two proposed enhancements: Section~\ref{sec:methodology:entropy} explores improved function form for the subtokenizer and Section~\ref{sec:methodology:likelihood} discusses the token granularity selection.

\subsection{Increasing Entropy via Index Shuffling}
\label{sec:methodology:entropy}

Instead of defining $\f_\ell \triangleq \f_\text{base-$b$}$ as in MDM-Prime, this work derives the condition for $\f_\ell$ to minimize the loss function given a token granularity $\ell$. Let $\f_\text{shuffle}$ denote an \textit{Index Shuffling} operation, which permutes token indices via a lookup table (see Fig.~\ref{fig:subtokenizer}). We propose $\f_\ell \triangleq \f_\text{base-$b$} \circ \f_\text{shuffle}$, which integrates an additional $\f_\text{shuffle}$ to effectively approximate the optimum.

\begin{figure}[t]
    \centering
    \includegraphics[width=0.95\linewidth]{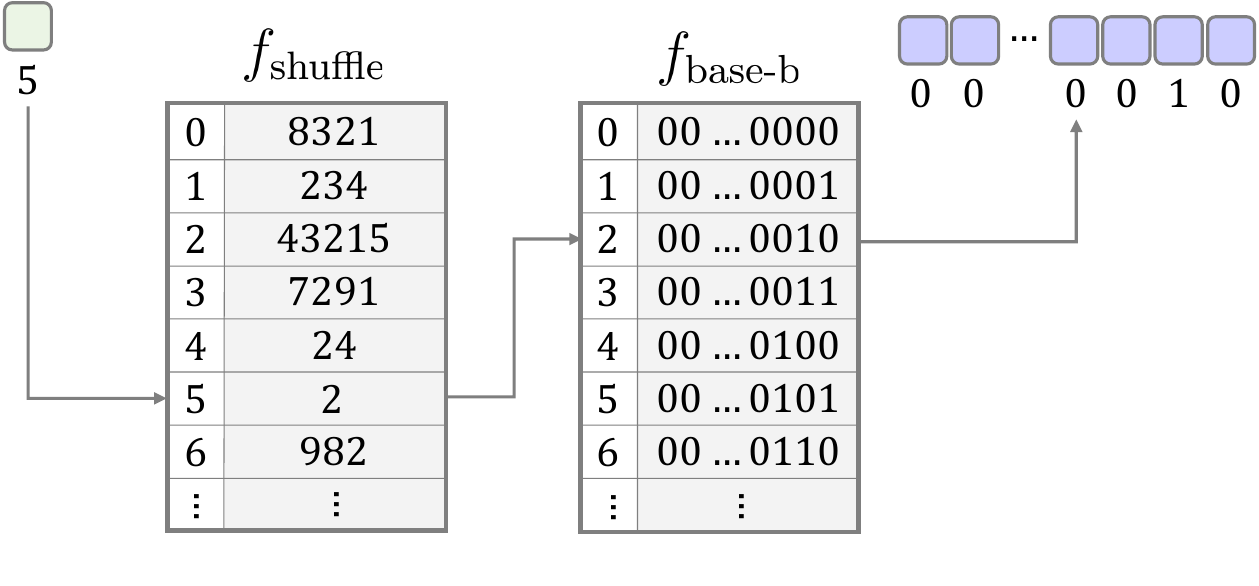}
    \vspace{-0.25em}
    \caption{Implementation of the subtokenizer $f_\ell= f_\text{base-$b$} \circ f_\text{shuffle}$. The process utilizes two lookup tables to map indices efficiently; the inverse operation $f_\ell^{-1}$ is performed by reversing the lookup sequence. `5' denotes the token ID and `00$\cdots$0010' is its sub-token representation. See Appendix~\ref{sec:apx:specification:subtokenizer} for a detailed description.}
    \label{fig:subtokenizer}
    \vspace{-1em}
\end{figure}
To illustrate this design, we first characterize the effect of $\f_\ell$ on the objective (Eq.~(\ref{eq:prime_elbo})). We isolate the term in the objective that depends on the transformation. We present this decomposition in \textbf{Proposition~\ref{proposition:decomposition}} (proof in Appendix~\ref{sec:apx:maximum_ent_subtokenizer}).

\begin{figure*}[ht!]
    \centering    \vspace{-0.7em}\includegraphics[width=\linewidth]{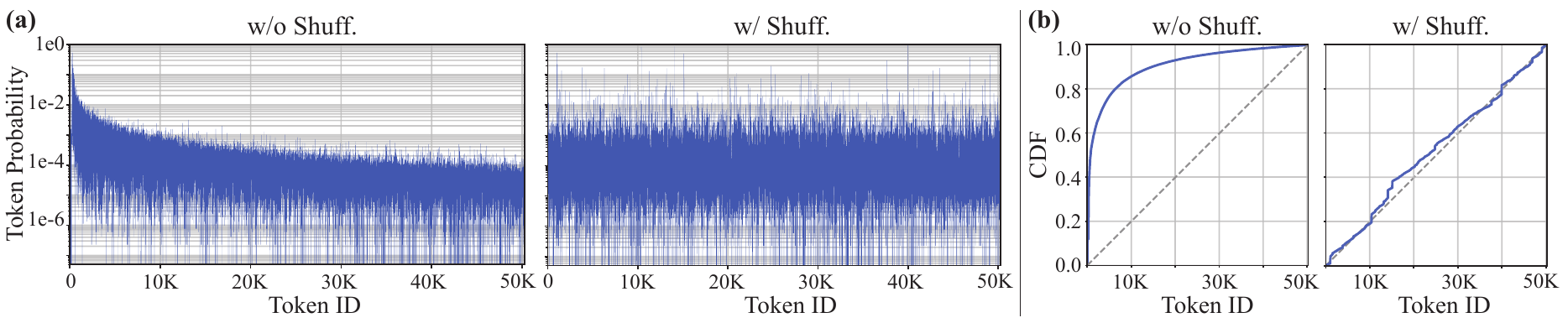}
    \vspace{-1.75em}
    \caption{(a) Token probability and (b) cumulative distribution function (CDF) over token indices of the GPT-2 tokenizer~\cite{radford2019language} evaluated on the C4 dataset~\cite{raffel2023exploringlimitstransferlearning}. The left and right subplots of (a) and (b) show setups without and with index shuffling, respectively. The gray dashed line in (b) represents the CDF of a uniform token distribution. }
    \label{fig:tok_prob}
    \vspace{-0.75em}
\end{figure*}

\begin{proposition}
\label{proposition:decomposition}
The loss can be decomposed into $\f_\ell$-independent and $\f_\ell$-dependent terms as follows:
\begin{equation}
\label{eq:dependent_independent}
\begin{aligned}
&\inf_{\f_\ell} \inf_{p_\ell} \mathcal{L}^{(\ell)} \\
&= \int_{0}^1 w(t) \Big(\!\!\!\underbrace{\mathcal{H}(\vy_0, \vy_t)}_\text{Independent on $\f_\ell$.}\!\!\!\!-\,\,\sup_{\f_\ell} \underbrace{\mathcal{H}(\vy_t)}_\text{Dependent on $\f_\ell$.} \!\!\!\!\!\!\Big)dt,
\end{aligned}
\end{equation}
where $\mathcal{H}(\vy_0, \vy_t)\triangleq \E_{q_\alpha(\vy_0,\vy_t)}[-\log q_\alpha(\vy_0,\vy_t)]$ and $\mathcal{H}(\vy_t)\triangleq \E_{q_\alpha(\vy_t)}[-\log q_\alpha(\vy_t)]$ represent the joint entropy of $\vy_0,\vy_t$ and the entropy of $\vy_t$.
\end{proposition}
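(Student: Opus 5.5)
The plan is to first solve the inner infimum over $p_\ell$ for a fixed $\f_\ell$, and then show that only one entropy term in the result depends on $\f_\ell$. Since $w(t)\ge 0$ (the schedule $\alpha_t$ is strictly decreasing, so $\alpha'_t<0$ and $\alpha_t-1\le 0$), the infimum over $p_\ell$ can be taken pointwise in $t$ inside the integral. For fixed $t$, we have the standard decomposition
\begin{equation*}
\E_{q_\alpha(\vy_0,\vy_t)}[-\log p_\ell(\vy_0|\vy_t)] = \mathcal{H}(\vy_0|\vy_t) + \E_{q_\alpha(\vy_t)}\big[D_{\mathrm{KL}}(q_\alpha(\cdot|\vy_t)\,\|\,p_\ell(\cdot|\vy_t))\big].
\end{equation*}
The KL term is nonnegative and vanishes when $p_\ell(\vy_0|\vy_t)=q_\alpha(\vy_0|\vy_t)$. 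This requires $p_\ell$ to be flexible enough to represent the true posterior, and I will assume an unrestricted model class. We therefore get $\inf_{p_\ell}\mathcal{L}^{(\ell)}=\int_0^1 w(t)\,\mathcal{H}(\vy_0|\vy_t)\,dt$, and the chain rule gives $\mathcal{H}(\vy_0|\vy_t)=\mathcal{H}(\vy_0,\vy_t)-\mathcal{H}(\vy_t)$.

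The next step is to argue that $\mathcal{H}(\vy_0,\vy_t)$ does not depend on $\f_\ell$. Write $\mathcal{H}(\vy_0,\vy_t)=\mathcal{H}(\vy_0)+\mathcal{H}(\vy_t|\vy_0)$.
\begin{itemize}
\item Because $\f_\ell$ is a bijection on a discrete set, $q_\text{data\_y}$ is a relabelling of $q_\text{data}$, so $\mathcal{H}(\vy_0)=\mathcal{H}(\vx_0)$.
\item The kernel factorizes over the $L\ell$ sub-token positions, and each factor masks its position with probability $1-\alpha_t$ regardless of the sub-token value. Given any $\vy_0$, the law of $\vy_t$ is therefore a product of $L\ell$ independent choices between keeping the value and $\m$. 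Hence $\mathcal{H}(\vy_t|\vy_0=\vy)=L\ell\,h_b(\alpha_t)$, with $h_b$ the binary entropy, and this is the same for every $\vy$.
\end{itemize}
So $\mathcal{H}(\vy_0,\vy_t)=\mathcal{H}(\vx_0)+L\ell\,h_b(\alpha_t)$, which depends only on $\ell$, $t$ and the data, not on which invertible $\f_\ell$ is used. I expect this to be the main point requiring care: the "independent of $\f_\ell$" claim is for fixed $\ell$ (the infimum ranges over subtokenizers of a given granularity), and I would state that explicitly.

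Finally, substitute into the outer infimum over $\f_\ell$. Since $w(t)\ge0$ and the first term is constant in $\f_\ell$, we have $\inf_{\f_\ell}\int w(t)\big(\mathcal{H}(\vy_0,\vy_t)-\mathcal{H}(\vy_t)\big)dt$. Moving the infimum inside the integral converts it into $-\sup_{\f_\ell}\mathcal{H}(\vy_t)$ pointwise, which yields Eq.~(\ref{eq:dependent_independent}).

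A subtle issue is exchanging $\inf_{\f_\ell}$ with the integral. In general one only has $\inf\int\ge\int\inf$, with equality when a single $\f_\ell$ maximizes $\mathcal{H}(\vy_t)$ for all $t$ simultaneously. I would handle this by reading the right-hand side with the supremum taken per $t$, which is how the statement is written. I would justify it as follows: the set of bijections is finite, so each supremum is attained, and the quantity is written as a lower bound that becomes an equality when the maximizer is uniform in $t$. If needed, I would note the equality explicitly as holding under this interpretation.
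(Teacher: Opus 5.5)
Your route is the paper's own. You take $p_\ell$ to be the true posterior and split $-\log q_{\text{data\_y}}(\vy_0\mid\vy_t)$ by Bayes' rule (your chain rule) into $-\log q_{\text{data\_y}}(\vy_0)-\log q_\alpha(\vy_t\mid\vy_0)+\log q_\alpha(\vy_t)$. You then observe that the first two terms have $\f_\ell$-invariant expectations; your two bullets are exactly the paper's lemmas on the data entropy and on the kernel entropy $L\ell\,h(\alpha_t)$. Swapping $\inf_{p_\ell}$ with the integral is harmless. The reason is that $q_{\text{data\_y}}(\vy_0\mid\vy_t)$ does not depend on $\alpha_t$, so one $p_\ell$ is optimal at every $t$ simultaneously.

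The caveat in your last paragraph is a genuine defect, and the paper's proof shares it: it simply writes $\inf_{\f_\ell}\int_0^1 w(t)(\cdots)\,dt=\int_0^1 -w(t)\sup_{\f_\ell}(\cdots)\,dt$. Reading the supremum per $t$ does not rescue the equality, because that is exactly the reading under which it fails. Conditioning on the mask pattern gives
$\mathcal{H}(\vy_t)=L\ell\,h(\alpha_t)+\sum_{S}\alpha_t^{|S|}(1-\alpha_t)^{L\ell-|S|}\,\mathcal{H}(\vy_0^{S})$, where $S$ ranges over sets of unmasked sub-token positions. So for $\alpha_t$ near $0$ a maximizing $\f_\ell$ must maximize $\sum_{|S|=1}\mathcal{H}(\vy_0^S)$, and for $\alpha_t$ near $1$ it must maximize $\sum_{|S|=L\ell-1}\mathcal{H}(\vy_0^S)$.

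These two requirements can conflict. Take $L=1$, $V=8$, $\ell=3$ (so $b=2$), and let $q_{\text{data}}$ put masses $0.5,0.3,0.1,0.1$ on four tokens. The second sum attains its bound $3\mathcal{H}(\vy_0)$ exactly when the four tokens occupy one parity class of $\{0,1\}^3$. Every such placement has $\sum_k\mathcal{H}(y_0^k)=h(0.8)+2h(0.6)\approx 2.66$ bits. Placing the tokens at $000,111,110,101$ respectively gives $h(0.5)+2h(0.6)\approx 2.94$ bits instead. Hence no single $\f_\ell$ is optimal at both ends, and for $\alpha_t=1-t$ the left-hand side of the proposition strictly exceeds the right-hand side.

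What your argument, like the paper's, actually establishes is
$\inf_{\f_\ell}\inf_{p_\ell}\mathcal{L}^{(\ell)}\ge\int_0^1 w(t)\big(\mathcal{H}(\vy_0,\vy_t)-\sup_{\f_\ell}\mathcal{H}(\vy_t)\big)\,dt$.
Equality holds if and only if a single $\f_\ell$ maximizes $\mathcal{H}(\vy_t)$ for $w(t)\,dt$-almost every $t$. You should state the result this way rather than as an equality ``under this interpretation.''
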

The $\f_\ell$-independent term corresponds to the joint negative entropy $-\mathcal{H}(\vy_0, \vy_t) = -(\mathcal{H}(\vy_0) + \mathcal{H}(\vy_t | \vy_0))$, where $\mathcal{H}(\vy_0) = \mathcal{H}(\vx_0)$ remains constant due to the invertibility of $\f_\ell$, while $\mathcal{H}(\vy_t | \vy_0)$ is determined solely by the forward kernel in Eq.~(\ref{eq:prime_kernel}). On the other hand, the $\f_\ell$-dependent term suggests that the optimal $\f_\ell$ should maximize the entropy of $\vy_t$, which reaches its optimum when each unmasked $y_t^{i,j}$ is uniformly distributed on $\Y$, as shown in \textbf{Proposition~\ref{proposition:entropy_max}}:

\begin{proposition}
\label{proposition:entropy_max}
The entropy of $\vy_t$ is bounded:
\begin{equation}
\label{eq:yt_entropy_bound}
\mathcal{H}(\vy_t) \leq L\ell (h(\alpha_t)+\alpha_t \log b),
\end{equation}
where $h(\alpha_t)\triangleq -(1-\alpha_t) \log (1-\alpha_t) - \alpha_t \log \alpha_t$. The equality holds if and only if each unmasked $y_t^{i,j}$ is uniformly distributed on $\Y$.
\end{proposition}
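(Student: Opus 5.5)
We bound $\mathcal{H}(\vy_t)$ by subadditivity and then analyze a single sub-token. The first step is the standard inequality $\mathcal{H}(\vy_t)\le\sum_{i=1}^L\sum_{j=1}^\ell \mathcal{H}(y_t^{i,j})$, which holds with equality if and only if the $L\ell$ coordinates of $\vy_t$ are mutually independent. This reduces the problem to bounding the marginal entropy of one sub-token $y_t^{i,j}\in\tilde{\Y}=\Y\cup\{\m\}$.

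For a fixed $(i,j)$, introduce the mask indicator $m^{i,j}\triangleq\delta_{\m}(y_t^{i,j})$. This indicator is a deterministic function of $y_t^{i,j}$, so the chain rule gives $\mathcal{H}(y_t^{i,j})=\mathcal{H}(m^{i,j})+\mathcal{H}(y_t^{i,j}\mid m^{i,j})$. By Eq.~(\ref{eq:prime_kernel}), $m^{i,j}$ is Bernoulli with parameter $1-\alpha_t$ and is independent of $y_0^{i,j}$, so its entropy is exactly $h(\alpha_t)$. Conditioned on being masked, $y_t^{i,j}=\m$ deterministically and contributes zero entropy. Conditioned on being unmasked, which happens with probability $\alpha_t$, we have $y_t^{i,j}=y_0^{i,j}$. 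By the independence just noted, its conditional law equals the marginal data law of $y_0^{i,j}$ on $\Y$, whose entropy is at most $\log b$, with equality iff it is uniform. This yields $\mathcal{H}(y_t^{i,j})\le h(\alpha_t)+\alpha_t\log b$, and summing over the $L\ell$ positions gives Eq.~(\ref{eq:yt_entropy_bound}).

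The main obstacle is the equality characterization. Equality in the total bound requires both that every unmasked marginal be uniform on $\Y$ and that the coordinates of $\vy_t$ be jointly independent. The statement phrases the condition as ``each unmasked $y_t^{i,j}$ is uniformly distributed on $\Y$.'' I would read this as joint uniformity of the unmasked sub-tokens, i.e., any unmasked subset is jointly uniform and hence independent. I would then prove independence of $\vy_t$ from it directly. Because the masks are i.i.d.\ and independent of $\vy_0$, the law of $\vy_t$ factorizes as (mask pattern) $\times$ (law of the revealed entries of $\vy_0$). Joint uniformity of the revealed entries makes this law a product of per-coordinate laws $(1-\alpha_t)\delta_{\m}+\alpha_t\,\mathrm{Unif}(\Y)$.

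For the converse, equality forces equality in subadditivity, hence independence. It also forces equality in each marginal bound, hence uniformity of each $y_0^{i,j}$. Together these give joint uniformity of the unmasked sub-tokens. If $\alpha_t\in(0,1)$, every pattern of revealed coordinates occurs with positive probability, so this transfers to joint uniformity of all of $\vy_0$ on $\Y^{L\ell}$; that requires $b^\ell=V$, which is worth noting.
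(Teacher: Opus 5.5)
Your proof is correct and follows the paper's route. The paper applies subadditivity of entropy, then the per-coordinate identity $\mathcal{H}(y_t^{i,j})=h(\alpha_t)+\alpha_t\mathcal{H}(y_0^{i,j})$, which it cites from prior work and you derive directly via the mask-indicator chain rule, then $\mathcal{H}(y_0^{i,j})\le\log b$. Your equality analysis is more careful than the paper's, which only asserts equality ``when each sub-token is independent and uniformly distributed.'' Your observation is correct: for $\alpha_t>0$, equality forces $\vy_0$ to be jointly uniform on $\Y^{L\ell}$, hence requires $b^\ell=V$. This also shows that reading the stated condition as mere marginal uniformity would not suffice.
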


\begin{figure}[t]
    \centering
    \includegraphics[width=\linewidth]{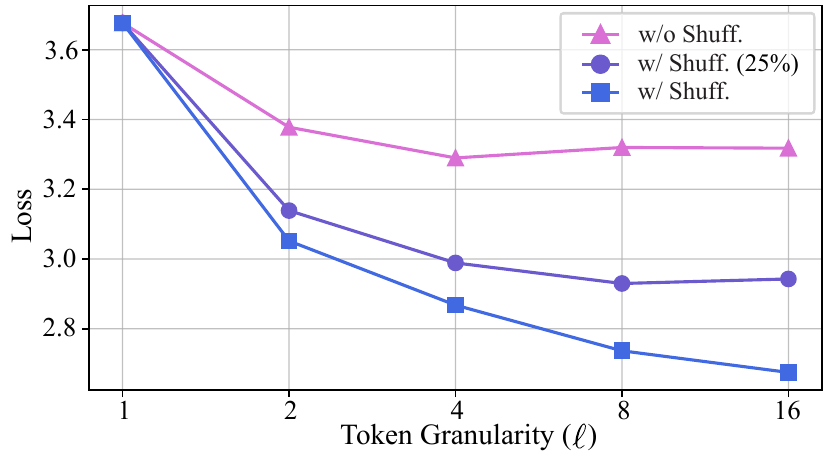}
    \vspace{-1.5em}
    \caption{Ablation study on the loss used in MDM-Prime under three setups: `w/o Shuff.,' `w/ Shuff. (25\%)', and `w/ Shuff.' $\lceil\log_2 V\rceil=16$ represents the maximum of $\ell$. All models are trained using $10^{19}$ FLOPs. The experiments are conducted on C4.}
    \label{fig:apx:target_length_scale}
\end{figure}
Although \textbf{Propositions~\ref{proposition:decomposition}} and \textbf{\ref{proposition:entropy_max}} identify high-entropy sub-tokens as the ideal case for optimality, the sub-tokens produced by directly applying base-$b$ encoding to the token indices from commonly-used BPE tokenizers exhibit low entropy. An example of the GPT-2 tokenizer~\cite{radford2019language} is presented in the `w/o Shuff.' column of Table~\ref{tab:methodology:entropy}. This occurs because BPE is constructed by iteratively merging the most frequent subword pairs, appending each newly merged pair as a new token index until the desired vocabulary size is reached. By construction, more frequent tokens receive lower indices, while less frequent tokens receive higher indices. Therefore, without deliberately shuffling the token indices, token probability is inversely proportional to the token index (see the left subplots with title `w/o Shuff' in Fig.~\ref{fig:tok_prob}~(a) and (b)). Directly encoding these token indices using base-$b$ encoding results in sub-tokens with low entropy, contradicting the maximization goal in Eqs.~(\ref{eq:dependent_independent}) and (\ref{eq:yt_entropy_bound}). To effectively disrupt the inherent token index structure, we propose the following technique:
\begin{table}[t]
    \renewcommand{\arraystretch}{1.05}
    \newcommand{\boldtoprule}{\toprule[2.5pt]}
    \newcommand{\boldbottomrule}{\bottomrule[2.5pt]}
    \centering
    \huge
    \resizebox{\linewidth}{!} {%
    \begin{tabular}{lcccc}
        \boldtoprule
          & w/o Shuff. &  w/ Shuff. (25\%) &  w/ Shuff. & Maximum \\ 
         \midrule
            $\ell = 16$                & 0.8146            & 0.9038         & 0.9936                     & 1.0000              \\
            $\ell = 8$                 & 1.6152            & 1.8040        & 1.9811                     & 2.0000              \\
            $\ell = 4$                 & 3.2051            & 3.5360        & 3.8553                     & 3.9069              \\
            $\ell = 2$                 & 6.1489            & 6.7880       & 7.2803                     & 7.8138 \\
        \boldbottomrule
    \end{tabular}}
    \vspace{-0.25em}
    \caption{Entropy of sub-tokens across different $\ell$. The results are evaluated on C4. The maximum entropy is given by $-\log_2 \frac{1}{b}$, where $b=\lceil\sqrt[\ell]{V}\rceil$ and $V=50,257$. $\lceil\log_2 V\rceil=16$ represents the maximum of $\ell$. Three configurations are compared: (1) `w/o Shuff.,' the baseline setup without index shuffling, (2) `w/ Shuff. (25\%),' the setup where the shuffling operation is applied only to the first 25\% of the token indices, and (3) `w/ Shuff.,' the setup where token indices are fully shuffled.}
    \label{tab:methodology:entropy}
\end{table}

\vspace{0.25em}
\begin{mdframed}[linewidth=0.5pt, innerleftmargin=5pt, innerrightmargin=5pt, innertopmargin=5pt, innerbottommargin=5pt]
\textbf{Technique 1.}~(Index Shuffling) Randomly shuffle the token indices before performing $\f_\text{base-$b$}$.
\end{mdframed}
\vspace{0.05em}
\begin{figure*}[t]
    \centering
    \vspace{-1em}
    \includegraphics[width=\linewidth]{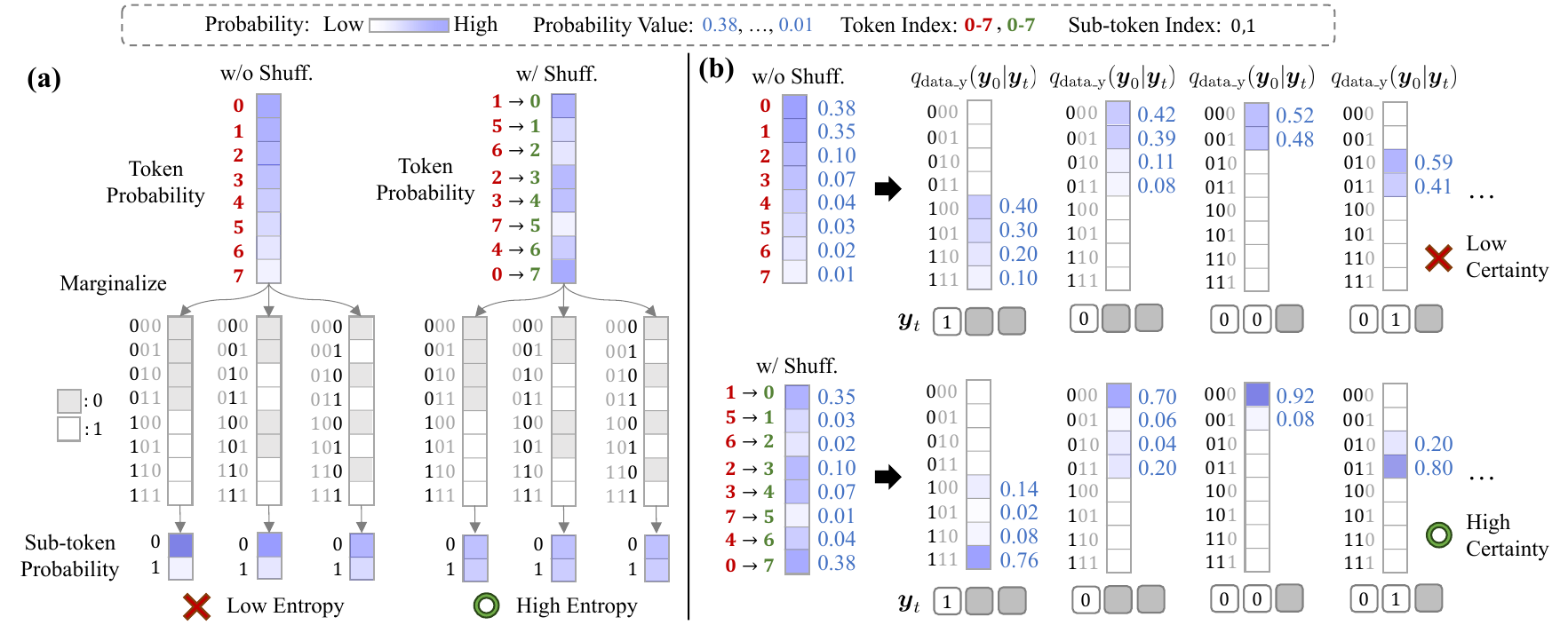}
    \vspace{-1.75em}
    \caption{Illustrative examples of the index shuffling operation proposed in Technique 1. In this plot, $V=8$, $\ell=3$ and $b=2$. Color intensity represents probability values, where darker blue denotes higher probability. (a) Marginal distributions: Marginalization is performed by summing entries corresponding to sub-token indices 0 (gray) and 1 (white), respectively. Sub-tokens encoded from standard token indices (i.e., `w/o Shuff.') exhibit low entropy, while the shuffling operation results in sub-tokens with higher entropy. A detailed numerical example is provided in Fig.~\ref{fig:apx:shuffle_example_numerical} in Appendix. (b) Conditional (predictive) distribution: The distribution $q_\text{data\_y}(\vy_0|\vy_t)$ exhibits higher certainty under index shuffling, which results in improved performance.}
    \label{fig:entropy}
    \vspace{-0.75em}
\end{figure*}

\begin{figure}[t]
    \centering
    \includegraphics[width=\linewidth]{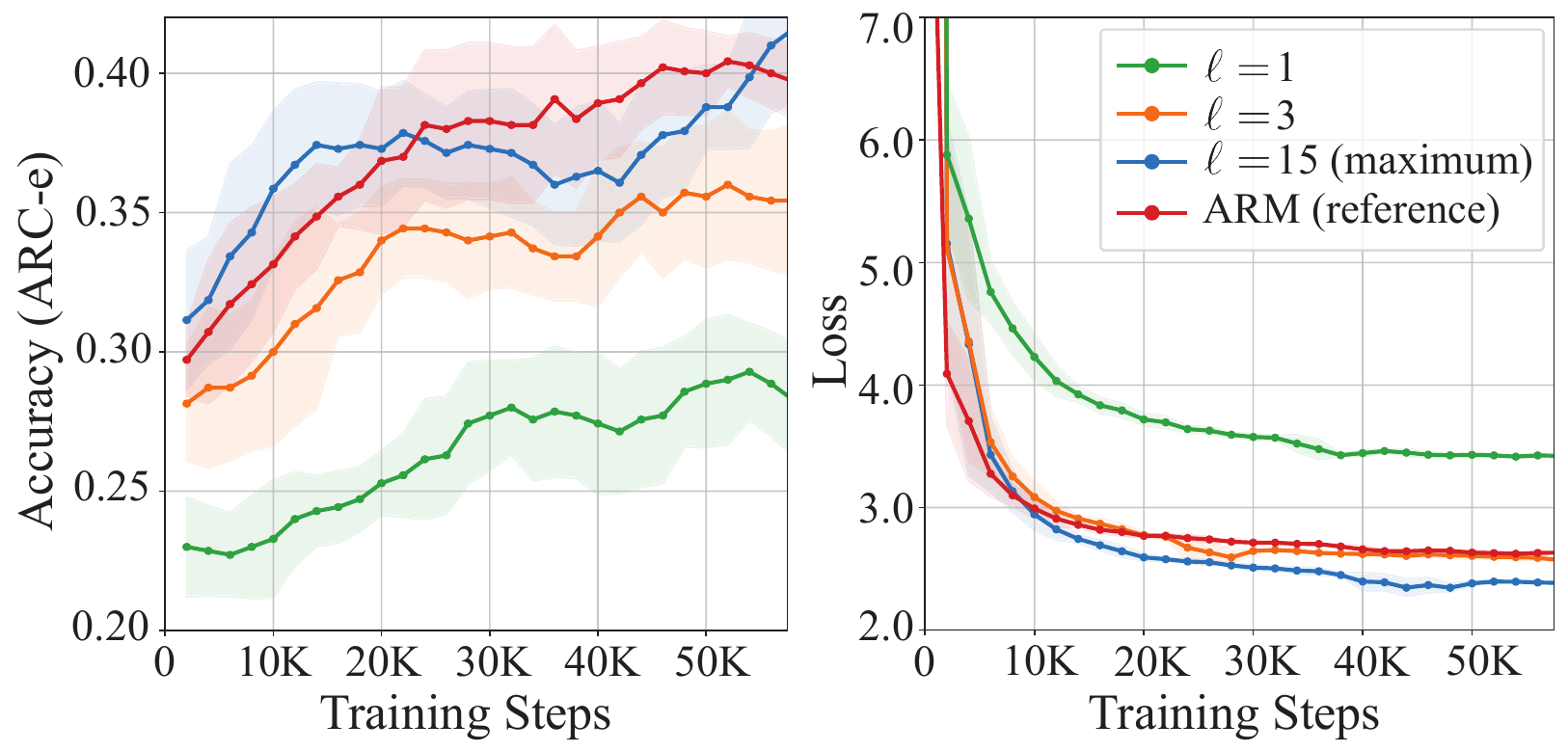}
    \vspace{-1.5em}
    \caption{Zero-shot accuracy on ARC-e of MDM-Prime under different $\ell$ selections. Accuracy and loss exhibit consistent trend. ARM is included as a reference. All models have 170M parameters and are trained on SlimPajama~\cite{cerebras2023slimpajama} tokenized using LLaMA tokenizer with $V=32,000$. The maximum $\ell$ is $\lceil\log_2 V\rceil=15$.}
    \label{fig:acc_loss}
    \vspace{-0.5em}
\end{figure}

An illustrative example is provided in Fig.~\ref{fig:entropy}~(a) to demonstrate how randomly shuffled token indices lead to higher sub-token entropy. By applying Technique 1, the average entropy approaches the theoretical maximum ($-\log_2 \frac{1}{b}$), as demonstrated in the `w/ Shuff.' and `w/ Shuff. (25\%)' columns of Table~\ref{tab:methodology:entropy}. As further illustrated in Fig.~\ref{fig:apx:target_length_scale}, the loss decreases substantially when this shuffling operation is applied, and the improvement holds across all choices of $\ell$ used in $\f_\text{base-$b$}$. Intuitively, the reduction in loss stems from increased certainty in the conditional (predictive) distribution $q_{\text{data\_y}}(\vy_0|\vy_t)$. As depicted in Fig.~\ref{fig:entropy}~(b), the index shuffling operation scatters similar probability masses across different slots. Therefore, when a specific sub-token value is observed, the conditional distribution becomes more certain, leading to a lower loss and improved sub-token prediction.

\subsection{Selection for Token Granularity}
\label{sec:methodology:likelihood}

The value of $\ell$ is critical to the performance of MDM-Prime. However, its influence on the objective and the criteria for a reliable selection have not been well explored. In this section, we propose setting $\ell$ to its maximum viable value, $\ell=\lceil\log_2 V\rceil$. 

To justify this choice, we first establish in \textbf{Proposition~\ref{theorem:mdm_and_mdm_prime_bound}} that the optimal MDM-Prime loss is monotonically non-increasing in $\ell$:
\begin{proposition}
\label{theorem:mdm_and_mdm_prime_bound} 
Let $\ell_1,\ell_2$ be token granularities satisfying $1 < \ell_1 < \ell_2$ and $\frac{\ell_2}{\ell_1}\in\N$. The following inequalities hold:
\begin{equation}
\label{eq:non_increasing_loss}
\begin{aligned}
\inf_{p} \mathcal{L} \geq \inf_{p_{\ell_1}} \mathcal{L}^{(\ell_1)} \overset{}{\geq} \inf_{p_{\ell_2}} \mathcal{L}^{(\ell_2)}.
\end{aligned}
\end{equation}
\end{proposition}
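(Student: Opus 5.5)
The plan is to reduce the statement to a pointwise-in-$t$ comparison of conditional entropies, and then to prove that comparison with a hybrid argument combined with a chain-rule (Han-type) inequality.

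\textbf{Step 1: reduce to a comparison at each $t$.} For fixed $t$, the inner infimum over an unrestricted model $p_\ell(\vy_0|\vy_t)$ of $\E_{q_\alpha(\vy_0,\vy_t)}[-\log p_\ell(\vy_0|\vy_t)]$ is attained at the true posterior $q_\alpha(\vy_0|\vy_t)$. Its value is $\mathcal{H}(\vy_0|\vy_t)=\mathcal{H}(\vy_0)-I(\vy_0;\vy_t)$; this is the identity already underlying Proposition~\ref{proposition:decomposition}. Since $w(t)\ge 0$ and $\mathcal{H}(\vy_0)=\mathcal{H}(\vx_0)$ for every granularity by invertibility, it suffices to show, for every $t$, that
\begin{equation*}
I(\vx_0;\vx_t)\le I(\vy^{(\ell_1)}_0;\vy^{(\ell_1)}_t)\le I(\vy^{(\ell_2)}_0;\vy^{(\ell_2)}_t),
\end{equation*}
with the same $\alpha_t$ at every level. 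The first inequality is the case ``$\ell=1$ versus $\ell_1$''. By the composability $\f_{\ell_2}=\f_{\ell_2/\ell_1}\circ\f_{\ell_1}$, both inequalities are instances of one generic claim. Take a discrete vector whose coarse coordinates are each refined into $k$ fine coordinates by an invertible map. Then independently erasing the fine coordinates with keep-probability $\alpha$ conveys at least as much information as independently erasing the coarse coordinates with keep-probability $\alpha$.

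\textbf{Step 2: rewrite mutual information for erasure channels.} Because masking is independent of the data, knowing $\vy_t$ is the same as knowing the revealed index set $R$ together with $\vy_{0,R}$. Since $R$ is independent of $\vy_0$, we get $I(\vy_0;\vy_t)=\E_R[\mathcal{H}(\vy_{0,R})]$. The claim is therefore $\E_{R_{\mathrm{fine}}}\mathcal{H}(\vy_{0,R_{\mathrm{fine}}})\ge \E_{R_{\mathrm{coarse}}}\mathcal{H}(\vy_{0,R_{\mathrm{coarse}}})$.

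\textbf{Step 3: hybrid argument, one coarse unit at a time.} Order the coarse units and define hybrid channels in which the first $u$ units are erased at the fine level and the rest at the coarse level. It suffices to show the expected revealed entropy does not decrease when unit $u$ is switched. Condition on the revealed set $R'$ of all other units, which is independent of unit $u$'s erasure pattern, and write $W=\vy_{0,R'}$ and $Z=(z_1,\dots,z_k)$ for unit $u$. The chain rule gives $\mathcal{H}(W,Z_S)=\mathcal{H}(W)+\mathcal{H}(Z_S|W)$. Under coarse erasure, $\E\,\mathcal{H}(Z_S|W)=\alpha\,\mathcal{H}(Z|W)$. Under fine erasure, $S$ includes each $j$ independently with probability $\alpha$. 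The chain rule, with conditioning on fewer variables only increasing entropy, gives $\mathcal{H}(Z_S|W)\ge\sum_{j\in S}\mathcal{H}(z_j|z_{<j},W)$. Taking expectations yields $\E_S\mathcal{H}(Z_S|W)\ge\alpha\sum_j\mathcal{H}(z_j|z_{<j},W)=\alpha\,\mathcal{H}(Z|W)$. Summing over $R'$ and chaining the hybrids gives the claim; integrating against $w(t)$ then gives both inequalities of the proposition.

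\textbf{Main obstacle.} The fine channel is \emph{not} a degradation or refinement of the coarse one: at the same $\alpha$ it reveals a whole unit only with probability $\alpha^k$. So no coupling or data-processing argument works directly, and the Han-type averaging in Step 3 is the essential step. The points to check carefully are that the hybrid conditioning is legitimate, which requires the erasure patterns of different units to be independent of each other and of the data, and that the infimum over $p_\ell$ is over unrestricted (non-factorized) posteriors, so that Step 1's identification with $\mathcal{H}(\vy_0|\vy_t)$ holds.
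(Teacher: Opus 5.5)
Your proof is correct, and it takes a genuinely different route from the paper's. The paper never compares the two channels at the same keep probability. \textbf{Lemma~\ref{lemma:expectation_equality}} shows that masking sub-tokens at the faster rate $\tilde\alpha_t=\alpha_t^{1/\ell}$ and then coarsening with $\g_\ell$ (a token counts as revealed iff all its sub-tokens are) reproduces token masking at rate $\alpha_t$. This turns \textbf{Lemma~\ref{lemma:log_likelihood_greater}} into a data-processing statement: since $\vx_t=\g_\ell(\vy_t)$, the optimal sub-token loss at rate $\tilde\alpha_t$ is at most the optimal token loss at rate $\alpha_t$. The schedule-invariance \textbf{Lemma~\ref{lemma:invariant_schedule}} then moves the MDM-Prime objective from $\alpha$ to $\tilde\alpha$, and the $\ell_1\to\ell_2$ case repeats this with ratio $\ell_2/\ell_1$.

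The final comparison in that chain (step (iv) in the proof of \textbf{Proposition~\ref{proposition:x_y_bound}}) does not follow from these lemmas as stated. After reparametrization, the MDM-Prime integrand carries weight $\tilde\alpha'_t/(1-\tilde\alpha_t)$, while the MDM integrand carries $\alpha'_t/(1-\alpha_t)$. Their ratio is $\frac{1}{\ell}\sum_{m=0}^{\ell-1}\tilde\alpha_t^{-m}\ge 1$, and the integrands are conditional entropies of fixed sign, so the pointwise lemma does not transfer to the integrals. In the variable $s=\alpha_t$, the lemmas only show that both optimal losses dominate the common quantity $\int_0^1 \mathcal{H}_{s^{1/\ell}}(\vy_0\mid\vy_t)\,\frac{ds}{1-s}$, where the subscript denotes the sub-token keep probability.

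Your same-$\alpha$ inequality is strictly stronger: it implies the paper's lemma, because conditional entropy is nonincreasing in the keep probability. It also integrates against the common weight $w(t)$ with no reparametrization, so it supplies exactly the comparison the coupling route lacks. Your observation that no degradation argument exists at equal $\alpha$ is the crux. The paper's coupling only exists at the faster rate, and your hybrid over units combined with the Han-type averaging $\E_S\,\mathcal{H}(Z_S\mid W)\ge\alpha\,\mathcal{H}(Z\mid W)$ (subadditivity of entropy along the chain rule) replaces it. What the paper's framework adds in return is an explicit KL-type equality condition, which the statement here does not require.
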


According to \textbf{Proposition~\ref{theorem:mdm_and_mdm_prime_bound}}, increasing $\ell$ leads to lower loss value (Eq.~(\ref{eq:prime_elbo})). To validate this trend on practical applications, we conduct an experiment by varying the token granularity $\ell$ and evaluate downstream accuracy on the ARC-e (easy)~\cite{Clark2018ThinkYH} benchmark (see Fig.~\ref{fig:acc_loss}). We observe that the downstream performance of MDM-based approaches is correlated with loss: lower loss consistently leads to better downstream performance, suggesting that the loss serves as a reliable proxy for downstream task quality. In Section~\ref{sec:experiment:1B_model}, we further show that this phenomenon is applicable when the model scales up. Compared to the scaled up MDM model with $\ell=1$ (SMDM~\cite{nie2025scalingmaskeddiffusionmodels}), MDM-Prime-v2 ($\ell=\lceil\log_2 V\rceil$) exhibits improvement to the average accuracy on eight zero-shot benchmarks. Table~\ref{tab:experiment:zeroshot_qa_170} in Appendix~\ref{sec:apx:experiments:ablation} further provides a detailed ablation study on both Techniques 1 and 2 for downstream tasks. Based on the above findings, we propose the following selection principle for $\ell$:

\vspace{0.2em}
\begin{mdframed}[linewidth=0.5pt, innerleftmargin=5pt, innerrightmargin=5pt, innertopmargin=5pt, innerbottommargin=5pt]
\textbf{Technique 2.} (Binary Encoding) Select $\ell=\lceil\log_2 V\rceil$ such that $\f_\text{base-$b$}$ encodes tokens into binary sub-tokens.
\end{mdframed}

In summary, Techniques 1 and 2 suggest the following subtokenizer: $\f_\ell= \f_\text{base-$b$}\circ \f_\text{shuffle}$, where $\f_{\text{shuffle}}$ maps the original token indices into shuffled ones, while $\f_\text{base-$b$}$ performs binary encoding. The entire operation is implemented using lookup tables (Fig.~\ref{fig:subtokenizer}), requiring zero FLOPs and can be performed during data preprocessing. Further specifications are available in Appendix~\ref{sec:apx:specification:subtokenizer}.

\begin{figure*}[ht!]
    \centering
    \vspace{-0.75em}
    \includegraphics[width=\linewidth]{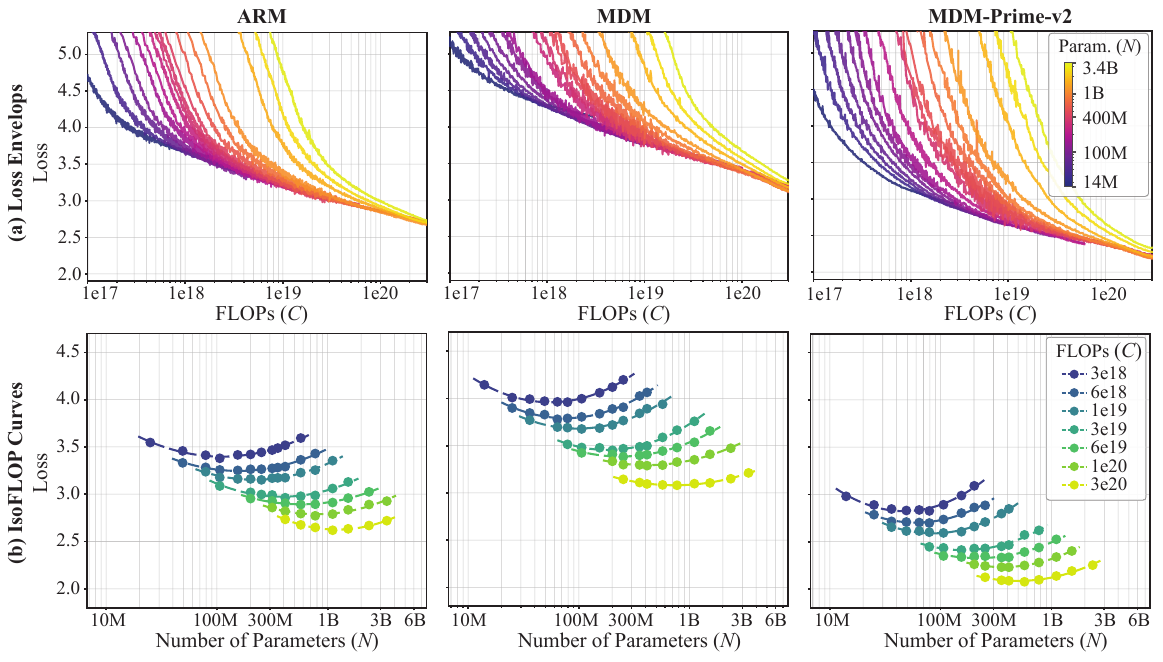}
    \vspace{-1.75em}
    \caption{The (a) loss envelopes and (b) isoFLOP curves of ARM, MDM, and MDM-Prime-v2.  In subplots (a), the number of parameters ($N$) ranges from 14M (purple) to 3.4B (yellow). In subplots (b), the compute budget ($C$) ranges from $3 \times 10^{18}$ FLOPs (blue) to $3 \times 10^{20}$ FLOPs (yellow). Please note that each model family optimizes a different training objective, the absolute loss values are not directly comparable across methods.}
    \label{fig:scaling}
    \vspace{-0.75em}
\end{figure*}

\section{Experiments}
\label{sec:experiment}
This section evaluates MDM-Prime-v2 via scaling analysis (Section~\ref{sec:experiment:scaling}), 1.1B-scale pretraining (Section~\ref{sec:experiment:1B_model}), and analysis on attention patterns and the singular value spectra of its projection weights (Section~\ref{sec:experiment:analyses}). Training details are provided in Appendix~\ref{sec:apx:configuration}. 
In Appendix~\ref{sec:apx:experiments}, we present experimental results on a 7B-parameter MDM-Prime-v2 pre-training run, validate the robustness of $\f_\text{shuffle}$ to random seed initialization, and discuss why subtokenization is ineffective for autoregressive models. A detailed ablation study on Techniques 1 and 2 for downstream tasks is presented in Table~\ref{tab:experiment:zeroshot_qa_170} in Appendix~\ref{sec:apx:experiments:ablation}. This section focuses on scaling behaviors and benchmark results.

\subsection{Loss Behavior and Scaling Properties}
\label{sec:experiment:scaling}
As established in~\cite{kaplan2020scalinglawsneurallanguage, hoffmann2022chinchilla}, the training loss of language models exhibits a strong correlation with the training FLOPs ($C$). This compute budget is primarily determined by two configuration factors: the total number of training tokens ($D$) and the number of non-embedding parameters ($N$). To understand how these factors influence the scaling behavior of MDM-Prime-v2, we analyze its training loss dynamics across various combinations of $N$ and $D$ under fixed compute budgets ranging from $3 \times 10^{18}$ to $3 \times 10^{20}$ FLOPs. For these experiments, we employ a LLaMA-styled architecture~\cite{touvron2023llama} (see Appendix~\ref{sec:apx:configuration:scaling} for specification).  All models are trained on C4~\cite{raffel2023exploringlimitstransferlearning} using the GPT-2 tokenizer with $V=50,257$.
\begin{table}[t]
    \renewcommand{\arraystretch}{1}
    \newcommand{\boldtoprule}{\toprule[1.2pt]}
    \newcommand{\boldmidrule}{\midrule[0.5pt]}
    \newcommand{\boldbottomrule}{\bottomrule[1.2pt]}
    \centering
    \large
    \resizebox{\linewidth}{!} {%
    \begin{tabular}{lcc}
        \boldtoprule
         Coefficient    & $\hat{a}$ ($N_\text{opt}\propto C^{\hat{a}}$) & $\hat{b}$ ($D_\text{opt}\propto C^{\hat{b}}$) \\
        \boldmidrule
        ARM                  & 0.52 & 0.48 \\
        MDM                  & 0.50 & 0.50 \\
        MDM-Prime-v2 ($\ell$=16) & 0.50 & 0.50 \\
        \boldbottomrule
    \end{tabular}}
    \vspace{-0.5em}
    \caption{The coefficients ($\hat{a}$ and $\hat{b}$) derived using the Chinchilla scaling law. A larger $\hat{a}$ indicates that compute resources should prioritize model parameters, while a larger $\hat{b}$ indicates that compute resources should prioritize training tokens. The other coefficients ($\alpha, \beta, A,B$, and $E$) are provided in Table~\ref{table:scaling_parameters} in Appendix.}
    \label{tab:experiment:scaling_parameters}
    \vspace{-0.75em}
\end{table}


\begin{table*}[t]
    \renewcommand{\arraystretch}{1.05}
    \newcommand{\boldtoprule}{\toprule[1.5pt]}
    \newcommand{\boldbottomrule}{\bottomrule[1.5pt]}
    \centering
    \large
    \vspace{-0.75em}
    \resizebox{\linewidth}{!}{%
    \begin{tabular}{lcccccccc|c}
        \boldtoprule
                                           & SciQ & SocialIQA & McTaco & TruthfulQA & BoolQ & ANLI & ARC-e  & OBQA & Avg. \\
        \midrule
        \multicolumn{8}{l}{\textit{Reference Models}}  \\
        \midrule
        GPT-Neo (1.3B)~\cite{gpt-neo} & 77.10 & 41.25 & 42.89  & 23.13  & 61.99 & 33.13 & 50.21 & 33.60 &  45.41\\
        OPT (1.3B)~\cite{zhang2022optopenpretrainedtransformer}     & 76.70 & 40.63 & 37.08  & 23.75  & 57.83 & 33.86 & 50.97 & 33.40 & 44.28 \\
        Pythia (1.4B)~\cite{biderman2023pythia}  & 79.20 & 40.94  & 54.25 & 22.77  & \textbf{63.15} & 33.29 & \textbf{53.91} &  33.20 & 47.59 \\
        Bloom (1.1B)~\cite{workshop2023bloom176bparameteropenaccessmultilingual} & 74.60 & 39.05 & 53.63  & 25.58   & 59.08 & 33.35 & 45.41 & 29.40 & 45.01 \\
         \midrule
        \multicolumn{8}{l}{\textit{Aligned Models}} \\
        \midrule
        SMDM (1.1B)~\cite{nie2025scalingmaskeddiffusionmodels}  & 81.20 & 41.04 & 35.07 & 24.60  & 62.17 & 32.81 & 46.13 & 33.40 & 44.55 \\
        TinyLLaMA (1.1B)~\cite{zhang2024tinyllama} & 80.90 & 39.56 & 40.88 & 20.93  & 59.20 & 33.25 & 50.13 & 33.40  & 44.78 \\
        MDM-Prime-v2 (1.1B) \textbf{(Ours)} & \textbf{83.30} & \textbf{42.02} & \textbf{66.14}  & \textbf{25.83}   & 62.05 & \textbf{34.24} & 47.81 & \textbf{34.00} & \textbf{49.42} \\
        \boldbottomrule
    \end{tabular}}
    \vspace{-0.45em}
    \caption{Zero-shot accuracy on eight commonsense reasoning benchmarks, where higher values indicate better performance. SMDM uses 1,024 Monte Carlo steps to approximate the score of each answer option, whereas MDM-Prime-v2 uses the same number of steps or fewer, as summarized in Table~\ref{tab:apx:step_setup}. Methods under `Aligned Models' share an identical training setup (dataset, compute budget, architecture, and optimization hyperparameters). Methods under `Reference Models' are similarly sized models trained under different setups and are included for reference only.}
    \vspace{-0.45em}
    \label{tab:experiment:zeroshot_qa}
\end{table*}
Figs.~\ref{fig:scaling}~(a) and (b) present the loss envelopes and isoFLOP curves for ARM, MDM, and MDM-Prime-v2. As shown in Fig.~\ref{fig:scaling}~(a), the training loss for all three methods decreases consistently as the total compute budget increases. This confirms that the loss of each method scales effectively with training FLOPs. Fig.~\ref{fig:scaling}~(b) compares the isoFLOP contours across fixed compute budgets. We note that because each model family optimizes a different training objective, the absolute loss values are not directly comparable across methods; we therefore focus on the scaling behavior within each family. We analyze the loss behavior using the Chinchilla law~\cite{hoffmann2022chinchilla}. Using our empirical observations, consisting of (loss, $N$, $D$) triplets, we fit the power-law loss estimator: $\hat{\mathcal{L}}(N,D) = E + \frac{A}{N^\alpha} + \frac{B}{D^\beta}$, where $\alpha,\beta,A,B$, and $E$ are coefficients determined via regression. Under a fixed compute budget $C \approx 6ND$, the optimal allocation of parameters ($N_{\text{opt}}$) and tokens ($D_{\text{opt}}$) is derived as follows:
\vspace{-0.35em}
\begin{equation}
N_{\text{opt}} = G \left( \frac{C}{6} \right)^{\hat{a}}, \quad D_{\text{opt}} = G^{-1} \left( \frac{C}{6} \right)^{\hat{b}},
\end{equation}
where $G = \left( \frac{\alpha A}{\beta B} \right)^{\frac{1}{\alpha+\beta}}$, $\hat{a} = \frac{\beta}{\alpha+\beta}$, and $\hat{b} = \frac{\alpha}{\alpha+\beta}$. As shown in Table~\ref{tab:experiment:scaling_parameters}, ARM exhibits the largest $\hat{a}$ and the smallest $\hat{b}$, indicating that the compute-optimal configuration of ARM prioritizes increasing model capacity ($N$) over data volume ($D$). In contrast, MDM and MDM-Prime-v2 yield the smallest $\hat{a}$ and the largest $\hat{b}$, suggesting that its compute-optimal performance is driven more by increasing training tokens than by expanding model parameters. This property is practically appealing: as current pretraining recipes (e.g., LLaMA~\cite{touvron2023llama}) increasingly focus on scaling smaller models through longer training for efficient inference, a framework that achieves small $\hat{a}$ and large $\hat{b}$ offers a path to improved capabilities. These coefficients determine the compute-optimal frontier for each method and serve as a practical guide for configuring MDM-Prime-v2 under a compute budget. We offer further scaling guidelines in Appendix~\ref{sec:apx:experiments:scaling}.

\begin{table}[t]
    \renewcommand{\arraystretch}{1}
    \newcommand{\boldtoprule}{\toprule[2.3pt]}
    \newcommand{\boldmidrule}{\midrule[0.8pt]}
    \newcommand{\boldbottomrule}{\bottomrule[2.3pt]}
    \centering
    \huge
    \resizebox{\linewidth}{!} {%
    \begin{tabular}{lcccc}
        \boldtoprule
         & \multicolumn{2}{c}{SMDM (1.1B)} & \multicolumn{2}{c}{MDM-Prime-v2 (1.1B)} \\
        Steps & Gen PPL ($\downarrow$) & Entropy ($\uparrow$) & Gen PPL ($\downarrow$) & Entropy ($\uparrow$) \\
        \boldmidrule 
        128  & 129.09 &  4.55  & 122.80 &  4.60 \\
        256  & 118.00 &  4.36  & 109.08 &  4.36  \\
        512  & 86.08  &  4.23  & 79.82  & 4.24  \\
        768  & 81.01  &  3.92  & 66.33 & 3.99  \\
        1024 & 77.24  &  3.77  & 53.06 & 3.79  \\
        \boldbottomrule
    \end{tabular}}
    \vspace{-0.2em}
    \caption{Sample quality comparison between SMDM (1.1B) and MDM-Prime-v2 (1.1B) across varying sampling steps, measured by Gen PPL and entropy. MDM-Prime-v2 consistently achieves lower Gen PPL with comparable or higher entropy, with the quality gap widening at larger step counts.}
    \label{tab:experiment:sample_quality}
\end{table}

\subsection{Improvement at Larger-Scale Pretraining}
\label{sec:experiment:1B_model}
\textbf{Commonsense Reasoning.}~In this experiment, we adopt the training configuration of TinyLLaMA (ARM)~\cite{zhang2024tinyllama} and SMDM (MDM)~\cite{nie2025scalingmaskeddiffusionmodels} to train a 1.1B parameter model on 540B tokens from the Slimpajama dataset~\cite{cerebras2023slimpajama} (totaling $3.3\times 10^{21}$ FLOPs). We compare the models on a wide range of commonsense reasoning tasks, with the descriptions of each task shown in Table~\ref{tab:apx:commonsense} in Appendix. The model architecture and tokenizer are based on LLaMA.

Table~\ref{tab:experiment:zeroshot_qa} compares MDM-Prime-v2 against several pretrained ARM and MDM baselines of similar size: GPT-Neo (1.3B), OPT (1.3B), Pythia (1.4B), Bloom (1.1B), SMDM (1.1B), and TinyLLaMA (1.1B). MDM-Prime-v2 achieves the highest average accuracy across the tasks, outperforming these baselines on six of the eight tasks. In particular, our model demonstrates advantages in temporal reasoning and scientific question answering, delivering noticeable gains on SciQ and McTaco.

\begin{figure*}[t!]
    \centering
    \includegraphics[width=\linewidth]{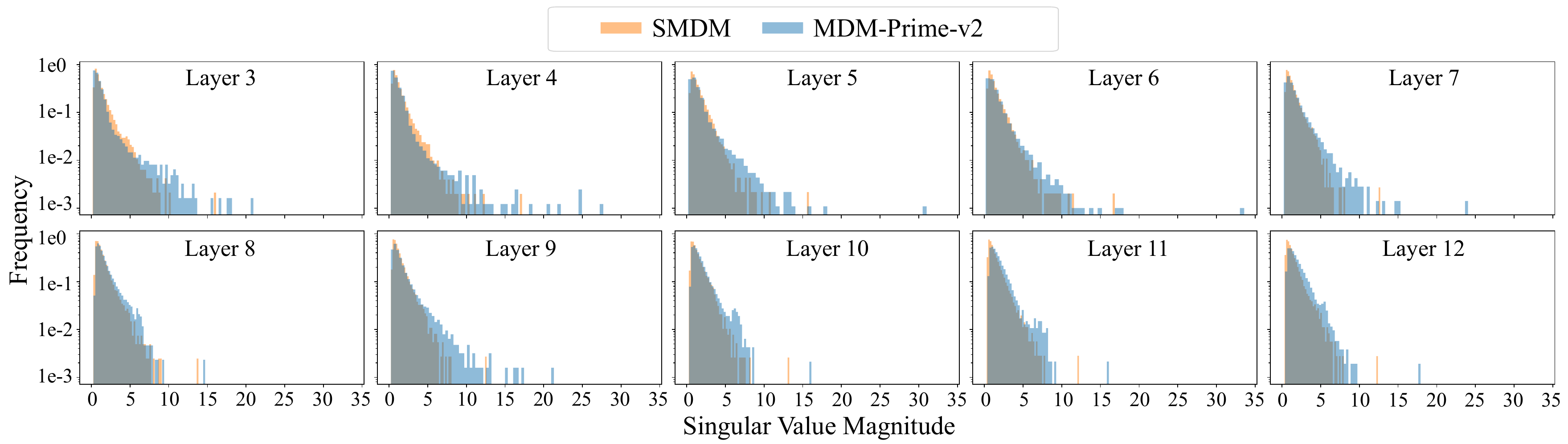}
    \vspace{-1.895em}
    \caption{Singular value distributions of the projection matrices of SMDM and MDM-Prime-v2. Both models adopt the same architecture containing 20 layers. A complete plot of all layers is shown in Fig.~\ref{fig:apx:spectra}.}
    \vspace{-0.5em}
    \label{fig:apx:spectra_part}
\end{figure*}

\begin{figure*}[h!]
    \centering
    \includegraphics[width=\linewidth]{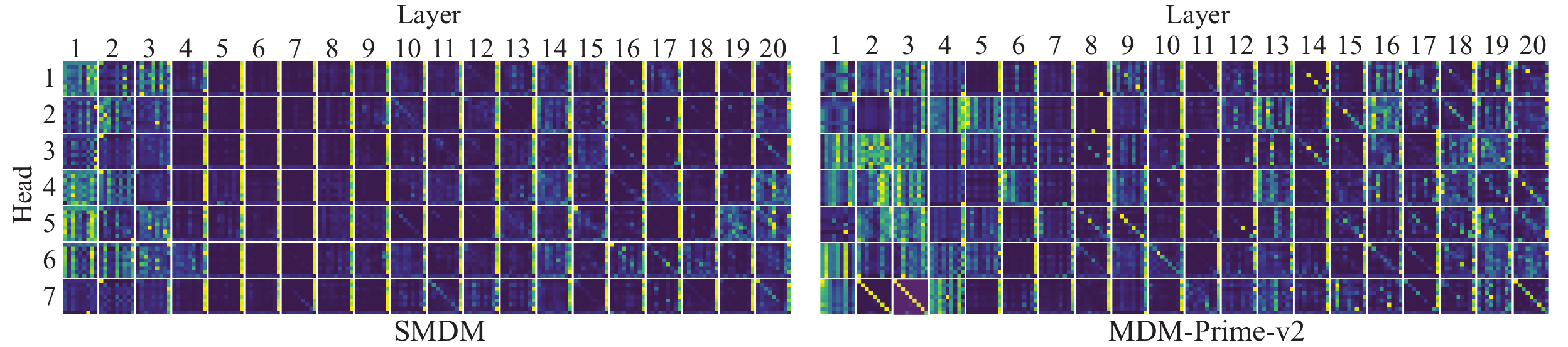}
    \vspace{-1.8em}
    \caption{Attention score patterns of SMDM and MDM-Prime-v2. Both models contain 20 layers and 14 attention heads. A complete plot of all heads is shown in Fig.~\ref{fig:apx:attention}. MDM exhibits vertical stripes across Layers 4–20. In contrast, MDM-Prime-v2 displays diagonal structures and a richer diversity of patterns. }
    \vspace{-1em}
    \label{fig:apx:attention_part}
\end{figure*}

\textbf{Sample Quality.}~We compare the sample quality of SMDM and MDM-Prime-v2 using generative perplexity (Gen PPL) and entropy as metrics to assess the fidelity-diversity trade-off. Gen PPL is evaluated using TinyLLaMA trained on 3T tokens. As shown in Table~\ref{tab:experiment:sample_quality}, MDM-Prime-v2 consistently achieves lower Gen PPL with equal or higher entropy, demonstrating improved sample quality over SMDM. Furthermore, we observe that this improvement becomes more pronounced as the number of sampling steps increases, evidenced by the larger Gen PPL gap at 768 and 1024 steps. 


\subsection{Analyses on Attention Sink Pattern and Singular Value Spectra}
\label{sec:experiment:analyses}


Analyzing attention score patterns~\cite{clark2019doesbertlookat, xiao2024efficientstreaminglanguagemodels} and the singular value spectra of self-attention weights~\cite{martin2019traditionalheavytailedselfregularization, dong2021attentionneedpureattention} serves as a diagnostic tool for understanding model capacity utilization and trainability. In this section, we examine the attention mechanisms and spectral properties of SMDM and MDM-Prime-v2 to elucidate the factors driving the latter's superior performance.

We observe a divergence in the attention score patterns between SMDM and MDM-Prime-v2. Fig.~\ref{fig:apx:attention} provides an illustration of the difference. SMDM exhibits signs of \textit{Attention Sink}~\cite{xiao2024efficientstreaminglanguagemodels}, which is characterized by a single vertical stripe in the attention score matrices. This indicates that a significant proportion of heads attend to specific hidden representations. These representations function as learned `no-op' registers, absorbing excess attention probability when no relevant context is present, which ultimately results in a lack of specialized routing for the attention operation. In contrast, MDM-Prime-v2 displays a richer diversity of attention patterns with more attention score matrices exhibiting sharp diagonal lines. These patterns suggest that MDM-Prime-v2 learns more specialized routing mechanisms, which potentially leads to improved representational capacity.

Spectral analysis of query-key-value projection matrices ($\mathbf{W}_\text{qkv}$) serves as an indicator of the utilization of model capacity~\cite{martin2019traditionalheavytailedselfregularization, dong2021attentionneedpureattention}. The singular value spectra for SMDM and MDM-Prime-v2 are presented in Fig.~\ref{fig:apx:spectra_part}. MDM-Prime-v2 exhibits a heavier-tailed distribution compared to SMDM. This slower spectral decay indicates a reduction in rank collapse. Quantitatively, this is measured by the stable rank ($\|\mathbf{W}_\text{qkv}\|_F^2/\|\mathbf{W}_\text{qkv}\|_2^2$): MDM-Prime-v2 achieves a stable rank of 10.0, surpassing SMDM's 8.3. The result suggests that MDM-Prime-v2 has improved ability to capture diverse features rather than over-focusing on a single dominant feature.








\section{Conclusion}
We present MDM-Prime-v2, an enhanced and scaled-up version of MDM-Prime. Our analysis reveals the sub-optimality of base-$b$ encoding for $\f_\ell$ with BPE tokenizers, and identifies a critical connection between the token granularity $\ell$ and the objective. We resolve these issues by employing an index shuffling operation in the subtokenizer design and selecting $\ell=\lceil\log_2 V\rceil$. We characterize the scaling behavior of MDM-Prime-v2 across varying model and data scales, and demonstrate its practical effectiveness at the 1.1B-parameter scale, where it achieves improvements across eight zero-shot reasoning benchmarks. Our analysis of attention score patterns and singular value distributions further reveals that MDM-Prime-v2 learns more specialized routing mechanisms and exhibits larger stable ranks in its linear weights, indicating improved representational capacity.



\section*{Limitations}
MDM-Prime-v2 adopts a random token index shuffling operation as a simple implementation to address distributional issues caused by the BPE tokenizer. However, \textbf{Proposition~\ref{proposition:entropy_max}} suggests that the optimal design is a function that maximizes the entropy of $\vy_t$. Although our experimental results demonstrate that index shuffling effectively increases entropy, yielding a near-optimal design, we anticipate that a rigorously derived algorithmic approach could further enhance performance. As an ablation, we compare index shuffling against a greedy assignment algorithm that maximizes entropy given corpus token-frequency statistics; results are reported in Appendix~\ref{sec:apx:discussion}. Greedy assignment achieves marginally higher entropy but brings negligible performance improvement. Given that it requires additional corpus statistics whereas index shuffling is fully off-the-shelf, we recommend index shuffling for practical use. Nevertheless, designing entropy-maximizing assignment strategies for subtokenizer represents a promising direction for future work to enhance MDM-Prime-v2, and Appendix~\ref{sec:apx:discussion} discusses two further complementary avenues for extending this work.

\section*{Acknowledgements}
RGK is supported by a Canada CIFAR AI Chair and a Canada Research Chair Tier II in Computational Medicine. This research was supported by an NFRF Special Call NFRFR2022-00526. Resources used in preparing this research were provided, in part, by the Province of Ontario, the Government of Canada through CIFAR, and companies sponsoring the Vector Institute. The authors gratefully acknowledge the support from the National Science and Technology Council (NSTC) in Taiwan under grant numbers NSTC 114-2221-E-002-069-MY3, NSTC 113-2221-E-002-212-MY3, NSTC 115-2634-F-002-012, and NSTC 115-2218-E-A49-010, as well as the support from the Academia Sinica Scholar Award (ASSA) under grant number AS-ASSA-115-02, NTU Artificial Intelligence Center of Research Excellence, and Taiwan Centers of Excellence in Artificial Intelligence. This research was also supported by the NVIDIA Academic Grant Program. The authors would also like to express their appreciation for the hardware grant donation of the GPUs from NVIDIA Corporation and NVIDIA AI Technology Center (NVAITC) used in this work. Furthermore, the authors extend their gratitude to the National Center for High-Performance Computing (NCHC) for providing computational and storage resources. The authors also thank the NVIDIA Taipei-1 supercomputer for providing essential computing resources. Finally, we are grateful to Viet Nguyen, Xiaowen Zhang, and Vahid Belazadeh Meresht for their thoughtful review and valuable feedback on this work.



\bibliography{citation}

\appendix



\setcounter{section}{0}
\setcounter{equation}{0}
\setcounter{figure}{0}
\setcounter{table}{0}

\renewcommand{\thefigure}{A\arabic{figure}}
\renewcommand{\thetable}{A\arabic{table}}
\renewcommand{\theequation}{A\arabic{equation}}

\newpage
\appendix
\onecolumn
\section{Appendix}

This appendix provides additional discussions and experiments. Section~\ref{sec:apx:proof} presents theoretical analyses. Section~\ref{sec:apx:specification} offers technical implementation details of the model architecture and subtokenizer of MDM-Prime-v2. Section~\ref{sec:apx:configuration} elaborates on the experimental configurations. Section~\ref{sec:apx:experiments} reports additional experimental results. Section~\ref{sec:apx:related_works} walks through some related studies. Finally, Section~\ref{sec:apx:discussion} outlines limitations and directions for future work.


\subsection{Theoretical Analyses}
\label{sec:apx:proof}
In this section, we present theoretical derivations. In Section~\ref{sec:apx:proof:mdm_and_mdm_prime_bound}, we present the proofs of \textbf{Proposition~\ref{theorem:mdm_and_mdm_prime_bound}}, while Section~\ref{sec:apx:maximum_ent_subtokenizer} establishes the justifications for \textbf{Propositions~\ref{proposition:decomposition}} and \textbf{\ref{proposition:entropy_max}}. A table of symbols used in this paper is presented in Table~\ref{tab:symbols}.


\begin{table}[h]
    \renewcommand{\arraystretch}{1.075}
    \newcommand{\boldtoprule}{\toprule[1.2pt]}
    \newcommand{\boldbottomrule}{\bottomrule[1.2pt]}
    \centering
    \resizebox{\textwidth}{!}{%
    \begin{tabular}{cl|cl}
        \boldtoprule
         Symbol & Definition & Symbol & Definition \\
        \hline
         $L$ & token sequence length	& $V$ & number of classes a token can take \\
         $\ell$ & sub-token sequence length	& $b$ & number of classes a sub-token can take \\
         $\X$ & a set of tokens: $\{0,\cdots,V-1\}$	& $\tilde{\X}$ & an augmented set of tokens: $\X\cup \{\m\}$ \\
         $\Y$ & a set of sub-tokens: $\{0,\cdots,b-1\}$	& $\tilde{\Y}$ & an augmented set of sub-tokens: $\Y\cup \{\m\}$ \\
         $\vx_0$ & 	a sequence of clean tokens ($\vx_0\in\X^L$)	& $\vx_t$ & a sequence of noised tokens ($\vx_t\in\tilde{\X}^L$) \\
         $\vy_0$ & 	a sequence of clean sub-tokens ($\vy_0\in\Y^{L\times \ell}$)	& $\vy_t$ & a sequence of noised sub-tokens ($\vy_t\in\tilde{\Y}^{L\times \ell}$) \\
         $\delta_{x}(x')$ &  Kronecker delta (equals 1 only if $x=x'$) & $\alpha_t$ & scheduling function ($\alpha_t:[0,1]\to[0,1]$) \\
         $\f_{\ell}(\vx_0)$ &  invertible base-$b$ encoding & $\g_\ell(\vy_t)$ & a function that maps between latent variables \\
         $p(\vx_0|\vx_t)$ &  MDM model (\textit{carry-over} distribution of $\vx_0$) & $p_\ell(\vy_0|\vy_t)$ & MDM-Prime model (\textit{carry-over} distribution of $\vy_0$) \\
         $q_\text{data}(\vx_0)$ & data distribution of $\vx_0$ & $q_\text{data\_y}(\vy_0)$ & data distribution of $\vy_0$ ($q_\text{data\_y}=q_\text{data}\circ\f^{-1}_\ell$) \\
        \boldbottomrule
    \end{tabular}
    }\vspace{-0.5em}
    \caption{A table of symbols.}
    \vspace{-0.5em}
    \label{tab:symbols}
\end{table}

\subsubsection{Proof of Propositions 3.3}
\label{sec:apx:proof:mdm_and_mdm_prime_bound}

We present our main findings (\textbf{Proposition~\ref{theorem:mdm_and_mdm_prime_bound}} along with its equality condition) as a merged \textbf{Theorem~\ref{theorem:apx:overall_bound}}. The proof relies on a series of auxiliary results provided in \textbf{Lemmas \ref{lemma:expectation_equality}-\ref{lemma:invariant_schedule}} and \textbf{Propositions~\ref{proposition:x_y_bound}-\ref{proposition:y_y_bound}}.

\begin{theorem}
\label{theorem:apx:overall_bound}
Let $p$ and $p_\ell$ denote the MDM and MDM-Prime models. Let $\ell_1,\ell_2$ be token granularities satisfying $1 < \ell_1 < \ell_2$ and $\frac{\ell_2}{\ell_1}\in\N$. The following inequalities hold:
\begin{equation}
\label{eq:apx:bound}
\begin{aligned}
\inf_{p} \mathcal{L}  \geq \inf_{p_{\ell_1}} \mathcal{L}^{(\ell_1)}  \geq \inf_{p_{\ell_2}} \mathcal{L}^{(\ell_2)} .
\end{aligned}
\end{equation}
Let $\tilde{\ell}\in\N$ and $\g_{\tilde{\ell}}$ be a vectorized mapping with element-wise operation defined as:
\begin{equation}
\label{eq:apx_coarsening}
    g_{\tilde{\ell}}(\vy^i_t)=
    \begin{cases}
      f^{-1}_{\tilde{\ell}}(\vy^i_t), & \text{if }  y^{i,j}_t\neq \texttt{m}, \forall j\in\{1,...,\tilde{\ell}\},\\
      \texttt{m}, & \text{otherwise},
    \end{cases}
\end{equation}
Given a scheduling function $\alpha_t$ and defining $\tilde{\alpha}_t=\alpha_t^{1/\tilde{\ell}}$, the first inequality in Eq.~(\ref{eq:apx:bound}) becomes an equality if and only if:
\begin{equation}
\begin{aligned}
\label{eq:apx:kl_condition}
\E_{q_{\tilde{\alpha}}(\vy_t)}\!\Big[\D_\mathrm{KL}\!\big(q_{\rm{data\_y}}(\vy_0 \mid \vy_t)\, \| \, q_{\rm{data\_y}}(\vy_0\mid \g_{\tilde{\ell}}(\vy_t))\big)\Big] =0,
\end{aligned}
\end{equation}
where $\vy_0\in \Y^{L\times\ell_1}$, $\vy_t\in \tilde{\Y}^{L\times\ell_1}$, $\tilde{\ell}=\ell_1$, and the second inequality becomes an equality if and only if Eq.~(\ref{eq:apx:kl_condition}) holds with $\tilde{\ell}=\frac{\ell_2}{\ell_1}$, $\vy_0\in \Y^{L\times\ell_2}$, and $\vy_t\in \tilde{\Y}^{L\times\ell_2}$. 
\end{theorem}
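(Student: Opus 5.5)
The plan is to write each optimal loss as an integral of conditional entropies, and then to show that the coarser model sees a deterministic coarsening of the finer model's latent. The conditional-entropy inequality together with its equality case then gives both claims.

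\textbf{Step 1: reduce to conditional entropies.} For fixed $t$, the infimum over $p_\ell$ of $\E[-\log p_\ell(\vy_0\mid\vy_t)]$ is attained at $p_\ell=q_{\rm data\_y}(\vy_0\mid\vy_t)$. The infimum therefore equals $\mathcal{H}(\vy_0\mid\vy_t)$, and the gap of any other model is the expected KL divergence. Hence $\inf_{p_\ell}\mathcal{L}^{(\ell)}=\int_0^1 w(t)\,\mathcal{H}(\vy_0\mid\vy_t)\,dt$, and the same holds for MDM with $\mathcal{H}(\vx_0\mid\vx_t)$. Since $\f_\ell$ is invertible, $\mathcal{H}(\vx_0\mid\cdot)=\mathcal{H}(\vy_0\mid\cdot)$, so every term can be expressed in terms of $\vy_0$.

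\textbf{Step 2: coarsening map and schedule matching.} This is the key step, and I expect it to be the main obstacle. I would first show that if $\vy_t\sim q_{\tilde\alpha}(\cdot\mid\vy_0)$ with $\tilde\alpha_t=\alpha_t^{1/\tilde\ell}$ at the fine level, then $\g_{\tilde\ell}(\vy_t)$ is distributed exactly as the coarse latent under $q_\alpha$. The reason is that a coarse token is unmasked iff all $\tilde\ell$ of its sub-tokens are unmasked. Independence across sub-tokens gives probability $\tilde\alpha_t^{\tilde\ell}=\alpha_t$ for this event. When it occurs, $f^{-1}_{\tilde\ell}$ recovers the true coarse token, and independence across coarse positions is preserved.

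The difficulty is that the fine model is trained under schedule $\alpha$, not $\tilde\alpha$. I would handle this with an invariance lemma: the NELBO with weight $w(t)=\alpha'_t/(\alpha_t-1)$ is invariant to the choice of schedule. This follows from a change of variables $u=\alpha_t$. The weighted integral becomes $\int_0^1 \frac{1}{1-u}\,\mathcal{H}(\vy_0\mid\vy_{(u)})\,du$, where $\vy_{(u)}$ denotes the latent at mask level $u$. Reparametrising the fine integral by $\tilde u=u^{1/\tilde\ell}$ should produce weights matching the coarse integral, although the factor $\frac{d\tilde u}{1-\tilde u}$ versus $\frac{du}{1-u}$ must be checked carefully. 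If the weights do not match exactly, I would fall back on comparing the infima via the variational-bound interpretation. Both objectives are upper bounds on the same negative log-likelihood, obtained from continuous-time limits of the same form. In that case I would argue directly with the per-$t$ comparison under the matched schedule $\tilde\alpha$, which is the setting in which the theorem's condition is stated.

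\textbf{Step 3: data processing and the equality condition.} With matched schedules, $\g_{\tilde\ell}(\vy_t)$ is a function of $\vy_t$. Hence $\mathcal{H}(\vy_0\mid\vy_t)\le\mathcal{H}(\vy_0\mid\g_{\tilde\ell}(\vy_t))$, and the difference equals
\[
\E_{q_{\tilde\alpha}(\vy_t)}\Big[\D_{\mathrm{KL}}\big(q_{\rm data\_y}(\vy_0\mid\vy_t)\,\|\,q_{\rm data\_y}(\vy_0\mid\g_{\tilde\ell}(\vy_t))\big)\Big].
\]
This identity holds because $\vy_0\to\vy_t\to\g_{\tilde\ell}(\vy_t)$ forms a Markov chain.

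Integrating the identity against the nonnegative weight gives each inequality. Taking $\tilde\ell=\ell_1$ relates MDM ($\ell=1$) to $\ell_1$. Taking $\tilde\ell=\ell_2/\ell_1$ uses the composition $\f_{\ell_2}=\f_{\ell_2/\ell_1}\circ\f_{\ell_1}$ to treat each $\ell_1$-sub-token as a coarse token.

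Equality holds iff the weighted integral of the nonnegative KL term vanishes. Since $w>0$ and the KL term is continuous in $t$, this forces the KL term to vanish for all $t$, which is condition~(\ref{eq:apx:kl_condition}). To go from "for all $t$" to a single expectation, I would note that for $t\in(0,1)$ every masking pattern has positive probability. The vanishing of the expected KL at one interior $t$ is therefore equivalent to its vanishing at all interior $t$.
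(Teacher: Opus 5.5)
Your worry in Step 2 is justified. It is a genuine gap: the weights do not match, no reparametrisation makes them match, and neither fallback repairs it. Take the first inequality; the second is identical. After substituting $v=\alpha_t$, both optima are schedule-free:
\[
\inf_p\mathcal{L}=\int_0^1\frac{\mathcal{H}_{\rm c}(v)}{1-v}\,dv,\qquad \inf_{p_{\tilde\ell}}\mathcal{L}^{(\tilde\ell)}=\int_0^1\frac{\mathcal{H}_{\rm f}(u)}{1-u}\,du .
\]
Here $\mathcal{H}_{\rm c}(v)$ and $\mathcal{H}_{\rm f}(u)$ are the coarse and fine conditional entropies when each token, resp.\ each sub-token, is revealed with probability $v$, resp.\ $u$. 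Your data-processing step gives $\mathcal{H}_{\rm f}(u)\le\mathcal{H}_{\rm c}(u^{\tilde\ell})$, and hence only
\[
\inf_{p_{\tilde\ell}}\mathcal{L}^{(\tilde\ell)}\le\int_0^1\frac{\mathcal{H}_{\rm c}(u^{\tilde\ell})}{1-u}\,du=\int_0^1\mathcal{H}_{\rm c}(v)\,\frac{v^{1/\tilde\ell-1}}{\tilde\ell\,(1-v^{1/\tilde\ell})}\,dv .
\]
With $a=v^{1/\tilde\ell}$, this weight equals $\frac{1}{1-v}\cdot\frac{1}{\tilde\ell}\sum_{k=0}^{\tilde\ell-1}a^{k+1-\tilde\ell}\ge\frac{1}{1-v}$. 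So the right-hand side is at least $\inf_p\mathcal{L}$, and the comparison says nothing. In particular, the identity Step 3 relies on (difference of optima $=\int w(t)\,\E[\D_{\mathrm{KL}}]\,dt$) is false, and so is the equality argument built on it.

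The variational-bound fallback fails too. With unrestricted $p,p_\ell$ these infima are not NLL upper bounds: for $L=1$, $\ell=2$ the MDM-Prime optimum is $\mathcal{H}(\vx_0)-\tfrac12 I(y^{1,1}_0;y^{1,2}_0)$. And two upper bounds on one quantity would not be ordered anyway.

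The paper's proof follows your route (Lemmas~\ref{lemma:expectation_equality}--\ref{lemma:invariant_schedule}) and does not resolve this. Step (iv) in the proof of Proposition~\ref{proposition:x_y_bound} applies Lemma~\ref{lemma:log_likelihood_greater} pointwise while silently replacing $\tilde\alpha'_t/(1-\tilde\alpha_t)$ by $\alpha'_t/(1-\alpha_t)$. Since $|\tilde\alpha'_t/(1-\tilde\alpha_t)|\ge|\alpha'_t/(1-\alpha_t)|$, that step does not follow.

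The inequalities themselves are true, but you must compare at the same rate $v=\alpha_t$ in both models. There the coarse latent is not a function of the fine one, so the tool is an entropy inequality rather than data processing. Write $\vy_0[Q]$ for the sub-tokens at a position set $Q$. Then
\[
\mathcal{H}(\vy_0\mid\vy_t)=\mathcal{H}(\vy_0)-\E_T\,\mathcal{H}(\vy_0[T]),\qquad \mathcal{H}(\vx_0\mid\vx_t)=\mathcal{H}(\vy_0)-\E_S\,\mathcal{H}(\vy_0[B(S)]),
\]
where $T$ keeps each sub-token and $B(S)$ keeps each whole token (block), independently with probability $v$. Switch the blocks one at a time from all-or-nothing to independent revealing. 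Fix a block with positions $p_1,\dots,p_k$, a context $Q$ outside it, and let $R$ be a $v$-random subset of the block. The chain rule and ``conditioning reduces entropy'' give
\[
\E_R\,\mathcal{H}(\vy_0[R]\mid\vy_0[Q])\ge\sum_{r=1}^{k}v\,\mathcal{H}(\vy_0[p_r]\mid\vy_0[Q\cup\{p_1,\dots,p_{r-1}\}])=v\,\mathcal{H}(\vy_0[\{p_1,\dots,p_k\}]\mid\vy_0[Q]),
\]
which is exactly the all-or-nothing value. Hence $\mathcal{H}(\vy_0\mid\vy_t)\le\mathcal{H}(\vx_0\mid\vx_t)$ for every $t$. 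Integrating against $w(t)>0$ yields both inequalities, the second via $\f_{\ell_2}=\f_{\ell_2/\ell_1}\circ\f_{\ell_1}$.

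The stated equality condition cannot be proved, because it is false. With positive probability only $y^{i,j}_t$ is unmasked; then $\g_{\tilde\ell}(\vy_t)$ is fully masked and the KL equals $-\log\Pr(y^{i,j}_0=y^{i,j}_t)$. So the condition forces every sub-token to be deterministic. Yet take $L=2$, $V=b^{\tilde\ell}$ and $x^1_0=x^2_0$ uniform: then $\mathcal{H}(\vx_0\mid\vx_t)=\mathcal{H}(\vy_0\mid\vy_t)=\tilde\ell(1-\alpha_t)^2\log b$ for all $t$, so the optima coincide. The correct condition is tightness of the last display: the sub-tokens of each token are mutually independent conditionally on every revealed set of sub-tokens from the other tokens.
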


\begin{proof}
The first inequality $\inf_{p} \mathcal{L}  \geq \inf_{p_{\ell_1}} \mathcal{L}^{(\ell_1)} $ and the corresponding equality condition (Eq.~(\ref{eq:apx:kl_condition})) are established in \textbf{Proposition~\ref{proposition:x_y_bound}} while the second inequality $\inf_{p_{\ell_1}}\mathcal{L}^{(\ell_1)}  \geq \inf_{p_{\ell_2}} \mathcal{L}^{(\ell_2)} $ and the corresponding equality condition are established in \textbf{Proposition~\ref{proposition:y_y_bound}}. Combining these two results yields the chain of inequalities presented in Eq.~(\ref{eq:apx:bound}).
\end{proof}

The proof of \textbf{Proposition~\ref{proposition:x_y_bound}} is organized into three primary steps, as illustrated in Fig.~\ref{fig:apx:derivation_steps}:
\begin{itemize}
\vspace{-0.5em}
    \item \textbf{Step 1}: We identify a scheduling function $\tilde{\alpha}_t$ and a mapping $\g_\ell$ such that the expectations satisfy $\E_{q_\alpha(\vx_0, \vx_t)}[F(\vx_0, \vx_t)] = \E_{q_{\tilde{\alpha}}(\vy_0, \vy_t)}[F(\f^{-1}_\ell(\vy_0), \g_\ell(\vy_t))]$ for any arbitrary function $F: \mathcal{X}^{L} \times \tilde{\mathcal{X}}^{L} \to \mathbb{R}$. This equivalence is formally established in \textbf{Lemma~\ref{lemma:expectation_equality}}. \vspace{-0.25em}
    \item \textbf{Step 2}: We demonstrate that the logarithmic probability of MDM-Prime is point-wisely greater than or equal to that of MDM for every timestep $t \in [0,1]$. This comparison is detailed in \textbf{Lemma~\ref{lemma:log_likelihood_greater}}.  \vspace{-0.25em}
    \item \textbf{Step 3}: Finally, we prove that the training objective remain invariant under different choices of the scheduling function, as presented in \textbf{Lemma~\ref{lemma:invariant_schedule}}.
    \vspace{-0.5em}
\end{itemize}
By synthesizing these three results, we arrive at the inequality established in the proposition.

\begin{figure}[t]
    \centering
    \includegraphics[width=0.925\linewidth]{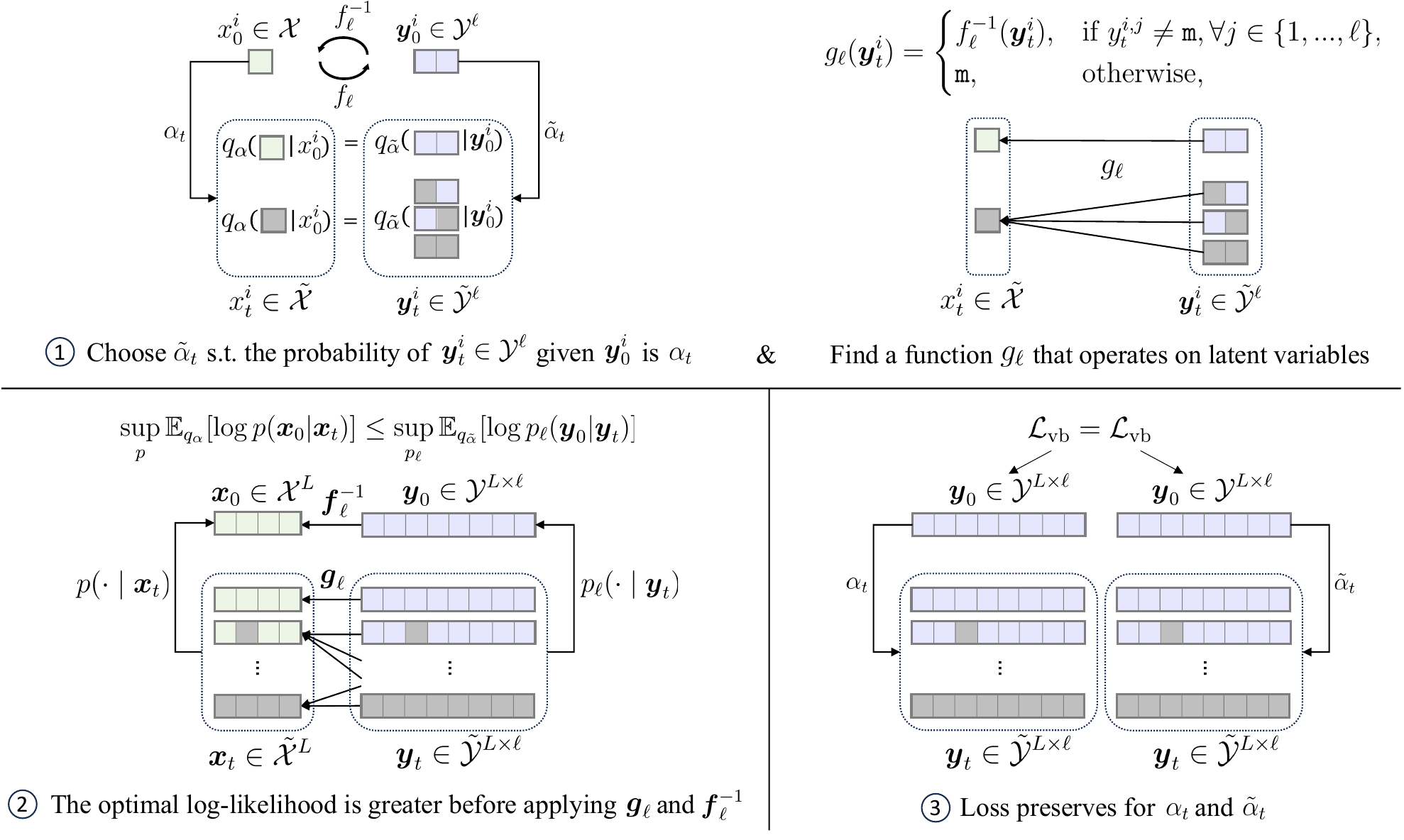}
    \vspace{-0.25em}
    \caption{An illustration of the three key steps for proving \textbf{Proposition}~\ref{proposition:x_y_bound}. Gray tiles represent masked tokens or sub-tokens. Green and purple tiles represent unmasked tokens and sub-tokens, respectively.}
    \label{fig:apx:derivation_steps}
    \vspace{-0.5em}
\end{figure}




\begin{lemma}
\label{lemma:expectation_equality}
Let $\vx_t\sim q_\alpha(\cdot \mid \vx_0)$ and $\vy_t\sim q_{\tilde{\alpha}}(\cdot \mid \vy_0)$, where $\tilde{\alpha}_t\triangleq\alpha_t^{1/\ell}$ and $\vy_0=\f_\ell(\vx_0)$. In addition, let $\g_\ell:\tilde{\Y}^{L\times \ell}\to \tilde{\X}^{L}$ be a function defined in Eq.~(\ref{eq:apx_coarsening}). Given any mapping function $F:\X^{L}\times \tilde{\X}^{L}\to \R$, the following equality holds:
\begin{equation}
\label{eq:apx:exp_equal}
    \E_{q_\alpha(\vx_0, \vx_t)}[F(\vx_0, \vx_t)] = \E_{q_{\tilde{\alpha}}(\vy_0, \vy_t)}[F(\f^{-1}_\ell(\vy_0), \g_\ell(\vy_t))].
\end{equation}
\end{lemma}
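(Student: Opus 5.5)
The plan is to show that the pushforward of the joint law $q_{\tilde{\alpha}}(\vy_0,\vy_t)$ under the map $(\vy_0,\vy_t)\mapsto(\f^{-1}_\ell(\vy_0),\g_\ell(\vy_t))$ coincides with $q_\alpha(\vx_0,\vx_t)$. Once this distributional identity is in hand, Eq.~(\ref{eq:apx:exp_equal}) follows for every $F$ by the change-of-variables formula for discrete expectations. Both sides are finite sums over a finite set, so no integrability issues arise.

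First, I reduce to the conditional given the clean data. Since $\f_\ell$ is invertible, $q_\text{data\_y}(\vy_0)=q_\text{data}(\f^{-1}_\ell(\vy_0))$. Writing $\vx_0=\f^{-1}_\ell(\vy_0)$, the marginal of the first coordinate already matches $q_\text{data}$. It therefore suffices to show, for each fixed $\vx_0$ with $\vy_0=\f_\ell(\vx_0)$, that the law of $\g_\ell(\vy_t)$ under $q_{\tilde\alpha}(\vy_t\mid\vy_0)$ equals $q_\alpha(\vx_t\mid\vx_0)$.

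Second, I factorize over positions. Both $q_{\tilde\alpha}(\vy_t\mid\vy_0)=\prod_{i}\prod_j q_{\tilde\alpha}(y^{i,j}_t\mid y^{i,j}_0)$ and $\g_\ell$ act independently on each block $\vy^i_t$. Hence the pushforward is a product over $i$, and the claim reduces to a single token: $g_\ell(\vy^i_t)$ has law $(1-\alpha_t)\delta_\m+\alpha_t\delta_{x^i_0}$.

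Third, I compute this single-token law. By Eq.~(\ref{eq:apx_coarsening}), $g_\ell(\vy^i_t)\neq\m$ exactly when all $\ell$ sub-tokens are unmasked. In that event each $y^{i,j}_t=y^{i,j}_0$, so $g_\ell(\vy^i_t)=f^{-1}_\ell(\vy^i_0)=x^i_0$. By independence, this event has probability $\tilde\alpha_t^{\ell}=\alpha_t$. The complementary event has probability $1-\alpha_t$ and maps to $\m$. This reproduces Eq.~(\ref{eq:kernel}).

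The step most in need of care is the third. There I must check that $g_\ell$ is well defined on all of $\tilde\Y^\ell$: partially masked blocks all collapse to $\m$, and fully unmasked blocks can only equal $\vy^i_0$, so they lie in the image of $f_\ell$. I also need to confirm that $\tilde\alpha_t=\alpha_t^{1/\ell}$ is a valid strictly decreasing schedule with values in $[0,1]$. Everything else is bookkeeping. I would then conclude by writing
\begin{equation*}
\E_{q_{\tilde\alpha}(\vy_0,\vy_t)}[F(\f^{-1}_\ell(\vy_0),\g_\ell(\vy_t))]=\sum_{\vx_0,\vx_t}F(\vx_0,\vx_t)\,q_\text{data}(\vx_0)\,q_\alpha(\vx_t\mid\vx_0),
\end{equation*}
which is the left-hand side of Eq.~(\ref{eq:apx:exp_equal}).
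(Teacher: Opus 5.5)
Your proposal is correct and follows essentially the same route as the paper. The paper also expands $q_{\tilde\alpha}(\vy_t\mid\vy_0)$ position by position, groups each block $\vy^i_t$ by its image under $g_\ell$ (at least one $\m$, with mass $1-\tilde\alpha_t^{\ell}=1-\alpha_t$, versus the fully unmasked $f_\ell(x^i_0)$, with mass $\tilde\alpha_t^{\ell}=\alpha_t$), and then resums over $\vx_t$; your pushforward phrasing is a cleaner packaging of that same computation. One small wording point: $g_\ell$ is not literally defined on all of $\tilde{\Y}^{\ell}$ when $b^{\ell}>V$, since some fully unmasked codes lie outside the image of $f_\ell$, but as your argument effectively shows, it only needs to be defined on the support of $q_{\tilde\alpha}(\cdot\mid\vy_0)$, which it is.
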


\begin{proof}
\begin{equation*}
\begin{aligned}
&\E_{q_{\tilde{\alpha}}(\vy_0, \vy_t)}[F(\f^{-1}_\ell(\vy_0), \g_\ell(\vy_t))]\\
&\stackrel{(i)}{=}\E_{q_\text{data}\circ \f^{-1}_\ell(\vy_0)} \left[ \sum_{\vy_t}  F(\f^{-1}_\ell(\vy_0), \g_\ell(\vy_t)) q_{\tilde{\alpha}}(\vy_t \mid\vy_0) \right ]\\
&=\E_{q_\text{data}\circ \f^{-1}_\ell(\vy_0)} \left[ \sum_{\vy_t}  F(\f^{-1}_\ell(\vy_0), \g_\ell(\vy_t))\prod_{i=1}^{L}\prod_{j=1}^{\ell} q_{\tilde{\alpha}}(y^{i,j}_t\mid y_0^{i,j}) \right ]\\
&\stackrel{(ii)}{=}  \E_{q_\text{data}\circ \f^{-1}_\ell(\vy_0)} \left[ \sum_{\vy_t} F(\f^{-1}_\ell(\vy_0), \g_\ell(\vy_t)) \prod_{i=1}^{L}\prod_{j=1}^{\ell} (1-\tilde{\alpha}_t)\delta_{\texttt{m}}(y^{i,j}_t)+\tilde{\alpha}_t \delta_{y_0^i}(y^{i,j}_t)\right ] \quad\quad\quad\quad\quad\quad\quad\quad\quad\quad\quad\quad\quad\quad\quad
\end{aligned}
\end{equation*}
\begin{equation*}
\begin{aligned}
&= \E_{q_\text{data}\circ \f^{-1}_\ell(\vy_0)} \left[ \sum_{\vy_t} F(\f^{-1}_\ell(\vy_0), \g_\ell(\vy_t))  \prod_{i=1}^{L}\prod_{j=1}^{\ell} (1-\alpha^{1/\ell}_t)\delta_{\texttt{m}}(y^{i,j}_t)+\alpha^{1/\ell}_t \delta_{y_0^{i,j}}(y^{i,j}_t)\right ] \\
&=\E_{q_\text{data}\circ \f^{-1}_\ell(\vy_0)} \Bigg[ \sum_{\vy_t} F(\f^{-1}_\ell(\vy_0), \g_\ell(\vy_t)) \prod_{i=1}^{L} (1-(\alpha^{1/\ell}_t))^{\ell}\delta_{\texttt{m}}(y^{i,1}_t)\cdots\delta_{\texttt{m}}(y^{i,\ell}_t) +
\cdots \\
&\quad\quad\quad\quad\quad\quad\quad\quad\quad\quad\quad\quad\quad\quad\quad\quad\quad\quad\quad\quad\quad\quad\quad\quad\quad\quad\quad + (\alpha^{1/\ell}_t)^{\ell} \delta_{y_0^{i,1}}(y^{i,1}_t)\cdots\delta_{y_0^{i,\ell}}(y^{i,\ell}_t) \Bigg ] \\
&=\E_{q_\text{data}\circ \f^{-1}_\ell(\vy_0)} \Bigg[ \sum_{\vy_t} F(\f^{-1}_\ell(\vy_0), \g_\ell(\vy_t)) \prod_{i=1}^{L} (1-(\alpha^{1/\ell}_t))^{\ell}\delta_{\texttt{m},\cdots,\texttt{m}}(y^{i,1}_t,\cdots,y^{i,\ell}_t) +\cdots \\
&\quad\quad\quad\quad\quad\quad\quad\quad\quad\quad\quad\quad\quad\quad\quad\quad\quad\quad\quad\quad\quad\quad\quad\quad\quad\quad\quad + (\alpha^{1/\ell}_t)^{\ell} \delta_{y^{i,1}_t,\cdots,y_0^{i,\ell}}(y^{i,1}_0,\cdots,y^{i,\ell}_t) \Bigg ] \\
&=\E_{q_\text{data}\circ \f^{-1}_\ell(\vy_0)} \left[ \sum_{\vy_t} F(\f^{-1}_\ell(\vy_0), \g_\ell(\vy_t)) \prod_{i=1}^{L} (1-(\alpha^{1/\ell}_t))^{\ell}\delta_{\texttt{m},\cdots,\texttt{m}}(\vy^{i}_t) +\cdots + (\alpha^{1/\ell}_t)^{\ell} \delta_{\vy^{i}_0}(\vy^{i}_t) \right ] \\
&=\E_{q_\text{data}(\vx_0)} \left[ \sum_{\vy_t} F(\vx_0, \g_\ell(\vy_t)) \prod_{i=1}^{L} \underbrace{(1-(\alpha^{1/\ell}_t))^{\ell}\delta_{\texttt{m},\cdots,\texttt{m}}(\vy^{i}_t) +\cdots}_{\text{Contains at least one \texttt{m}}} + \underbrace{(\alpha^{1/\ell}_t)^{\ell} \delta_{f_\ell(x^{i}_0)}(\vy^{i}_t) }_{\text{Contains no \texttt{m}}}\right ] \\
&\stackrel{(iii)}{=} \E_{q_\text{data}(\vx_0)} \left[ \sum_{\vy_t} F(\vx_0, \g_\ell(\vy_t)) \prod_{i=1}^{L} (1-(\alpha^{1/\ell}_t)^{\ell})\delta_{\texttt{m}}(g_\ell(\vy_t^i)) + (\alpha^{1/\ell}_t)^{\ell} \delta_{x_0^{i}}(g_\ell(\vy^{i}_t))\right ] \\
&= \E_{q_\text{data}(\vx_0)} \left[ \sum_{\vy_t} F(\vx_0, \g_\ell(\vy_t)) \prod_{i=1}^{L} (1-\alpha_t)\delta_{\texttt{m}}(g_\ell(\vy_t^i)) + \alpha_t \delta_{x_0^{i}}(g_\ell(\vy^{i}_t))\right ] \\
&= \E_{q_\text{data}(\vx_0)} \left[ \sum_{\vy_t} F(\vx_0, \g_\ell(\vy_t)) q_\alpha(\g_\ell(\vy_t)\mid \vx_0) \right ] \\
&\stackrel{(iv)}{=} \E_{q_\text{data}(\vx_0)} \left[ \sum_{\vx_t} F(\vx_0, \vx_t) q_\alpha(\vx_t\mid \vx_0) \right ] \\
&= \E_{q_\alpha(\vx_0, \vx_t)}[F(\vx_0, \vx_t)]. \\
\end{aligned}
\end{equation*}
In  step $(i)$, $q_{\tilde{\alpha}}(\vy_0, \vy_t)=q_{\tilde{\alpha}}(\vy_t|\vy_0)q_\text{data\_y}(\vy_0)=q_{\tilde{\alpha}}(\vy_t|\vy_0)q_\text{data}\circ\f^{-1}_\ell(\vy_0)$ by definition. In step $(ii)$, $q_{\tilde{\alpha}}(\vy_t|\vy_0)$ is expanded according to Eq.~(\ref{eq:prime_kernel}). 
In step $(iii)$, since $F$ depends on $\vy_t$ only through $g_\ell(\vy_t)$, we group terms by their image under $g_\ell$. The domain of $\vy^i_t$ partitions into vectors containing at least one $\texttt{m}$ and the fully-unmasked value $f_\ell(x_0^i)$. Their respective probability masses are $1 - (\tilde{\alpha}_t)^\ell = 1 - \alpha_t$ and $(\tilde{\alpha}_t)^\ell = \alpha_t$, yielding the two-term expression in terms of $g_\ell(\vy^i_t)$.
In step $(iv)$, the summation is performed on $\vx_t\in \tilde{\X}^L$ (i.e., the codomain of $\g_\ell$).
\end{proof}

\begin{lemma}
\label{lemma:log_likelihood_greater}
Let $\tilde{\alpha}_t\triangleq \alpha_t^{1/\ell}$. The following inequality holds:
\begin{equation}
\begin{aligned}
\sup_{p}\E_{q_{\alpha}(\vx_0,\vx_t)}\!\big[\log  p(\vx_0\mid \vx_t)\big]
& \le \sup_{p_{\ell}} \E_{q_{\tilde{\alpha}}(\vy_0,\vy_t)}\!\big[\log p_\ell(\vy_0\mid \vy_t)\big],\quad \forall t\in[0,1].
\end{aligned}
\end{equation}
The equality holds if and only if:
\begin{equation}
\begin{aligned}
\E_{q_{\tilde{\alpha}}(\vy_t)}\left[\D_\text{KL}(q_{\rm{data\_y}}(\vy_0\mid \vy_t)\,\| \,q_{\rm{data\_y}}(\vy_0 \mid \g_\ell(\vy_t))\right] = 0.
\end{aligned}
\end{equation}
\end{lemma}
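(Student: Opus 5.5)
We reduce both suprema to conditional entropies of the data distribution and compare them using Lemma~\ref{lemma:expectation_equality}.

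\textbf{Step 1: the MDM supremum.} By Gibbs' inequality, for a fixed $t$ the supremum over $p$ of $\E_{q_\alpha(\vx_0,\vx_t)}[\log p(\vx_0\mid\vx_t)]$ is attained at the true posterior $p(\vx_0\mid\vx_t)=q_\text{data}(\vx_0\mid\vx_t)$. This holds because the model class is unrestricted, i.e.\ it is the full joint conditional, not the factorized one. The value is $\E_{q_\alpha}[\log q_\text{data}(\vx_0\mid\vx_t)]$. Likewise, the right-hand side equals $\E_{q_{\tilde\alpha}(\vy_0,\vy_t)}[\log q_\text{data\_y}(\vy_0\mid\vy_t)]$.

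\textbf{Step 2: transport to the $\vy$ side.} Apply Lemma~\ref{lemma:expectation_equality} with $F(\vx_0,\vx_t)=\log q_\text{data}(\vx_0\mid\vx_t)$. The left value becomes
\[
\E_{q_{\tilde\alpha}(\vy_0,\vy_t)}\big[\log q_\text{data}(\f_\ell^{-1}(\vy_0)\mid \g_\ell(\vy_t))\big].
\]
The same lemma shows that the pushforward of $q_{\tilde\alpha}(\vy_0,\vy_t)$ under $(\f_\ell^{-1},\g_\ell)$ is exactly $q_\alpha(\vx_0,\vx_t)$. Together with the invertibility of $\f_\ell$, this gives
\[
q_\text{data}(\f_\ell^{-1}(\vy_0)\mid\g_\ell(\vy_t))=q_\text{data\_y}(\vy_0\mid\g_\ell(\vy_t)),
\]
where the right side is the conditional of $\vy_0$ given the coarsened variable $\g_\ell(\vy_t)$ under $q_{\tilde\alpha}$.

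This identification is the step needing the most care. We justify it by applying Lemma~\ref{lemma:expectation_equality} to indicator functions $F=\mathbb{1}[\vx_0=a,\vx_t=c]$, which shows that the joint law of $(\f_\ell^{-1}(\vy_0),\g_\ell(\vy_t))$ equals $q_\alpha(\vx_0,\vx_t)$. The two conditionals therefore coincide wherever the conditioning event has positive probability.

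\textbf{Step 3: the gap is a KL divergence.} Subtracting the two values gives
\[
\E_{q_{\tilde\alpha}(\vy_0,\vy_t)}\Big[\log\frac{q_\text{data\_y}(\vy_0\mid\vy_t)}{q_\text{data\_y}(\vy_0\mid\g_\ell(\vy_t))}\Big]
=\E_{q_{\tilde\alpha}(\vy_t)}\big[\D_\text{KL}(q_\text{data\_y}(\vy_0\mid\vy_t)\,\|\,q_\text{data\_y}(\vy_0\mid\g_\ell(\vy_t)))\big]\ge 0.
\]
Equivalently, this is the conditional mutual information $I(\vy_0;\vy_t\mid\g_\ell(\vy_t))$, which is nonnegative because $\g_\ell(\vy_t)$ is a function of $\vy_t$. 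The inequality follows, and equality holds precisely when this expected KL vanishes, which is the stated condition.

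The only subtlety beyond bookkeeping is at $t$ where $\alpha_t\in\{0,1\}$. There the posteriors are degenerate, both sides coincide trivially, and the KL term is zero.
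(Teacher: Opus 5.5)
Your proposal is correct and follows essentially the same route as the paper: both suprema are reduced via Gibbs' inequality to expected log true posteriors, the MDM side is transported to the sub-token side with Lemma~\ref{lemma:expectation_equality}, and the gap is identified as the expected KL divergence. Your indicator-function argument, showing that $(\f_\ell^{-1}(\vy_0),\g_\ell(\vy_t))$ has law $q_\alpha(\vx_0,\vx_t)$, makes explicit the identification $q_\text{data}(\f_\ell^{-1}(\vy_0)\mid\g_\ell(\vy_t))=q_\text{data\_y}(\vy_0\mid\g_\ell(\vy_t))$, which the paper only asserts notationally.
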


\begin{proof}
The proof demonstrates that $\sup_{p_{\ell}} \E_{q_{\tilde{\alpha}}(\vy_0,\vy_t)}\!\big[\log p_\ell(\vy_0| \vy_t)\big]-\sup_{p}\E_{q_{\alpha}(\vx_0,\vx_t)}\!\big[\log  p(\vx_0| \vx_t)\big]\geq 0$ by first expanding both terms. $\sup_{p_\ell}\E_{q_{\tilde{\alpha}}(\vy_0,\vy_t)}[\log p_\ell(\vy_0| \vy_t)]$ is expanded as follows:
\begin{align*}
\sup_{p_\ell}&\E_{q_{\tilde{\alpha}}(\vy_0,\vy_t)}[\log p_\ell(\vy_0\mid \vy_t)] \\
&\stackrel{(i)}{=}\sup_{p_\ell}\left(\E_{q_{\tilde{\alpha}}(\vy_0,\vy_t)}[\log p_\ell(\vy_0\mid \vy_t)]+ \E_{q_{\tilde{\alpha}}(\vy_0,\vy_t)}[\log q_\text{data\_y}(\vy_0\mid \vy_t)] - \E_{q_{\tilde{\alpha}}(\vy_0,\vy_t)}[\log q_\text{data\_y}(\vy_0\mid \vy_t)] \right)\\
&=\sup_{p_\ell}\left(\E_{q_{\tilde{\alpha}}(\vy_0,\vy_t)}[\log q_\text{data\_y}(\vy_0\mid \vy_t) - \E_{q_{\tilde{\alpha}}(\vy_0,\vy_t)}\left[\log \frac{q_\text{data\_y}(\vy_0\mid \vy_t)}{ p_\ell(\vy_0 \mid \vy_t)}\right]\right) \\
\end{align*}
\begin{align*}
&=\sup_{p_\ell}\left(\E_{q_{\tilde{\alpha}}(\vy_0,\vy_t)}[\log q_\text{data\_y}(\vy_0\mid \vy_t) - \E_{q_{\tilde{\alpha}}(\vy_t)}\left[\E_{q_\text{data\_y}(\vy_0|\vy_t)}\left[\log \frac{q_\text{data\_y}(\vy_0\mid \vy_t)}{ p_\ell(\vy_0 \mid \vy_t)}\right]\right]\right) \\
&=\E_{q_{\tilde{\alpha}}(\vy_0,\vy_t)}[\log q_\text{data\_y}(\vy_0\mid \vy_t)] -\inf_{p_\ell} \E_{q_{\tilde{\alpha}}(\vy_t)}\!\Big[\D_\mathrm{KL}\!\big(q_\text{data\_y}(\vy_0\mid \vy_t)\,\|\, p_\ell(\vy_0\mid\vy_t)\big)\Big] \\
&\stackrel{(ii)}{=}\E_{q_{\tilde{\alpha}}(\vy_0,\vy_t)}[\log q_\text{data\_y}(\vy_0\mid \vy_t)],
\end{align*}
where $(i)$ adds and subtracts $\E_{q_{\tilde{\alpha}}(\vy_0,\vy_t)}[\log q_\text{data\_y}(\vy_0| \vy_t)]$, and $(ii)$ shows the optimum that the KL divergence becomes zero (i.e., $p_\ell(\vy_0|\vy_t)=q_\text{data\_y}(\vy_0| \vy_t)$). On the other hand, $\sup_{p}\E_{q_\alpha(\vx_0,\vx_t)}[\log p(\vx_0| \vx_t)]$ can be simplified as an expected entropy term by following a similar derivation:
\begin{align*}
\sup_{p}&\E_{q_\alpha(\vx_0,\vx_t)}[\log p(\vx_0\mid \vx_t)] \\
&=\sup_{p}\left(\E_{q_\alpha(\vx_0,\vx_t)}[\log p(\vx_0\mid \vx_t)]+ \E_{q_\alpha(\vx_0,\vx_t)}[\log q_\text{data}(\vx_0\mid \vx_t)] - \E_{q_\alpha(\vx_0,\vx_t)}[\log q_\text{data}(\vx_0\mid \vx_t)]\right) \\
&=\sup_{p}\left(\E_{q_\alpha(\vx_0,\vx_t)}[\log q_\text{data}(\vx_0\mid \vx_t)] - \E_{q_\alpha(\vx_0,\vx_t)}\left[\log\frac{ q_\text{data}(\vx_0\mid \vx_t)}{p(\vx_0\mid \vx_t)}\right]\right) \\
&=\E_{q_\alpha(\vx_0,\vx_t)}[\log q_\text{data}(\vx_0\mid \vx_t)] - \inf_{p}\E_{q_{\alpha}(\vx_t)}\!\Big[\D_\mathrm{KL}\!\big(q_\text{data}(\vx_0\mid \vx_t)\,\| \,p(\vx_0 \mid \vx_t)\big)\Big] \\
&=\E_{q_\alpha(\vx_0,\vx_t)}[\log q_\text{data}(\vx_0\mid \vx_t)]\\
&\stackrel{(i)}{=}\E_{q_{\tilde{\alpha}}(\vy_0,\vy_t)}[\log q_\text{data}(\f_\ell^{-1}(\vy_0) \mid \g_\ell(\vy_t))],
\end{align*}
where $(i)$ rewrites the expectation via \textbf{Lemma~\ref{lemma:expectation_equality}}. Based on the expanded terms, the proof concludes:
\begin{align*}
\sup_{p_\ell}\E_{q_{\tilde{\alpha}}(\vy_0,\vy_t)}&[\log p_\ell(\vy_0\mid \vy_t)]-\sup_{p}\E_{q_\alpha(\vx_0,\vx_t)}[\log p(\vx_0\mid \vx_t)] \\
&= \E_{q_{\tilde{\alpha}}(\vy_0,\vy_t)}[\log q_\text{data\_y}(\vy_0\mid \vy_t)]-\E_{q_{\tilde{\alpha}}(\vy_0,\vy_t)}[\log q_\text{data}(\f_\ell^{-1}(\vy_0) \mid \g_\ell(\vy_t))] \\
&= \E_{q_{\tilde{\alpha}}(\vy_0,\vy_t)}\left[\log\frac{ q_\text{data\_y}(\vy_0\mid \vy_t)}{q_\text{data}(\f_\ell^{-1}(\vy_0) \mid \g_\ell(\vy_t))}\right] \\
&= \E_{q_{\tilde{\alpha}}(\vy_t)}\left[\D_\text{KL}(q_\text{data\_y}(\vy_0\mid \vy_t)\,\| \,q_\text{data}\circ \f_\ell^{-1}(\vy_0 \mid \g_\ell(\vy_t))\right] \\
&= \E_{q_{\tilde{\alpha}}(\vy_t)}\left[\D_\text{KL}(q_\text{data\_y}(\vy_0\mid \vy_t)\,\| \,q_\text{data\_y}(\vy_0 \mid \g_\ell(\vy_t))\right] \geq 0.
\end{align*}
Rearranging the inequality yields $\sup_{p_{\ell}} \E_{q_{\tilde{\alpha}}(\vy_0,\vy_t)}\!\big[\log p_\ell(\vy_0| \vy_t)\big]\geq \sup_{p}\E_{q_{\alpha}(\vx_0,\vx_t)}\!\big[\log  p(\vx_0| \vx_t)\big]$ and the equality holds when $\E_{q_{\tilde{\alpha}}(\vy_t)}\left[\D_\text{KL}(q_\text{data\_y}(\vy_0\mid \vy_t)\,\| \,q_\text{data\_y}(\vy_0 \mid \g_\ell(\vy_t))\right] = 0$.
\end{proof}

\begin{lemma}
\label{lemma:invariant_schedule}
(cf.~\cite{sahoo2024simplifieddiff, shi2024simplifieddiff, kingma2023variationaldiffusionmodels}) Let $\alpha_t,\tilde{\alpha}_t:[0,1]\to [1,0]$ be two strictly decreasing, continuous bijections with $\alpha_0=\tilde{\alpha}_0=1$ and $\alpha_1=\tilde{\alpha}_1=0$. For any function $F:(0,1)\to \R$ integrable on $(0,1)$, define:
\begin{equation}
    I(\alpha)\triangleq \int_{0}^1 \frac{\alpha'_t}{1-\alpha_t} F(\alpha_t)dt.
\end{equation}
Then, we have:
\begin{equation}
    I(\alpha)=I(\tilde{\alpha}).
\end{equation}
\begin{proof}
Let $s=\alpha_t$. Then, $ds=\alpha'_tdt$. As $t:0\to 1$, we have $s:1\to 0$. Hence,
\begin{align*}
    I(\alpha)=\int_{t=0}^1 \frac{\alpha'_t}{1-\alpha_t} F(\alpha_t)dt
    &=\int_{s=1}^0 \frac{1}{1-s} F(s)ds.
\end{align*}
Let $s=\tilde{\alpha}_t$. Then, $ds=\tilde{\alpha}'_tdt$. As $s:1\to 0$, we have $t:0\to 1$. Hence,
\begin{align*}
    \int_{s=1}^0 \frac{1}{1-s} F(s)ds=\int_{t=0}^1 \frac{\tilde{\alpha}'_t}{1-\tilde{\alpha}_t} F(\tilde{\alpha}_t)dt=I(\tilde{\alpha}).
\end{align*}
Therefore, $I(\alpha)=I(\tilde{\alpha})$.
\end{proof}
\end{lemma}

\begin{proposition}
\label{proposition:x_y_bound}
Given a scheduling function $\alpha_t$, the following inequality holds:
\begin{equation}
\label{eq:yx_bound_apx}
\begin{aligned}
\inf_{p_{\ell}}\int_{0}^1 w(t) \E_{q_\alpha(\vy_0,\vy_t)}\!\big[-\log p_{\ell}(\vy_0\mid \vy_t)\big]dt
&\leq \inf_{p} \int_{0}^1 w(t) \E_{q_\alpha(\vx_0,\vx_t)}\!\big[-\log p(\vx_0\mid \vx_t)\big]dt.
\end{aligned}
\end{equation}
The equality holds if and only if:
\begin{equation}
\begin{aligned}
\E_{q_{\tilde{\alpha}}(\vy_t)}\left[\D_\text{KL}(q_{\rm{data\_y}}(\vy_0\mid \vy_t)\,\| \,q_{\rm{data\_y}}(\vy_0 \mid \g_\ell(\vy_t))\right] = 0.
\end{aligned}
\end{equation}
\end{proposition}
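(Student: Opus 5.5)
The plan combines the three steps laid out before Lemma~\ref{lemma:expectation_equality}. The first task is to move the infimum inside the time integral on both sides. Because $p$ and $p_\ell$ are arbitrary conditional distributions indexed by $\vx_t$ (resp.\ $\vy_t$), and since $\vx_t$ and $\vy_t$ carry no time label, I would treat the model as freely chosen per $t$. This is justified because the noise level at time $t$ is identifiable from the masking pattern almost surely, or equivalently because the pointwise optimum $q_{\text{data}}(\vx_0|\vx_t)$ does not depend on $t$ at all. With $w(t)\ge 0$, the infimum of the integral equals the integral of the pointwise infima, each achieved by the true posterior. So the right-hand side of Eq.~(\ref{eq:yx_bound_apx}) equals $\int_0^1 w(t)\,\Phi_x(\alpha_t)\,dt$, where $\Phi_x(a)\triangleq -\sup_p \E_{q_a(\vx_0,\vx_t)}[\log p(\vx_0|\vx_t)]$ depends on $t$ only through $a=\alpha_t$. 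Similarly the left-hand side equals $\int_0^1 w(t)\,\Phi_y(\alpha_t)\,dt$, with $\Phi_y$ defined analogously on sub-tokens.

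Next I would apply Lemma~\ref{lemma:invariant_schedule} to the left-hand side. Since $w(t)=\frac{\alpha'_t}{\alpha_t-1}=-\frac{\alpha'_t}{1-\alpha_t}$, the left-hand side is $-I(\alpha)$ with $F=\Phi_y$. The lemma allows replacing $\alpha$ by $\tilde\alpha_t=\alpha_t^{1/\ell}$, which is again a strictly decreasing continuous bijection with the same endpoints. Hence the left-hand side equals $\int_0^1 \tilde w(t)\,\Phi_y(\tilde\alpha_t)\,dt$ with $\tilde w(t)=\frac{\tilde\alpha'_t}{\tilde\alpha_t-1}$.

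This alone does not line up the two sides, because the right-hand side still uses the weight for $\alpha$. The cleaner route is to apply the change of variables in the other direction to the right-hand side. I would write the right-hand side as $-\int_{1}^{0}\frac{1}{1-s}\Phi_x(s)\,ds$ and the left-hand side as $-\int_1^0\frac{1}{1-s}\Phi_y(s)\,ds$. Substituting $s=u^{\ell}$ in the right-hand side does not preserve the $\frac{1}{1-s}$ kernel, so this route fails. Instead I would compare at the level of a common time $t$. For each $t$, Lemma~\ref{lemma:log_likelihood_greater} gives $\Phi_y(\tilde\alpha_t)\le \Phi_x(\alpha_t)$. Integrating against $w(t)$ yields $\int w(t)\Phi_y(\alpha_t^{1/\ell})\,dt\le$ RHS. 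What remains is to show that $\int w(t)\Phi_y(\alpha_t^{1/\ell})\,dt$ equals the left-hand side. That step is exactly where Lemma~\ref{lemma:invariant_schedule} is meant to enter: the paper's weighting is understood as the ELBO weight attached to the sub-token schedule, i.e.\ $\tilde w$ paired with $\tilde\alpha$. Under that reading Lemma~\ref{lemma:invariant_schedule} converts $\int \tilde w\,\Phi_y(\tilde\alpha_t)$ back to $\int w\,\Phi_y(\alpha_t)$. I expect this bookkeeping of which weight accompanies which schedule to be the main obstacle. I would resolve it by checking that the first-order factor $\frac{\tilde\alpha'}{1-\tilde\alpha}$ versus $\frac{\alpha'}{1-\alpha}$ is handled consistently with the ELBO derivation in Appendix~\ref{sec:apx:proof:nll}. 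If it is not, I would fall back to stating the bound for the schedule-matched objective.

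For the equality condition, the difference between the two sides equals $\int_0^1 w(t)\,\E_{q_{\tilde\alpha}(\vy_t)}[\D_{\mathrm{KL}}(\cdot\|\cdot)]\,dt$. This follows from the exact identity for the gap in the proof of Lemma~\ref{lemma:log_likelihood_greater}. Since $w(t)>0$ on $(0,1)$ and the integrand is nonnegative, the difference vanishes if and only if the expected KL is zero for almost every $t$. That is precisely the stated condition, read as holding across $t$.
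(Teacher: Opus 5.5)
Your argument breaks at the step you flagged, and neither proposed resolution closes it. \textbf{Lemma~\ref{lemma:invariant_schedule}} gives $\int_0^1 w(t)\Phi_y(\alpha_t)\,dt=\int_0^1\tilde w(t)\Phi_y(\tilde\alpha_t)\,dt$ with $\tilde w(t)=\tilde\alpha'_t/(\tilde\alpha_t-1)$. It does not give $\int_0^1 w(t)\Phi_y(\tilde\alpha_t)\,dt=\mathrm{LHS}$, and that identity is false. With $u=\tilde\alpha_t$ one computes $\tilde w(t)/w(t)=\frac{1}{\ell}\sum_{k=0}^{\ell-1}u^{k+1-\ell}\ge 1$. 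Since $\Phi_y\ge 0$, this gives $\int_0^1 w(t)\Phi_y(\tilde\alpha_t)\,dt\le\mathrm{LHS}$. Combined with $\int_0^1 w(t)\Phi_y(\tilde\alpha_t)\,dt\le\mathrm{RHS}$, nothing follows. Equivalently, substituting $s=\alpha_t=u^{\ell}$, the two lemmas give $\mathrm{RHS}\ge\int_0^1\frac{\ell u^{\ell-1}}{1-u^{\ell}}\Phi_y(u)\,du$, whereas $\mathrm{LHS}=\int_0^1\frac{\Phi_y(u)}{1-u}\,du$ has the pointwise larger kernel. Take $L=1$, $V=4$, $\ell=2$ and uniform $q_{\text{data}}$, i.e.\ two i.i.d.\ fair bits. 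Then the former integral is $2(1-\ln 2)\log 4$, while $\mathrm{LHS}=\log 4$. Reading $w$ as ``the weight attached to $\tilde\alpha$'', or retreating to a schedule-matched objective, changes the statement rather than proving it. The paper's own chain makes the same unjustified swap in its step $(iv)$: it passes from $\frac{\tilde\alpha'_t}{1-\tilde\alpha_t}$ to $\frac{\alpha'_t}{1-\alpha_t}$ while citing only \textbf{Lemma~\ref{lemma:log_likelihood_greater}}. Since $\sup_p\E[\log p]\le 0$ and $\frac{\tilde\alpha'_t}{1-\tilde\alpha_t}\le\frac{\alpha'_t}{1-\alpha_t}<0$, that swap shrinks the upper bound and is not licensed.

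The missing idea is that \textbf{Lemma~\ref{lemma:log_likelihood_greater}} compares at mismatched levels ($\tilde\alpha_t\ge\alpha_t$ reveals more), which is too weak. You need the same-level inequality $\Phi_y(a)\le\Phi_x(a)$ for every $a\in[0,1]$; then you just integrate against $w(t)>0$, with no change of schedule. This inequality holds, as follows.
\begin{itemize}
\item Let $\vy_0^{S}$ denote the sub-tokens at a set $S$ of positions. Let $T$ be the random set of unmasked sub-token positions (each kept independently with probability $a$), and $R$ a random union of whole token blocks (each kept independently with probability $a$).
\item Since the masks are independent of $\vy_0$ and $f_\ell$ is a bijection, $\Phi_y(a)=\mathcal{H}(\vx_0)-\E_T[\mathcal{H}(\vy_0^{T})]$ and $\Phi_x(a)=\mathcal{H}(\vx_0)-\E_R[\mathcal{H}(\vy_0^{R})]$.
\item Convert $R$ into $T$ one block at a time. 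Fix any revealed set $A$ outside block $i$ and an independent random $T_i$ inside block $i$. The chain rule in the order $j=1,\dots,\ell$, plus ``conditioning reduces entropy'', gives $\E_{T_i}[\mathcal{H}(\vy_0^{A\cup T_i})]-\mathcal{H}(\vy_0^{A})\ge a\sum_{j=1}^{\ell}\mathcal{H}(y_0^{i,j}\mid\vy_0^{A},y_0^{i,1},\dots,y_0^{i,j-1})=a\,[\mathcal{H}(\vy_0^{A},\vy_0^{i})-\mathcal{H}(\vy_0^{A})]$. The right-hand side is exactly the expected gain from revealing block $i$ all-or-nothing.
\item Iterating over the $L$ blocks yields $\E_T[\mathcal{H}(\vy_0^{T})]\ge\E_R[\mathcal{H}(\vy_0^{R})]$, i.e.\ $\Phi_y(a)\le\Phi_x(a)$.
\end{itemize}

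The same weight mismatch also invalidates your equality argument. The identity in \textbf{Lemma~\ref{lemma:log_likelihood_greater}} is $\Phi_x(\alpha_t)-\Phi_y(\tilde\alpha_t)=\E_{q_{\tilde\alpha}(\vy_t)}[\D_{\mathrm{KL}}(\cdot\,\|\,\cdot)]$. The gap between the two sides, however, is $\int_0^1 w(t)[\Phi_x(\alpha_t)-\Phi_y(\alpha_t)]\,dt$. In fact the stated ``only if'' is false. In the two-bit example, $\Phi_x(a)=\Phi_y(a)=2(1-a)\log 2$, so both optimal losses equal $\log 4$. Yet whenever exactly one sub-token is masked, $q_{\text{data\_y}}(\vy_0\mid\vy_t)$ is uniform on two values while $q_{\text{data\_y}}(\vy_0\mid\g_\ell(\vy_t))$ is uniform on four. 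This gives expected KL $2\tilde\alpha_t(1-\tilde\alpha_t)\log 2>0$ for all $t\in(0,1)$. The correct equality criterion is $\Phi_y(\alpha_t)=\Phi_x(\alpha_t)$ for almost every $t$.
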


\begin{proof}
\begin{equation*}
\begin{aligned}
\inf_{p_{\ell}}\int_{0}^1 w(t) \E_{q_\alpha(\vy_0,\vy_t)}\!\big[-\log p_{\ell}(\vy_0\mid \vy_t)\big]dt &\stackrel{(i)}{=} \inf_{p_{\ell}}\int_{0}^1 \frac{\alpha'_t}{1-\alpha_t} \E_{q_\alpha(\vy_0,\vy_t)}\!\big[\log p_{\ell}(\vy_0\mid \vy_t)\big]dt \\
&\stackrel{(ii)}{=}\int_{0}^1 \frac{\alpha'_t}{1-\alpha_t} \sup_{p_{\ell}} \E_{q_\alpha(\vy_0,\vy_t)}\!\big[\log p_{\ell}(\vy_0\mid \vy_t)\big]dt\\
&\stackrel{(iii)}{=}\int_{0}^1 \frac{\tilde{\alpha}'_t}{1-\tilde{\alpha}_t} \sup_{p_{\ell}} \E_{q_{\tilde{\alpha}}(\vy_0,\vy_t)}\!\big[\log p_{\ell}(\vy_0\mid \vy_t)\big]dt \\
&\stackrel{(iv)}{\leq} \int_{0}^1 \frac{\alpha'_t}{1-\alpha_t} \sup_{p} \E_{q_\alpha(\vx_0,\vx_t)}\!\big[\log p(\vx_0\mid \vx_t)\big]dt\\
&= \inf_{p} \int_{0}^1 \frac{\alpha'_t}{1-\alpha_t}  \E_{q_\alpha(\vx_0,\vx_t)}\!\big[\log p(\vx_0\mid \vx_t)\big]dt\\
&= \inf_{p} \int_{0}^1 w(t)  \E_{q_\alpha(\vx_0,\vx_t)}\!\big[-\log p(\vx_0\mid \vx_t)\big]dt,
\end{aligned}
\end{equation*}
where $(i)$ is because $w(t)=\frac{\alpha'_t}{\alpha_t-1}$, $(ii)$ is due to $\frac{\alpha'_t}{1-\alpha_t} < 0$, $(iii)$ is due to
\textbf{Lemma~\ref{lemma:invariant_schedule}}, and $(iv)$ is due to \textbf{Lemma~\ref{lemma:log_likelihood_greater}}. The equality holds if and only if $\E_{q_{\tilde{\alpha}}(\vy_t)}\left[\D_\text{KL}(q_{\rm{data\_y}}(\vy_0\mid \vy_t)\,\| \,q_{\rm{data\_y}}(\vy_0 \mid \g_\ell(\vy_t))\right] = 0$ according to \textbf{Lemma~\ref{lemma:log_likelihood_greater}}.
\end{proof}

\textbf{Lemmas \ref{lemma:expectation_equality}-\ref{lemma:invariant_schedule}} and \textbf{Proposition~\ref{proposition:x_y_bound}} show that if the scheduling $\tilde{\alpha}_t$ and the corresponding mapping $\g_\ell$ can be identified, the inequality in Eq.~(\ref{eq:yx_bound_apx}) can be successfully established. We follow the same rationale to examine an additional extended theory: as the token granularity $\ell$ grows, the loss value becomes smaller. \textbf{Proposition~\ref{proposition:y_y_bound}} generalizes \textbf{Proposition~\ref{proposition:x_y_bound}} to compare the optimal losses of two MDM-Prime models with different token granularities. 

Let $\ell_1$ and $\ell_2$ denote token granularities. We define $\vy^{(\ell_1)}_0 \triangleq \f_{\ell_1}(\vx_0) \in \mathcal{Y}_1^{L\times \ell_1}$ and $\vy^{(\ell_2)}_0 \triangleq \f_{\ell_2}(\vx_0) \in \mathcal{Y}_2^{L\times \ell_2}$, where the target spaces are defined as $\mathcal{Y}_k = \{0, \dots, \lceil\sqrt[\ell_k]{V}\rceil-1\}$ for $k \in \{1,2\}$. For notational simplicity, we represent the distribution of $\vy^{(\ell_2)}_0$ as $q_{\text{data\_y}}(\vy^{(\ell_2)}_0)$ without explicitly defining a separate distribution symbol for $\vy^{(\ell_1)}_0$. 

We further specify the mappings $\f_{\tilde{\ell}}$ and $\g_{\tilde{\ell}}$ (Eq.~(\ref{eq:apx_coarsening})) for transitioning between granularities with ratio $\tilde{\ell} = \ell_2/\ell_1$. First, $\f_{\tilde{\ell}}: \mathcal{Y}_1^{L\times\ell_1}\to\mathcal{Y}_2^{L\times\ell_2}$ is an invertible function defined by the element-wise operation $f_{\tilde{\ell}}:\mathcal{Y}_1\to \mathcal{Y}_2^{\tilde{\ell}}$. We denote the mapping of a sequence as $\vy^{(\ell_2)}_0=\f_{\tilde{\ell}}(\vy^{(\ell_1)}_0)$. With this definition, $\vy^{(\ell_2)}_0$ can be expanded as follows:
\begin{equation*}
\begin{aligned}
\vy^{(\ell_2)}_0=\f_{\tilde{\ell}}(\vy^{(\ell_1)}_0)
&=\left[f_{\tilde{\ell}}(\vy^{(\ell_1), i,1}_0),\cdots,f_{\tilde{\ell}}(\vy^{(\ell_1), i,\ell_1}_0)\right]\\
&=[\vy^{(\ell_2), i,1}_0, \cdots,\vy^{(\ell_2), i,\ell_1}_0]\\
&=[(y^{(\ell_2), i,1}_0, \cdots, y^{(\ell_2), i,\tilde{\ell}}_0), \cdots, (y^{(\ell_2), i,\tilde{\ell}(\ell_1-1)}_0, \cdots,y^{(\ell_2), i,\ell_2}_0)].
\end{aligned}
\end{equation*}
Second, the function $\g_{\tilde{\ell}}$ is defined by the element-wise operation $g_{\tilde{\ell}}$. This function applies the inverse mapping $f^{-1}_{\tilde{\ell}}$ only if all corresponding sub-tokens are unmasked. Otherwise, it returns the masked token $\mathtt{m}$. The definition aligns with Eq.~(\ref{eq:apx_coarsening}) as follows:
\begin{equation*}
    g_{\frac{\ell_2}{\ell_1}}(\vy^{(\ell_2), i,j}_t)=
    \begin{cases}
      f^{-1}_{\frac{\ell_2}{\ell_1}}(\vy^{(\ell_2),i,j}_t), & \text{if }  y^{i,\frac{\ell_2}{\ell_1}(j-1)+k}_t\neq \texttt{m}, \forall k\in\{1,...,\frac{\ell_2}{\ell_1}\},\\
      \texttt{m}, & \text{otherwise}.
    \end{cases}
\end{equation*}

\begin{proposition}
\label{proposition:y_y_bound}
Let $\ell_1,\ell_2$ be token granularities satisfying $1 < \ell_1 < \ell_2$ and $\frac{\ell_2}{\ell_1}\in\N$. Then, the optimal loss is monotone with respect to the token granularity:
\begin{equation}
\begin{aligned}
\inf_{p_{\ell_2}}\int_{0}^1 w(t) \E_{q_\alpha(\vy^{(\ell_2)}_0,\vy_t^{(\ell_2)})}\!\big[-\log p_{\ell_2}(\vy^{(\ell_2)}_0\mid \vy^{(\ell_2)}_t)\big]dt
&\leq \inf_{p_{\ell_1}} \int_{0}^1 w(t) \E_{q_\alpha(\vy_0^{(\ell_1)},\vy_t^{(\ell_1)})}\!\big[-\log p_{\ell_1}(\vy_0^{(\ell_1)}\mid \vy_t^{(\ell_1)})\big]dt.
\end{aligned}
\end{equation}
The equality holds if and only if:
\begin{equation}
\begin{aligned}
\E_{q_{\tilde{\alpha}}(\vy_t^{(\ell_2)})}\left[\D_\text{KL}(q_{\rm{data\_y}}(\vy_0^{(\ell_2)}\mid \vy_t^{(\ell_2)})\,\| \,q_{\rm{data\_y}}(\vy_0^{(\ell_2)} \mid \g_{\frac{\ell_2}{\ell_1}}(\vy_t^{(\ell_2)}))\right] = 0.
\end{aligned}
\end{equation}
\end{proposition}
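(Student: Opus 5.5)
The plan is to replay the three-step argument behind Proposition~\ref{proposition:x_y_bound}, with the coarse representation $\vy^{(\ell_1)}$ in place of $\vx$ and the fine representation $\vy^{(\ell_2)}$ in place of $\vy$. Write $\tilde{\ell}=\ell_2/\ell_1$ and $\tilde{\alpha}_t=\alpha_t^{1/\tilde{\ell}}$.

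\textbf{Step 1 (analogue of Lemma~\ref{lemma:expectation_equality}).} For any $F:\Y_1^{L\times\ell_1}\times\tilde{\Y}_1^{L\times\ell_1}\to\R$, I will show
\[
\E_{q_\alpha(\vy^{(\ell_1)}_0,\vy^{(\ell_1)}_t)}\big[F(\vy^{(\ell_1)}_0,\vy^{(\ell_1)}_t)\big]=\E_{q_{\tilde{\alpha}}(\vy^{(\ell_2)}_0,\vy^{(\ell_2)}_t)}\big[F(\f^{-1}_{\tilde{\ell}}(\vy^{(\ell_2)}_0),\g_{\tilde{\ell}}(\vy^{(\ell_2)}_t))\big].
\]
The proof is the same computation as in Lemma~\ref{lemma:expectation_equality}, but now the product runs over the $L\ell_1$ coarse sub-token slots, each of which is a block of $\tilde{\ell}$ fine sub-tokens.
\begin{itemize}
\item A block is fully unmasked with probability $\tilde{\alpha}_t^{\tilde{\ell}}=\alpha_t$, and then it decodes under $f^{-1}_{\tilde{\ell}}$ to the correct coarse sub-token.
\item Otherwise (probability $1-\alpha_t$) the block maps to $\texttt{m}$.
\end{itemize}
So the push-forward of the fine kernel under $\g_{\tilde{\ell}}$ is exactly the coarse kernel in Eq.~(\ref{eq:prime_kernel}) with schedule $\alpha_t$. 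The data marginals also match, because $\vy^{(\ell_2)}_0=\f_{\tilde{\ell}}(\vy^{(\ell_1)}_0)$ with $\f_{\tilde{\ell}}$ invertible, using the composability $\f_{\ell_2}=\f_{\tilde{\ell}}\circ\f_{\ell_1}$ stated in Section~\ref{sec:background:prime}.

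\textbf{Step 2 (analogue of Lemma~\ref{lemma:log_likelihood_greater}).} At each fixed $t$, both suprema are attained at the true posteriors. For the coarse model this gives
\[
\sup_{p_{\ell_1}}\E\big[\log p_{\ell_1}\big]=\E_{q_\alpha}\big[\log q(\vy^{(\ell_1)}_0\mid\vy^{(\ell_1)}_t)\big].
\]
Applying Step~1 with $F=\log q(\cdot\mid\cdot)$ rewrites this as an expectation under the fine joint distribution:
\[
\E_{q_{\tilde{\alpha}}}\big[\log q_{\rm data\_y}(\vy^{(\ell_2)}_0\mid\g_{\tilde{\ell}}(\vy^{(\ell_2)}_t))\big].
\]
This step uses that $\f_{\tilde{\ell}}$ is a bijection, so conditioning on the coarse $\vy^{(\ell_1)}_0$ is the same as conditioning on $\vy^{(\ell_2)}_0$. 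Subtracting this from the fine optimum $\E_{q_{\tilde{\alpha}}}[\log q_{\rm data\_y}(\vy^{(\ell_2)}_0\mid\vy^{(\ell_2)}_t)]$ leaves the expected KL divergence in the statement, which is nonnegative. Hence the fine optimum is at least the coarse optimum at every $t$, with equality iff that expected KL vanishes.

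\textbf{Step 3 (combine).} As in the proof of Proposition~\ref{proposition:x_y_bound}:
\begin{itemize}
\item Use $w(t)=\alpha'_t/(\alpha_t-1)$ and the fact that $\alpha'_t/(1-\alpha_t)<0$ to swap the infimum over the model with the integral, turning it into a pointwise supremum.
\item Invoke Lemma~\ref{lemma:invariant_schedule} to replace the schedule $\alpha_t$ by $\tilde{\alpha}_t$ in the fine integral. This applies since $\tilde{\alpha}_t$ is also a strictly decreasing continuous bijection with $\tilde{\alpha}_0=1$ and $\tilde{\alpha}_1=0$.
\item Apply the pointwise inequality from Step~2 inside the integral.
\end{itemize}
This gives the claimed monotonicity, with equality inherited from the pointwise equality condition.

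\textbf{Main obstacle.} The delicate point is Step~1's bookkeeping. I need the block structure of $\g_{\tilde{\ell}}$ to align exactly with how $\f_{\ell_2}$ factors through $\f_{\ell_1}$, so that each coarse slot corresponds to one contiguous block of $\tilde{\ell}$ fine sub-tokens. I also need to confirm that $f_{\tilde{\ell}}:\Y_1\to\Y_2^{\tilde{\ell}}$ is invertible onto its image even when $\lceil\sqrt[\ell_k]{V}\rceil$ is not an exact power relation. I would handle this by restricting to the image and noting that fully unmasked blocks under the data distribution always lie in that image.

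A further subtlety is the equality condition. As in Proposition~\ref{proposition:x_y_bound}, "iff" after integration needs the KL to vanish for almost every $t$, not only at a single $t$. Since the KL is determined by $\tilde{\alpha}_t$, it is natural to state the condition for almost all $t$.
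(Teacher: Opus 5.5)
Your Steps 1 and 2 are correct, but Step 3 does not close. The paper's displayed chain, which you are replaying, has the same defect. Let $\Phi_k(a)=-\mathcal{H}(\vy_0^{(\ell_k)}\mid\vy_t^{(\ell_k)})$ be the optimal expected log-likelihood at granularity $\ell_k$ when every sub-token is kept with probability $a$; Step 2 says $\Phi_1(\alpha_t)\le\Phi_2(\tilde\alpha_t)$. After Lemma~\ref{lemma:invariant_schedule}, the fine objective equals $\int_0^1\frac{\tilde\alpha'_t}{1-\tilde\alpha_t}\Phi_2(\tilde\alpha_t)\,dt$. Inserting Step 2 under this negative weight bounds it by $\int_0^1\frac{\tilde\alpha'_t}{1-\tilde\alpha_t}\Phi_1(\alpha_t)\,dt$. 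The weight is still the $\tilde\alpha$ one, whereas the coarse objective is $\int_0^1\frac{\alpha'_t}{1-\alpha_t}\Phi_1(\alpha_t)\,dt$. With $u=\tilde\alpha_t$ and $\alpha_t=u^{\tilde\ell}$, the ratio of the two weights is $\frac{1-u^{\tilde\ell}}{\tilde\ell\,u^{\tilde\ell-1}(1-u)}=\frac{1}{\tilde\ell}\sum_{k=0}^{\tilde\ell-1}u^{-k}$, which exceeds $1$ on $(0,1)$. Since $-\Phi_1\ge0$, the bound you reach is at least the coarse loss (strictly, unless that loss is zero), so nothing follows. Lemma~\ref{lemma:invariant_schedule} can reparametrize a single integrand $F(\alpha_t)$. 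It cannot move the weight to $\tilde\alpha$ while leaving $\Phi_1$ evaluated at $\alpha_t$. Step (iv) in the paper's proof of Proposition~\ref{proposition:x_y_bound}, reused here, switches the weight from $\tilde\alpha'_t/(1-\tilde\alpha_t)$ back to $\alpha'_t/(1-\alpha_t)$ without justification, which is exactly this move.

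The inequality itself is true, but it needs a comparison at the \emph{same} rate $\alpha_t$. Because the mask is independent of the data, $\mathcal{H}(\vy_0\mid\vy_t)=\mathcal{H}(\vy_0)-\E[\mathcal{H}(\text{visible entries})]$. The coarse process is just block masking of $\vy_0^{(\ell_2)}$ at rate $\alpha_t$, each block being $f_{\tilde\ell}$ of one coarse sub-token. Convert block masking to independent masking one block $Z=(Z_1,\dots,Z_{\tilde\ell})$ at a time, with $W$ the visible entries outside that block. The chain rule and the fact that conditioning reduces entropy give $\E_S[\mathcal{H}(Z_S\mid W)]\ge\sum_i\alpha_t\,\mathcal{H}(Z_i\mid Z_{<i},W)=\alpha_t\,\mathcal{H}(Z\mid W)$, which is exactly the block-masked value. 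Hence $\Phi_2(\alpha_t)\ge\Phi_1(\alpha_t)$ for every $t$, and integrating against the common weight $w(t)\ge0$ proves the claim with no schedule change.

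The equality clause, however, cannot be proved by any route, because it is false. Take $L=1$, $V=16$, uniform data, $\ell_1=2$, $\ell_2=4$. The digits are i.i.d.\ uniform at both granularities, so $-\Phi_1(a)=-\Phi_2(a)=(1-a)\log 16$ and both optimal losses equal $\log 16$; your Step 3 bound would instead give $\frac32\log 16$. Yet for $t\in(0,1)$, a block with exactly one visible bit has positive probability under $q_{\tilde\alpha}$. $\g_{2}$ sends that block to $\texttt{m}$ and discards the bit, so the expected KL is strictly positive. More generally, evaluating the KL at a $\vy_t$ with a single visible fine sub-token shows the condition forces every sub-token to be a.s.\ constant. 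It therefore only describes the trivial case where both losses vanish. The genuine equality condition comes from tightness of the conditioning steps above: roughly, conditional independence of the fine sub-tokens within each block given the visible context.
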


\begin{proof}
The proof first identifies the transformation between $\vy^{(\ell_1)}_0$ and $\vy^{(\ell_2)}_0$. Then, it establishes the inequality by following the proof presented in \textbf{Proposition~\ref{proposition:x_y_bound}}. 

Given that $\tilde{\ell}=\frac{\ell_2}{\ell_1}$, the invertible transformation $\f_{\ell_2}$ can be decomposed as $\f_{\ell_2} = \f_{\tilde{\ell}} \circ \f_{\ell_1}$, and $\vy^{(\ell_2)}_0$ can be derived through $\vy^{(\ell_1)}_0$ according to $\vy^{(\ell_2)}_0=\f_{\tilde{\ell}}(\vy^{(\ell_1)}_0)$. In addition, let $\tilde{\alpha}_t=\alpha_t^{1/\tilde{\ell}}$. We can establish analogues to \textbf{Lemma~\ref{lemma:expectation_equality}} and \textbf{Lemma~\ref{lemma:log_likelihood_greater}} using the same derivation steps:
\begin{equation*}
    \E_{q_\alpha(\vy_0^{(\ell_1)}, \vy_t^{(\ell_1)})}[F(\vy_0^{(\ell_1)}, \vy_t^{(\ell_1)})] = \E_{q_{\tilde{\alpha}}(\vy_0^{(\ell_2)}, \vy_t^{(\ell_2)})}[F(\f^{-1}_{\tilde{\ell}}(\vy_0^{(\ell_2)}), \g_{\tilde{\ell}}(\vy_t^{(\ell_2)}))].
\end{equation*}
\begin{equation*}
    \sup_{p_{\ell_1}}\E_{q_\alpha(\vy_0^{(\ell_1)}, \vy_t^{(\ell_1)})}\!\big[\log  p_{\ell_1}(\vy_0^{(\ell_1)}\mid \vy_t^{(\ell_1)})\big]
 \le \sup_{p_{\ell_2}} \E_{q_{\tilde{\alpha}}(\vy_0^{(\ell_2)}, \vy_t^{(\ell_2)})} \!\big[\log p_{\ell_2}(\vy_0^{(\ell_2)}\mid \vy_t^{(\ell_2)})\big],\quad \forall t\in[0,1].
\end{equation*}

Finally, following the derivation of \textbf{Proposition~\ref{proposition:x_y_bound}}, we have:
\begin{equation*}
\begin{aligned}
\inf_{p_{\ell_2}}\int_{0}^1 & w(t) \E_{q_\alpha(\vy^{(\ell_2)}_0,\vy_t^{(\ell_2)})}\!\big[-\log p_{\ell_2}(\vy^{(\ell_2)}_0\mid \vy^{(\ell_2)}_t)\big]dt \\
&=\inf_{p_{\ell_2}} \int_{0}^1 \frac{\alpha'_t}{1-\alpha_t}
\E_{q_\alpha(\vy_0^{(\ell_2)},\vy_t^{(\ell_2)})}\!\big[\log p_{\ell_2}(\vy_0^{(\ell_2)}\mid \vy_t^{(\ell_2)})\big]dt \\
&=\int_{0}^1 \frac{\alpha'_t}{1-\alpha_t} \sup_{p_{\ell_2}} \E_{q_\alpha(\vy_0^{(\ell_2)},\vy_t^{(\ell_2)})}\!\big[\log p_{\ell_2}(\vy_0^{(\ell_2)}\mid \vy_t^{(\ell_2)})\big]dt\\
&=\int_{0}^1 \frac{\tilde{\alpha}'_t}{1-\tilde{\alpha}_t} \sup_{p_{\ell_2}} \E_{q_{\tilde{\alpha}}(\vy_0^{(\ell_2)},\vy_t^{(\ell_2)})}\!\big[\log p_{\ell_2}(\vy_0^{(\ell_2)}\mid \vy_t^{(\ell_2)})\big]dt \\
&\leq \int_{0}^1 \frac{\alpha'_t}{1-\alpha_t} \sup_{p_{\ell_1}} \E_{q_{\tilde{\alpha}}(\vy_0^{(\ell_1)}, \vy_t^{(\ell_1)})} \!\big[\log p_{\ell_1}(\vy_0^{(\ell_1)}\mid \vy_t^{(\ell_1)})\big]dt\\
&= \inf_{p_{\ell_1}}\int_{0}^1 \frac{\alpha'_t}{1-\alpha_t} \E_{q_\alpha(\vy^{(\ell_1)}_0,\vy_t^{(\ell_1)})}\!\big[\log p_{\ell_1}(\vy^{(\ell_1)}_0\mid \vy^{(\ell_1)}_t)\big]dt\\
&= \inf_{p_{\ell_1}} \int_{0}^1 w(t) \E_{q_\alpha(\vy_0^{(\ell_1)},\vy_t^{(\ell_1)})}\!\big[-\log p_{\ell_1}(\vy_0^{(\ell_1)}\mid \vy_t^{(\ell_1)})\big]dt.
\end{aligned}
\end{equation*}
The equality holds if and only if $\E_{q_{\tilde{\alpha}}(\vy_t^{(\ell_2)})}\left[\D_\text{KL}(q_{\rm{data\_y}}(\vy_0^{(\ell_2)}\mid \vy_t^{(\ell_2)})\,\| \,q_{\rm{data\_y}}(\vy_0^{(\ell_2)} \mid \g_{\tilde{\ell}}(\vy_t^{(\ell_2)}))\right] = 0$.
\end{proof}

\subsubsection{Proofs of Propositions 3.1 and 3.2}
\label{sec:apx:maximum_ent_subtokenizer}
As discussed in Section~\ref{sec:methodology:entropy}, the design of the subtokenizer $\f_\ell$ is critical for reducing the training loss. In this section, we provide the theoretical justifications for \textbf{Propositions~\ref{proposition:decomposition}} and \textbf{\ref{proposition:entropy_max}}. We begin by deriving \textbf{Lemmas~\ref{lemma:data_entropy}-\ref{lemma:kernel_entropy}}, which lead to the formulation of \textbf{Proposition~\ref{proposition:decomposition}}. 
\begin{lemma}
\label{lemma:data_entropy}
The following equality holds:
\begin{equation}
\E_{q_{\rm{data\_y}}(\vy_0)}[\log q_{\rm{data\_y}}(\vy_0)]=\E_{q_{\rm{data}}(\vx_0)}[\log q_{\rm{data}}(\vx_0)].
\end{equation}
\end{lemma}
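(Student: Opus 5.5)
The plan is a change of variables through the bijection $\f_\ell$, applied to the expectation of a log-probability. Two facts from Section~\ref{sec:background:prime} do the work. First, $q_{\rm{data\_y}} \triangleq q_{\rm{data}}\circ\f_\ell^{-1}$ by definition. Second, $\f_\ell:\X^L\to\Y^{L\times\ell}$ is injective, because it acts element-wise by the invertible $f_\ell$. Its image $\f_\ell(\X^L)$ may be a proper subset of $\Y^{L\times\ell}$ when $b^\ell>V$, and this is the one point that needs attention.

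I will write the left-hand side as a finite sum,
\begin{equation*}
\sum_{\vy_0\in\Y^{L\times\ell}} q_{\rm{data\_y}}(\vy_0)\log q_{\rm{data\_y}}(\vy_0),
\end{equation*}
and split the sum according to whether $\vy_0$ lies in $\f_\ell(\X^L)$.

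For $\vy_0\notin\f_\ell(\X^L)$, no data sequence maps to $\vy_0$, so $q_{\rm{data\_y}}(\vy_0)=0$. Under the standard convention $0\log 0=0$, these terms vanish. Equivalently, the expectation only ranges over the support of $q_{\rm{data\_y}}$, and that support is contained in the image of $\f_\ell$. Addressing this convention explicitly is the only real obstacle.

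For $\vy_0\in\f_\ell(\X^L)$, I will reindex by $\vy_0=\f_\ell(\vx_0)$. Injectivity makes this a bijection between $\X^L$ and the image, and $q_{\rm{data\_y}}(\f_\ell(\vx_0))=q_{\rm{data}}(\vx_0)$. The remaining sum therefore becomes $\sum_{\vx_0\in\X^L} q_{\rm{data}}(\vx_0)\log q_{\rm{data}}(\vx_0)$, which is the right-hand side. Equivalently, this is the discrete change of variables $\E_{q_{\rm{data\_y}}}[G(\vy_0)]=\E_{q_{\rm{data}}}[G(\f_\ell(\vx_0))]$ applied with $G=\log q_{\rm{data\_y}}$. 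It is the same device as step $(i)$ of Lemma~\ref{lemma:expectation_equality}, with the masking kernel dropped.
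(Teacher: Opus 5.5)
Your proposal is correct and follows the same route as the paper: write the expectation as a finite sum and reindex it through the injective map $\f_\ell$, using $q_{\rm{data\_y}}=q_{\rm{data}}\circ\f_\ell^{-1}$. You also handle explicitly the sequences $\vy_0$ outside the image of $\f_\ell$ (which occur when $b^\ell>V$) via the convention $0\log 0=0$; the paper's proof leaves this point implicit.
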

\begin{proof}
\begin{equation*}
\begin{aligned}
\E_{q_\text{data}(\vx_0)}[\log q_\text{data}(\vx_0)]&=\sum_{\vx_0} q_\text{data}(\vx_0) \log q_\text{data}(\vx_0) \\
&=\sum_{\vy_0} q_\text{data}\circ \f^{-1}_\ell(\vy_0) \log q_\text{data}\circ \f^{-1}_\ell(\vy_0) \\
&=\E_{q_\text{data\_y}(\vy_0)}[\log q_\text{data\_y}(\vy_0)].
\end{aligned}
\end{equation*}
\end{proof}

\begin{lemma}
\label{lemma:kernel_entropy}
The following equality holds:
\begin{equation}
\E_{q_\alpha(\vy_0,\vy_t)}[\log q_\alpha(\vy_t\mid \vy_0)]=L\ell \left( (1-\alpha_t) \log (1-\alpha_t) + \alpha_t \log \alpha_t \right)
\end{equation}
\end{lemma}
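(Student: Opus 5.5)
The plan is to exploit the product structure of the forward kernel in Eq.~(\ref{eq:prime_kernel}). We write $\log q_\alpha(\vy_t\mid\vy_0)=\sum_{i=1}^{L}\sum_{j=1}^{\ell}\log q_\alpha(y_t^{i,j}\mid y_0^{i,j})$. By linearity of expectation, the left-hand side then becomes a sum of $L\ell$ terms of the form $\E_{q_\alpha(y_0^{i,j},y_t^{i,j})}[\log q_\alpha(y_t^{i,j}\mid y_0^{i,j})]$. Each term only involves the marginal of the pair $(y_0^{i,j},y_t^{i,j})$, because the other coordinates integrate out to one.

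Next, I would evaluate a single term by conditioning on $y_0^{i,j}$. We write it as $\E_{q_{\rm data\_y}(y_0^{i,j})}\big[\sum_{y_t^{i,j}\in\tilde{\Y}} q_\alpha(y_t^{i,j}\mid y_0^{i,j})\log q_\alpha(y_t^{i,j}\mid y_0^{i,j})\big]$. For a fixed $y_0^{i,j}\in\Y$, the kernel puts mass $1-\alpha_t$ on $\texttt{m}$ and mass $\alpha_t$ on $y_0^{i,j}$, and zero mass everywhere else. Since $\texttt{m}\notin\Y$, these two atoms are distinct. Zero-mass outcomes contribute nothing under the convention $0\log 0=0$. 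So the inner sum equals $(1-\alpha_t)\log(1-\alpha_t)+\alpha_t\log\alpha_t$ regardless of the value of $y_0^{i,j}$. The outer expectation over the data distribution is therefore trivial, and this is exactly why the term does not depend on $\f_\ell$.

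Summing the $L\ell$ identical contributions gives $L\ell\big((1-\alpha_t)\log(1-\alpha_t)+\alpha_t\log\alpha_t\big)=-L\ell\,h(\alpha_t)$, which is the claim.

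There is no real obstacle here. The only point needing care is the boundary cases $\alpha_t\in\{0,1\}$. There one atom has zero mass, and we must invoke $0\log 0=0$ so that the expectation is taken only over the support of $q_\alpha(\vy_0,\vy_t)$ and the formula still holds.
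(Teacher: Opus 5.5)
Your proposal is correct and follows essentially the same route as the paper. Both factorize the kernel over the $L\ell$ sub-token positions, apply linearity of expectation, and evaluate each term as the two-atom sum $(1-\alpha_t)\log(1-\alpha_t)+\alpha_t\log\alpha_t$, which does not depend on $y_0^{i,j}$. Your explicit appeal to the convention $0\log 0=0$ at $\alpha_t\in\{0,1\}$ is a small point of care that the paper leaves implicit.
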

\begin{proof}
\begin{equation*}
\begin{aligned}
\E_{q_\alpha(\vy_0,\vy_t)}[\log q_\alpha(\vy_t\mid \vy_0)]&= \E_{q_\alpha(\vy_0,\vy_t)}\left[\log \left(\prod_{i=1}^L \prod_{j=1}^{\ell} q_\alpha(y_t^{i,j}\mid y_0^{i,j})\right)\right]\\
&= \sum_{i=1}^L \sum_{j=1}^{\ell}  \E_{q_\alpha(\vy_0,\vy_t)}\left[\log q_\alpha(y_t^{i,j}\mid y_0^{i,j})\right]\\
&= \sum_{i=1}^L \sum_{j=1}^{\ell} \E_{q_\alpha(\vy_0,\vy_t)}\left[\log \left((1-\alpha_t)\delta_{\texttt{m}}(y^{i,j}_t)+\alpha_t \delta_{y_0^{i,j}}(y^{i,j}_t)\right)\right]\\
&\stackrel{(i)}{=} \sum_{i=1}^L \sum_{j=1}^{\ell} \left( (1-\alpha_t) \log (1-\alpha_t) + \alpha_t \log \alpha_t \right)\\
&= L\ell \left( (1-\alpha_t) \log (1-\alpha_t) + \alpha_t \log \alpha_t \right),
\end{aligned}
\end{equation*}
where $(i)$ is because $y^{i,j}_t=\m$ with probability $1-\alpha_t$ while $y^{i,j}_t=y^{i,j}_0$ with probability $\alpha_t$ (cf. \textbf{Proposition A2} in~\cite{chao2025mdmprime}).
\end{proof}

\textbf{Proposition 3.1.}~The objective (Eq.~(\ref{eq:prime_elbo})) can be decomposed into an $\f_\ell$-independent and an $\f_\ell$-dependent term as follows:
\begin{equation}
\begin{aligned}
\inf_{\f_\ell} \inf_{p_\ell} \mathcal{L}^{(\ell)}  = \int_{0}^1 w(t) \Big(\underbrace{\mathcal{H}(\vy_0, \vy_t)}_\text{Independent on $\f_\ell$.}-\sup_{\f_\ell} \underbrace{\mathcal{H}(\vy_t)}_\text{Dependent on $\f_\ell$.} \Big)dt,
\end{aligned}
\end{equation}
where $\mathcal{H}(\vy_t)\triangleq \E_{q_\alpha(\vy_t)}[-\log q_\alpha(\vy_t)]$ and $\mathcal{H}(\vy_0, \vy_t)\triangleq \E_{q_\alpha(\vy_0,\vy_t)}[-\log q_\alpha(\vy_0,\vy_t)]$ represent the entropy of $\vy_t$ and the joint entropy of $\vy_0,\vy_t$, respectively.

\begin{proof}
\begin{equation*}
\begin{aligned}
&\inf_{\f_\ell} \inf_{p_\ell} \mathcal{L}^{(\ell)}  \\
&= \inf_{\f_\ell} \inf_{p_\ell} \int_{0}^1 w(t) \E_{q_\alpha(\vy_0,\vy_t)}\!\big[-\log p_\ell(\vy_0\mid \vy_t)\big]dt \\
&\stackrel{(i)}{=}\inf_{\f_\ell}\int_{0}^1 w(t) \E_{q_\alpha(\vy_0,\vy_t)}\!\big[-\log q_\text{data\_y}(\vy_0\mid \vy_t)\big]dt \\
&= \int_{0}^1 -w(t) \sup_{\f_\ell} \E_{q_\alpha(\vy_0,\vy_t)}\!\big[\log q_\text{data\_y}(\vy_0\mid \vy_t)\big]dt \\
&= \int_{0}^1 -w(t) \sup_{\f_\ell}\left(\E_{q_\alpha(\vy_0,\vy_t)}\!\big[\log q_{\text{data\_y}}(\vy_0) + \log q_\alpha(\vy_t\mid \vy_0) - \log q_\alpha(\vy_t)\big]\right)dt \\
&= \int_{0}^1 -w(t) \left(\underbrace{\E_{q_{\text{data\_y}}(\vy_0)}[\log q_{\text{data\_y}}(\vy_0)]}_\text{$(1)$ Independent on $\f_\ell$.} + \underbrace{\E_{q_\alpha(\vy_0,\vy_t)}[\log q_\alpha(\vy_t\mid \vy_0)]}_\text{$(2)$ Independent on $\f_\ell$.} + \sup_{\f_\ell}\underbrace{\E_{q_\alpha(\vy_t)}[-\log q_\alpha(\vy_t)]}_\text{$(3)$ dependent on $\f_\ell$.} \right)dt \\
&=\int_{0}^1 -w(t) \left(\E_{q_\alpha(\vy_0,\vy_t)}[\log q_\alpha(\vy_0,\vy_t)] + \sup_{\f_\ell} \E_{q_\alpha(\vy_t)}[-\log q_\alpha(\vy_t)] \right)dt, \\
&=\int_{0}^1 w(t) \Big(\underbrace{\mathcal{H}(\vy_0, \vy_t)}_\text{Independent on $\f_\ell$.}-\sup_{\f_\ell} \underbrace{\mathcal{H}(\vy_t)}_\text{Dependent on $\f_\ell$.} \Big)dt,
\end{aligned}
\end{equation*}
where $(i)$ is derived in \textbf{Lemma~\ref{lemma:log_likelihood_greater}}. Term $(1)$ corresponds to the negative data entropy, $-\mathcal{H}(\vx_0) \triangleq \mathbb{E}_{q_\text{data}(\vx_0)}[\log q_\text{data}(\vx_0)]$, which is independent of $\f_\ell$ according to \textbf{Lemma~\ref{lemma:data_entropy}}. Term $(2)$ represents the negative conditional entropy, $-\mathcal{H}(\vy_t|\vy_0)$, which equals to the constant $-L\ell [ (1-\alpha_t) \log (1-\alpha_t) + \alpha_t \log \alpha_t ]$ by \textbf{Lemma~\ref{lemma:kernel_entropy}}. Finally, term $(3)$ is $\mathcal{H}(\vy_t)$ and is the only component dependent on $\f_{\ell}$, making it the primary target of the optimization objective.
\end{proof}


Next, we show that the entropy of $\vy_t$ is bounded. We first present \textbf{Lemma~\ref{lemma:yt_entropy}} from \cite{austin2021structurediff, chao2025mdmprime}. Then, we provide the proof of \textbf{Proposition~\ref{proposition:entropy_max}}.

\begin{lemma}(cf. Proposition A2 in~\cite{chao2025mdmprime})
\label{lemma:yt_entropy}
\begin{equation}
\mathcal{H}(y^{i,j}_t)=\alpha_t \mathcal{H}(y^{i,j}_0)-(1-\alpha_t) \log (1-\alpha_t) - \alpha_t \log \alpha_t.
\end{equation}
\end{lemma}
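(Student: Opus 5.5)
We prove the identity by computing the marginal law of the single noised sub-token $y^{i,j}_t$ and splitting its entropy into a masking part and a content part. By Eq.~(\ref{eq:prime_kernel}), the marginal is
\begin{equation*}
q_\alpha(y^{i,j}_t)=(1-\alpha_t)\,\delta_{\texttt{m}}(y^{i,j}_t)+\alpha_t\, q_{\rm{data\_y}}(y^{i,j}_0=y^{i,j}_t)\,\mathbf{1}[y^{i,j}_t\in\Y].
\end{equation*}
It is obtained by summing the forward kernel against the marginal $q_{\rm{data\_y}}(y^{i,j}_0)$ of the $(i,j)$-th clean sub-token. The mask state carries mass $1-\alpha_t$. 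Each $y\in\Y$ carries mass $\alpha_t\,q_{\rm{data\_y}}(y^{i,j}_0=y)$. Because $\texttt{m}\notin\Y$, these supports are disjoint.

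Next we write the entropy as a sum over $\tilde{\Y}=\Y\cup\{\texttt{m}\}$ and split it into the $\texttt{m}$ term and the sum over $\Y$. The $\texttt{m}$ term is $-(1-\alpha_t)\log(1-\alpha_t)$. In the sum over $\Y$ we factor the logarithm as $\log\big(\alpha_t q(y)\big)=\log\alpha_t+\log q(y)$. The $\log\alpha_t$ part sums to $-\alpha_t\log\alpha_t\sum_{y}q(y)=-\alpha_t\log\alpha_t$, since $q$ is a probability distribution on $\Y$. The remaining part is $-\alpha_t\sum_y q(y)\log q(y)=\alpha_t\mathcal{H}(y^{i,j}_0)$. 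Adding the three pieces gives the claimed formula.

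Equivalently, the result is the grouping rule for entropy applied to the mask indicator $\delta_{\texttt{m}}(y^{i,j}_t)$. Since $\texttt{m}\notin\Y$, this indicator is a deterministic function of $y^{i,j}_t$, so
\begin{equation*}
\mathcal{H}(y^{i,j}_t)=\mathcal{H}\big(\delta_{\texttt{m}}(y^{i,j}_t)\big)+\mathcal{H}\big(y^{i,j}_t\mid \delta_{\texttt{m}}(y^{i,j}_t)\big).
\end{equation*}
The first term is the binary entropy $h(\alpha_t)$. The conditional term vanishes on the masked branch. On the unmasked branch, which has probability $\alpha_t$, it equals $\mathcal{H}(y^{i,j}_0)$, because the masking is independent of $y^{i,j}_0$.

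The only point needing care is that the masking event is independent of the clean value. This holds because the kernel's masking probability $1-\alpha_t$ does not depend on $y^{i,j}_0$, so conditioning on being unmasked leaves the law of $y^{i,j}_0$ unchanged. We also adopt the convention $0\log 0=0$ at the endpoints $\alpha_t\in\{0,1\}$. No other step is an obstacle; the argument is a direct computation.
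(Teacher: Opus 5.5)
Your proof is correct. The paper does not derive this identity at all: its "proof" only refers to Eq.~(A12) of \cite{chao2025mdmprime} and Section~A7 of \cite{austin2021structurediff}. Your argument therefore supplies the derivation the paper outsources and makes the lemma self-contained.

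You give two routes, and both are valid.
\begin{itemize}
\item The first is a direct computation. You write the marginal of $y^{i,j}_t$ as mass $1-\alpha_t$ on $\texttt{m}$ plus $\alpha_t$ times the clean marginal on the disjoint set $\Y$, then split the entropy sum.
\item The second is the chain rule with the mask indicator $\delta_{\texttt{m}}(y^{i,j}_t)$. This is the more informative one. It identifies $h(\alpha_t)$ as the entropy of the masking event and $\alpha_t\mathcal{H}(y^{i,j}_0)$ as the content entropy on the unmasked branch.
\end{itemize}

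The second route also makes explicit that the only structural fact used is that the masking probability does not depend on $y^{i,j}_0$. That is exactly the property that would break under a non-uniform masking kernel.

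One minor point on the first computation. The split $\log(\alpha_t q(y))=\log\alpha_t+\log q(y)$ should be taken only over the support of $q$ and for $\alpha_t>0$. Your $0\log 0=0$ convention handles this implicitly.
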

\begin{proof}
Please see Eq.~(A12) in \cite{chao2025mdmprime} and Section~A7 of \cite{austin2021structurediff}.
\end{proof}

\textbf{Proposition 3.2.}~The entropy of $\vy_t$ is bounded:
\begin{equation}
\mathcal{H}(\vy_t) \leq L\ell (h(\alpha_t)+\alpha_t \log b),
\end{equation}
where $h(\alpha_t)= -(1-\alpha_t) \log (1-\alpha_t) - \alpha_t \log \alpha_t$. The equality holds if and only if each unmasked $y_t^{i,j}$ is uniformly distributed on $\Y$.
\begin{proof}
\begin{equation*}
\begin{aligned}
\mathcal{H}(\vy_t) &\stackrel{(i)}{\leq} \sum_{i=1}^L \sum_{j=1}^{\ell} \mathcal{H}(y^{i,j}_t) \\
&\stackrel{(ii)}{=} \sum_{i=1}^L \sum_{j=1}^{\ell} \left(\alpha_t \mathcal{H}(y^{i,j}_0)-(1-\alpha_t) \log (1-\alpha_t) - \alpha_t \log \alpha_t \right)\\
&\stackrel{(iii)}{\leq} \sum_{i=1}^L \sum_{j=1}^{\ell} \left(\alpha_t \log b-(1-\alpha_t) \log (1-\alpha_t) - \alpha_t \log \alpha_t \right) \\
&=L\ell\left(\alpha_t \log b-(1-\alpha_t) \log (1-\alpha_t) - \alpha_t \log \alpha_t \right) \\
&=L\ell\left(\alpha_t \log b+h(\alpha_t) \right),
\end{aligned}
\end{equation*}
where $(i)$ is due to the subadditivity of entropy, $(ii)$ is due to \textbf{Lemma~\ref{lemma:yt_entropy}}, and $(iii)$ is because $|\Y|=b$. The equality holds when each sub-token is independent and uniformly distributed.
\end{proof}

\subsubsection{Estimating the Variational Bound of MDM-Prime}
\label{sec:apx:proof:nll}
The training objective of MDM-Prime (Eq.~(\ref{eq:prime_elbo})) requires an additional marginalization step to constitute a valid variational upper bound on the negative log likelihood (NLL). As the bound derivation assumes mutually independent sub-token predictions, the joint parameterization $p_\theta(\vy^i_0|\vy_t)$ in~\cite{chao2025mdmprime}, which introduces dependencies among sub-tokens within each token group, must first be marginalized. Specifically, let $\mathcal{M}_j\triangleq \{y^{i,k}_0 \,|\, k\neq j,\,\, y^{i,k}_0\in \Y\}$; the marginalized distribution is:
\begin{equation}
p_\theta(y^{i,j}_0|\boldsymbol{y}_t)\triangleq\sum_{\vy^{i,\,\neq j}_0 \in \mathcal{M}_j}p_\theta(y^{i,j}_0,\vy_0^{i,\,\neq j}|\vy_t).
\end{equation}
Substituting the fully factorized form $p_\theta(\vy_0|\vy_t)=\prod_{i=1}^L\prod_{j=1}^\ell p_\theta(y^{i,j}_0|\vy_t)$ into Eq.~(\ref{eq:prime_elbo}) then yields a valid variational upper bound on the NLL. This marginalization can be implemented efficiently via filtering on the model's logit outputs.
\subsection{Technical Specifications}
\label{sec:apx:specification}
This section provides further implementation details for the proposed framework. Section~\ref{sec:apx:specification:architecture} describes the architectures of MDM, MDM-Prime, and MDM-Prime-v2, while Section~\ref{sec:apx:specification:subtokenizer} details a practical implementation of the subtokenizer $\f_\ell$.

\subsubsection{Model Architecture of MDM-Prime and MDM-Prime-v2}
\label{sec:apx:specification:architecture}
Fig.~\ref{fig:architecture}~(a) compares the architectures of MDM, MDM-Prime, and MDM-Prime-v2. MDM-Prime and MDM-Prime-v2 share exactly the same architecture. As detailed in Section~\ref{sec:background:prime}, transitioning from MDM to MDM-Prime (or MDM-Prime-v2) requires only a simple modification of the embedding table~\cite{chao2025mdmprime}. In this setup, sub-token embeddings are aggregated into token embeddings, and the neural network (i.e., Transformer $(\theta)$ in the figure) subsequently processes at the token level. The decoder maintains its original dimensionality to effectively model intra-token dependencies. Specifically, let $H$ represent the hidden layer embedding size of the transformer. The transformers in both architectures accept inputs of size $\R^{H\times L}$ and outputs logits with $\R^{C\times L}$. This design maintains the core model architecture and ensures that the computational cost (FLOPs) per forward pass remains unchanged.

\begin{figure*}[t!]
    \centering
    \vspace{-0.5em}
    \includegraphics[width=\linewidth]{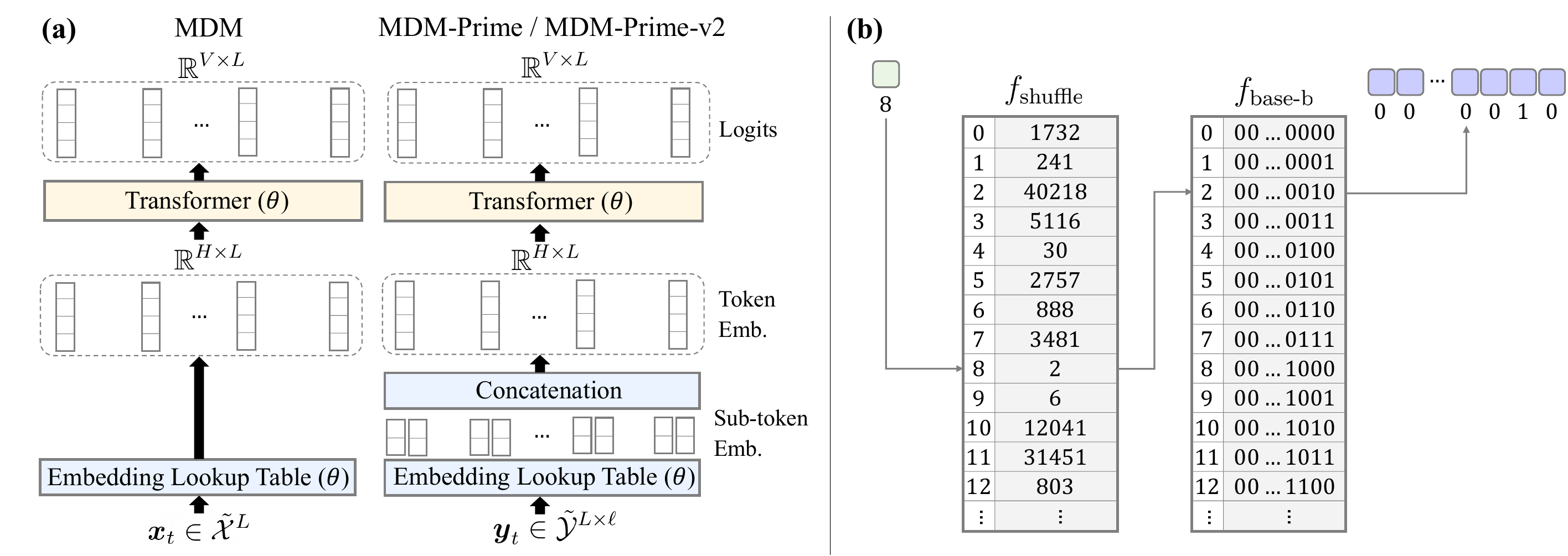}
    \vspace{-1em}
    \caption{(a) Comparison of MDM, MDM-Prime, and MDM-Prime-v2 architectures. $V$ denotes the vocabulary size, $L$ is the sequence length, $\ell$ is the token granularity, and $H$ is the Transformer hidden embedding dimension. (b) Implementation of the subtokenizer $f_\ell= f_\text{base-$b$} \circ f_\text{shuffle}$. The process utilizes two lookup tables to map indices efficiently; the inverse operation $f_\ell^{-1}$ is performed by reversing the lookup sequence. }
    \vspace{-1em}
    \label{fig:architecture}
\end{figure*}

\subsubsection{Subtokenizer Implementation}
\label{sec:apx:specification:subtokenizer}
Fig.~\ref{fig:architecture}~(b) depicts how the element-wise operation $f_\ell= f_\text{base-$b$} \circ f_\text{shuffle}$ of subtokenizer $\f_\ell$ is implemented in practice. This element-wise operation relies on lookup tables: to transform a token $x_0^i$ into a sub-token sequence $\vy_0^i$, the function sequentially retrieves the shuffled index and its corresponding base-$b$ representation. Similarly, the inverse operation $f^{-1}_\ell$ is executed via a reverse lookup. Both directions can be efficiently implemented using dictionaries in Python.

Please note that the $\f_\ell$ is static over training, and the lookup tables for index shuffling and base-$b$ encoding are also fixed prior to training. In Section \ref{sec:apx:experiments}, we show that the performance of MDM-Prime-v2 is robust to the choice of the random seed used for the index shuffling operation.

We do not recommend parameterizing and iteratively updating the subtokenizer during training for two reasons. First, our analysis in Section~\ref{sec:methodology:entropy} suggests that a non-learnable index shuffling operation is sufficiently effective and the sub-token entropy is near the theoretical maximum. Introducing an additional optimization phase for the subtokenizer is likely to yield only marginal gains while significantly increasing computational overhead. Second, updating $\f_\ell$ during training would induce a continuous shift in both the target distribution $q_{\text{data\_y}}(\vy_0)$ and the latent distribution $q_\alpha(\vy_t)$. This non-stationarity in the learning objective can result in severe training instability.

\begin{figure}[t]
    \centering
    \vspace{-0.25em}
    \includegraphics[width=\linewidth]{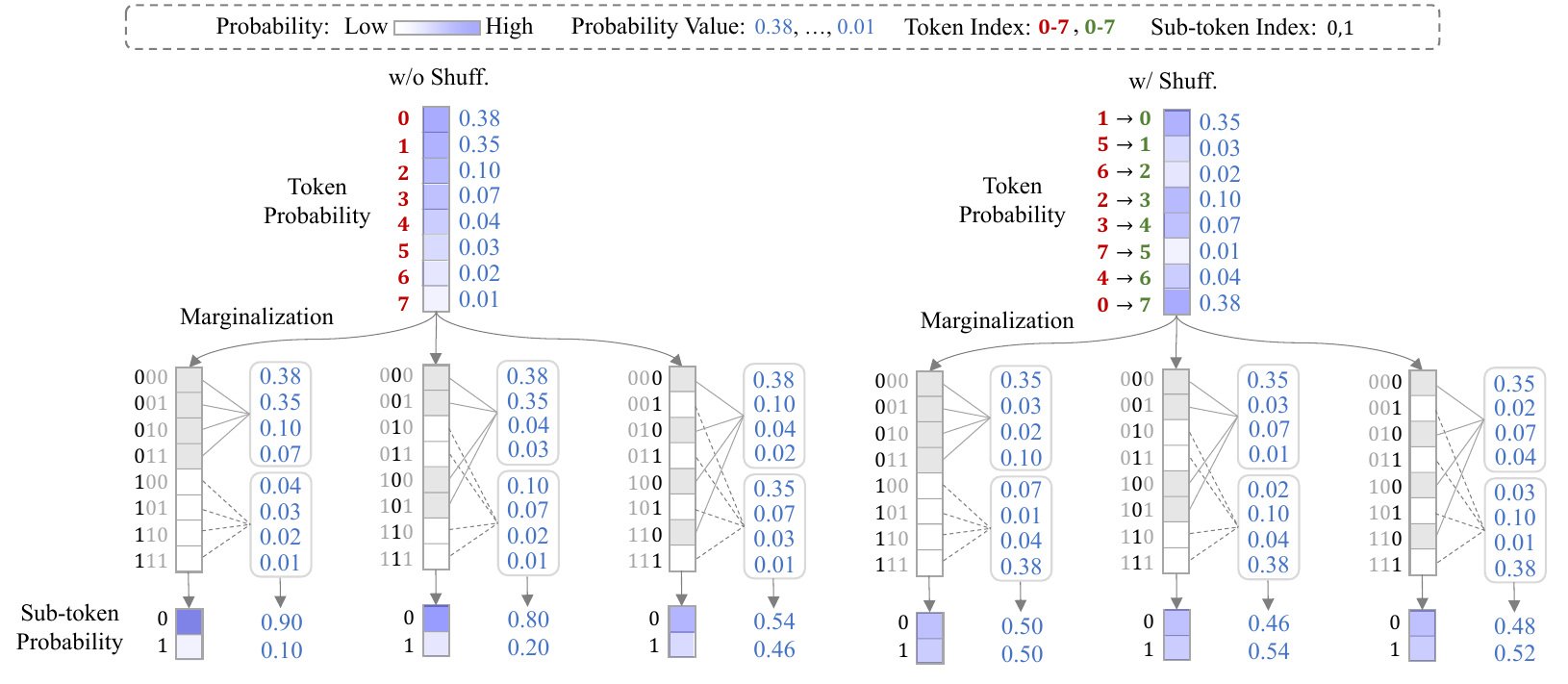}
    \vspace{-1.75em}
    \caption{A detailed numerical example of Fig.~\ref{fig:entropy}~(a). In this plot, $V=8$, $\ell=3$ and $b=2$. Color intensity represents probability values, where darker blue denotes higher probability. Marginalization is performed by summing entries corresponding to sub-token indices 0 (gray) and 1 (white), respectively. Sub-tokens encoded from standard indices with a structured distribution exhibit low entropy, while the shuffle operation results in sub-tokens with higher entropy.}
    \label{fig:apx:shuffle_example_numerical}
    \vspace{-0.75em}
\end{figure}

\subsection{Experimental Configurations}
\label{sec:apx:configuration}
This section provides supplemental experimental details and hyperparameter configurations. The settings for the experiments described in Sections~\ref{sec:experiment:scaling} and \ref{sec:experiment:1B_model} are detailed in Sections \ref{sec:apx:configuration:scaling} and \ref{sec:apx:configuration:1b}, respectively. 

\subsubsection{Experiments in Section 4.1}
\label{sec:apx:configuration:scaling}

\textbf{Dataset and Preprocessing Method.}~The training is performed on the C4 dataset~\cite{raffel2023exploringlimitstransferlearning}. Text data is tokenized using the GPT-2 tokenizer~\cite{radford2019language}. The maximum sequence length $L$ and the vocabulary size $V$ are defined as 2,048 and 50,257, respectively. The documents are concatenated with the \texttt{<eod>} token as a separator and then wrapped into samples of length 2,048.

\begin{table*}[b!]
\begin{minipage}{0.52\linewidth}
\renewcommand{\arraystretch}{1.08}
\newcommand{\boldtoprule}{\toprule[1.2pt]}
\newcommand{\boldbottomrule}{\bottomrule[1.2pt]}
\centering
\resizebox{\linewidth}{!} {%
\begin{tabular}{cccc|cccc}
\boldtoprule
$N$ & $H$ & $B$ & Heads  & $N$ & $H$ & $B$ & Heads  \\
\hline
14M & 448  & 6 & 7 & 252M & 1,024  & 20 & 16 \\
25M & 512  & 8 & 8 & 354M & 1,280 & 18  & 10 \\
36M & 576 & 9 & 9 & 413M & 1,280  & 21 & 10 \\
49M & 640 & 10 & 10 & 571M & 1,408 & 24 & 11  \\
64M & 640 & 13 & 10  & 771M & 1,792  & 20 & 14 \\
79M & 640  & 16 & 10 & 1,107M & 2,048  & 22 & 16 \\
106M & 768  & 15 & 12 & 1,529M & 2,304  & 24 & 18 \\
154M & 896  & 16 & 14 & 2,359M & 2,560 & 30 & 20 \\
201M & 1,024  & 16 & 16 & 3,426M & 2,816  & 36 & 22 \\
\boldbottomrule
\end{tabular}}
\caption{Model architecture configurations. $N$ represents the number of non-embedding parameters, $H$ denotes the hidden dimension size,  $B$ is the number of Transformer blocks, and `Heads' refer to the number of attention heads.}
\label{table:parameters}
\end{minipage}\hfill
\begin{minipage}{0.45\linewidth}
\renewcommand{\arraystretch}{1.2}
\newcommand{\boldtoprule}{\toprule[1.2pt]}
\newcommand{\boldbottomrule}{\bottomrule[1.2pt]}
\centering
\resizebox{\linewidth}{!} {%
\begin{tabular}{cccc}
\boldtoprule
 & ARM & MDM & MDM-Prime-v2 \\
\hline
$\alpha$ & $0.309^{+0.030}_{-0.017}$ & $0.345^{+0.016}_{-0.038}$ & $0.379^{+0.023}_{-0.024}$ \\
$\beta$  & $0.334^{+0.013}_{-0.036}$ & $0.342^{+0.027}_{-0.017}$ & $0.380^{+0.014}_{-0.018}$ \\
$A$      & $217^{+140}_{-49}$        & $355^{+98}_{-158}$        & $502^{+210}_{-156}$ \\
$B$      & $1096^{+304}_{-531}$      & $1969^{+1430}_{-551}$     & $4324^{+1463}_{-1348}$ \\
$E$      & $2.029^{+0.042}_{-0.077}$ & $2.420^{+0.049}_{-0.052}$ & $1.578^{+0.037}_{-0.056}$ \\
\boldbottomrule
\end{tabular}}
\caption{Regression parameters for ARM, MDM, and MDM-Prime-v2 fitted according to Chinchilla's scaling laws~\cite{hoffmann2022chinchilla}. $E$ denotes the irreducible loss (entropy of the natural text distribution). The terms $A\times N^{-\alpha}$ and $B\times D^{-\beta}$ account for errors arising from finite model size and limited training tokens, respectively.}
\label{table:scaling_parameters}
\end{minipage}
\end{table*}

\textbf{Training Configuration.}~We employ a Transformer architecture~\cite{vaswani2017transformer} incorporating Rotary Positional Embeddings (RoPE)~\cite{su2023roformerenhancedtransformerrotary}, SwiGLU activation~\cite{shazeer2020gluvariantsimprovetransformer}, RMSNorm~\cite{zhang2019rootmeansquarelayer}, and QK-normalization~\cite{dehghani2023qknorm}. Model sizes ($N$) range from 14M to 3.4B parameters; detailed specifications for depth, hidden dimensions, and attention heads are summarized in Table~\ref{table:parameters}. We utilize a linear scheduling function $\alpha_t = 1 - t$ for the diffusion process. Optimization is performed using AdamW~\cite{loshchilov2019adamw} with $\beta_1 = 0.9$ and $\beta_2 = 0.999$. The learning rate is set to $2 \times 10^{-4}$ with a cosine decay to a minimum of $2 \times 10^{-5}$. We use a batch size of 256 for $N \leq 1.5$B and 512 for larger models. The coding framework is based on Megatron-LM~\cite{megatron-lm}. For hardware, models with $N \leq 400$M are trained on eight NVIDIA L40 GPUs (48 GB), while larger models ($N > 400$M) are trained on eight NVIDIA H100 GPUs (80 GB).

\subsubsection{Experiments in Section 4.2}
\label{sec:apx:configuration:1b}
\begin{table}[t]
    \centering
    \small 
    \renewcommand{\arraystretch}{1} 
    \newcommand{\boldtoprule}{\toprule[1.2pt]}
    \newcommand{\boldbottomrule}{\bottomrule[1.2pt]}
    \begin{tabular}{lcccccccc} 
        \boldtoprule
                 & SocialIQA & OBQA & McTaco & ARC-e & TruthfulQA & BoolQ & SciQ  & ANLI  \\ 
        \midrule
        Steps    & 1,024  & 512 & 512  & 512 & 256 & 256 & 256 & 128 \\ 
        \boldbottomrule
    \end{tabular}
    \vspace{-0.25em}
    \caption{Number of steps used for zero-shot evaluation of MDM-Prime-v2. SMDM is fixed at 1,024 steps; MDM-Prime-v2 uses no more than 1,024 steps.}
    \label{tab:apx:step_setup}
\end{table}

\textbf{Dataset and Preprocessing Method.}~Following TinyLLaMA~\cite{zhang2024tinyllama} and SMDM~\cite{nie2025scalingmaskeddiffusionmodels}, training is conducted on the SlimPajama dataset \cite{cerebras2023slimpajama}, with text tokenized using the LLaMA tokenizer. The maximum sequence length $L$ is set to 2,048, and the vocabulary size $V$ is 32,000. Additionally, following the approach in \cite{nie2025scalingmaskeddiffusionmodels}, we randomize the sequence length for 1\% of the training data. This is implemented by sampling a length $L'$ from a uniform distribution $\mathcal{U}(1, 2048)$ for each affected sequence.

\textbf{Evaluation Method.}~We utilize the \texttt{lm-evaluation-harness} package~\cite{eval-harness} to measure zero-shot accuracy on eight commonsense reasoning benchmarks: SciQ~\cite{Welbl2017CrowdsourcingMC}, SocialIQA~\cite{sap2019social}, MCTaco~\cite{zhou2019goingvacationtakeslonger}, TruthfulQA~\cite{lin-etal-2022-truthfulqa}, BoolQ~\cite{clark2019boolq}, ANLI~\cite{nie-etal-2020-adversarial}, and ARC-e (Easy)~\cite{Clark2018ThinkYH}. Detailed task descriptions are provided in Table~\ref{tab:apx:commonsense}. For prediction, we concatenate each answer candidate with the question prompt and select the option that yields the highest likelihood under the diffusion model.

To evaluate sample quality, we use generative perplexity (Gen PPL) and entropy as complementary metrics, measuring sample fidelity and diversity respectively. Since Gen PPL can be manipulated through improper temperature selection at the expense of entropy~\cite{zheng2025mdm}, we tune the sampling temperatures and seeds of SMDM and MDM-Prime-v2 to yield comparable entropy values, ensuring a fair Gen PPL comparison. Gen PPL is computed using TinyLLaMA~\cite{zhang2024tinyllama}, trained on 3T tokens.

\begin{table}[t]
    \centering
    \small 
    \renewcommand{\arraystretch}{1.1} 
    \newcommand{\boldtoprule}{\toprule[1.2pt]}
    \newcommand{\boldbottomrule}{\bottomrule[1.2pt]}
    \begin{tabularx}{\linewidth}{lX} 
        \boldtoprule
        \textbf{Benchmark} & \textbf{Description} \\ 
        \midrule
        SciQ~\cite{Welbl2017CrowdsourcingMC} & A collection of crowdsourced multiple-choice science exam questions covering Biology, Chemistry, and Physics. \\
        SocialIQA~\cite{sap2019social} & A large-scale benchmark testing understanding of social interactions, emotional reactions, and motivations. \\
        McTaco~\cite{zhou2019goingvacationtakeslonger} & A dataset focusing on temporal commonsense reasoning, requiring answers about duration, frequency, and event ordering. \\
        TruthfulQA~\cite{lin-etal-2022-truthfulqa} & A benchmark measuring whether models mimic human falsehoods, focusing on misconceptions and superstitions. \\
        BoolQ~\cite{clark2019boolq} & Naturally occurring yes/no questions paired with Wikipedia paragraphs containing the answer. \\
        ANLI~\cite{nie-etal-2020-adversarial} & Adversarial natural language inference dataset collected via a human-and-model-in-the-loop process. \\
        ARC-e~\cite{Clark2018ThinkYH} & The easy subset of the AI2 Reasoning Challenge, containing grade-school science questions. \\
        OBQA~\cite{mihaylov2018suitarmorconductelectricity} & A QA dataset modeled after open book exams, containing elementary science questions that require reasoning with common knowledge. \\
        \boldbottomrule
    \end{tabularx}
    \vspace{-0.25em}
    \caption{An overview of the commonsense reasoning benchmarks used for evaluation.}
    \label{tab:apx:commonsense}
\end{table}

\textbf{Training and Implementation Details.}~We utilize a Transformer architecture \cite{vaswani2017transformer} enhanced with RoPE \cite{su2023roformerenhancedtransformerrotary}, SwiGLU activation \cite{shazeer2020gluvariantsimprovetransformer}, RMSNorm \cite{zhang2019rootmeansquarelayer}, and Grouped Query Attention (GQA) \cite{ainslie2023gqatraininggeneralizedmultiquery}. The model contains 1.028B non-embedding parameters and is trained on 540B tokens. This configuration is identical to our MDM and ARM baselines (i.e., SMDM~\cite{nie2025scalingmaskeddiffusionmodels} and TinyLLaMA~\cite{zhang2024tinyllama}). We adopt the convention of these works and refer to the model as 1.1B. For the diffusion process, we utilize a linear schedule defined by $\alpha_t = 1 - t$.  We set the grouping factor $u$ to $1$ if $t<0.5$ and $\ell$ if $t\geq0.5$ (see Section~\ref{sec:apx:experiments:error_distribution}).  Training is performed using the AdamW optimizer \cite{loshchilov2019adamw} with hyperparameters $\beta_1 = 0.9$ and $\beta_2 = 0.999$. We set the maximum learning rate to $2 \times 10^{-4}$, which follows a cosine decay schedule to a minimum of $2 \times 10^{-5}$. We use a batch size of 256. Our implementation is built upon the lit-gpt framework \cite{lit-gpt}, leveraging FlashAttention \cite{dao2022flashattentionfastmemoryefficientexact} and xFormers \cite{xFormers2022} for enhanced efficiency. The model was trained on sixteen NVIDIA H100 GPUs (80 GB) with a multi-node configuration.

\subsection{Supplementary Experiments}
\label{sec:apx:experiments}
In this section, we present additional experimental results. Section~\ref{sec:apx:experiments:performance_invariant} examines design choices that yield performance invariance. Section~\ref{sec:apx:experiments:error_distribution} provides further analysis of the log-likelihood distribution across time. Section~\ref{sec:apx:experiments:arm_different_l} details the performance of ARM trained under the sub-tokenization scheme.  Section~\ref{sec:apx:experiments:ablation} provides detailed ablation analysis on the proposed two techniques. Section~\ref{sec:apx:experiments:scaling} presents additional scaling analysis.  Finally, Section~\ref{sec:apx:experiments:7B} presents preliminary results of MDM-Prime-v2 at 7B-parameter scale. 


\subsubsection{Performance-Invariant Designs}
\label{sec:apx:experiments:performance_invariant}
In Section~\ref{sec:experiment:scaling}, we analyze two performance-sensitive hyperparameters: the number of non-embedding parameters ($N$) and the number of training tokens ($D$). This section presents a number of performance-invariant designs. The results are shown in Fig.~\ref{fig:apx:ablation}.
\begin{figure}[t]
    \centering
    \includegraphics[width=\linewidth]{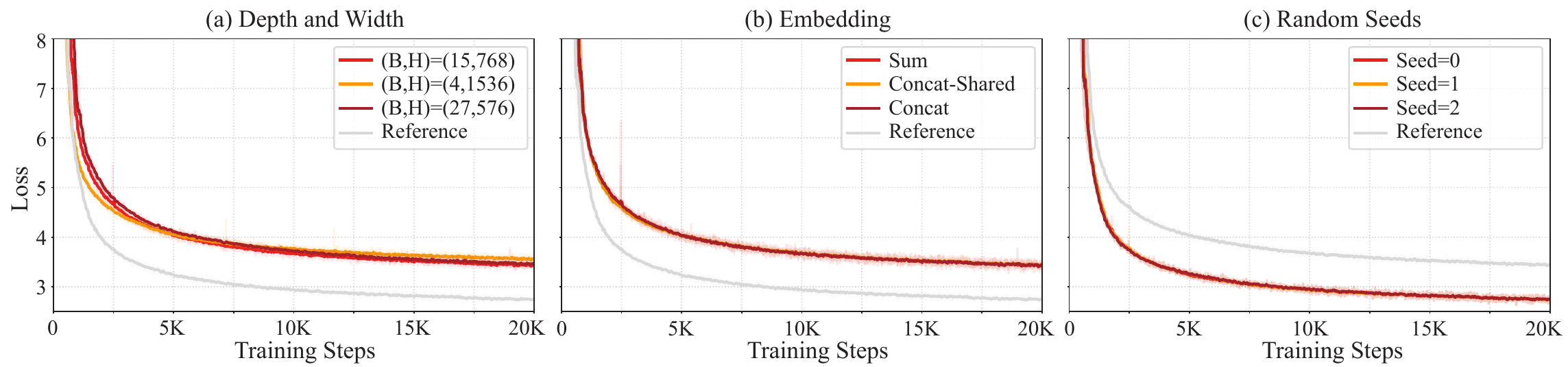}
    \vspace{-1em}
    \caption{Analysis of performance invariance in MDM-Prime and MDM-Prime-v2. The plots show robustness against variations in: (a) model architecture depth and width; (b) embedding table designs; and (c) random seeds for the index shuffling operation.}
    \vspace{-1em}
    \label{fig:apx:ablation}
\end{figure}

\textbf{(a) Depth and Width of the Model.}~As previously established in~\cite{kaplan2020scalinglawsneurallanguage}, the specific shape of a Transformer architecture has a marginal influence on its final performance. Our observations align with their findings. To study this property, we vary the hidden dimension ($H$) and the number of Transformer blocks ($B$) while maintaining a fixed model size. Specifically, we follow the approximation $N \approx 12 B H^2$, increasing the number of blocks quadratically as the hidden dimension is reduced. We compare three distinct $(B,H)$ configurations: $(15,768)$, $(27,576)$, and $(4,1536)$, with their corresponding training curves illustrated in Fig.~\ref{fig:apx:ablation}. For a clear comparison, we adopt the original MDM-Prime ($\ell=16$) and evaluate the impact of varying $B$ and $H$ against a reference curve for MDM-Prime-v2 ($\ell=16$) with $(B,H)=(15,768)$ (shown in gray). We observe that the performance is not sensitive to the model shape.

\textbf{(b) Sub-token Embeddings.}~To adapt an MDM architecture to that of MDM-Prime, input sub-token embeddings are merged into token embeddings prior to the Transformer layers; this prevents an increase in model size~\cite{chao2025mdmprime}. We observe that performance remains invariant across different merging methods. In this experiment, we evaluate three designs: (1) \textit{Concat-Shared}: the original concatenation operation proposed in~\cite{chao2025mdmprime}; (2) \textit{Concat}: a concatenation operation using non-shared embedding tables for sub-tokens at different positions; and (3) \textit{Sum}: a summation operation using the non-shared design from (2). The results are presented in Fig.~\ref{fig:apx:ablation}~(b). Similarly, we include MDM-Prime-v2 ($\ell=16$) with \textit{Concat-Shared} as a reference curve (shown in gray). We observe that the performance is invariant to the choice of the embedding table designs.

\textbf{(c) Random Seed of the Index Shuffling Operation.}~In MDM-Prime-v2, we introduce an index shuffling operation within the subtokenizer $\f_\ell$. To evaluate the robustness of this design to random seed initialization, we independently train three models using different seeds in the random index shuffling operation. The performance of these runs is presented in Fig.~\ref{fig:apx:ablation}~(c), alongside MDM-Prime ($\ell=16$) as a reference baseline (gray line). We observe that the performance is robust to the choice of random seeds.

\subsubsection{Posterior Log-likelihood Distribution Across Time}
\label{sec:apx:experiments:error_distribution}
In Section~\ref{sec:experiment}, we established that MDM-Prime-v2 exhibits improved posterior log-likelihood modeling (i.e., the loss) compared to MDM. To investigate the temporal distribution of this improvement, we computed the log-likelihood at each timestep $t$, specifically $\E_{q_\alpha(\vx_0,\vx_t)}[\log p(\vx_0|\vx_t)]$ for MDM and $\E_{q_\alpha(\vy_0,\vy_t)}[\log p_\ell(\vy_0|\vy_t)]$ for MDM-Prime, omitting the common weighting term $\frac{\alpha'}{1-\alpha_t}$. These results are presented in Fig.~\ref{fig:apx:error_distribution}, where (a) and (b) show the settings without and with Technique 1, respectively.

We observe that MDM-Prime is not consistently superior across all timesteps. There is a distinct crossover point at approximately $t \approx 0.6$ in plot (a) and $t \approx 0.75$ in plot (b). These results indicate that the original MDM performs better when the sequence contains a high ratio of masked tokens (large $t$). On the other hand, MDM-Prime outperforms MDM when sufficient unmasked tokens are present (small $t$). We hypothesize that coarse, token-level prediction is more robust when context is scarce, whereas fine-grained denoising becomes advantageous when sufficient context is available. Motivated by this observation, we introduce a minor modification to the sub-token sampling process: we apply token-wise joint masks for $t=0.5$ in the experiments of Section~\ref{sec:experiment:1B_model}. Let $u$ denote the sub-token group size, which equals to $1$ if $t<0.5$ and $\ell$ if $t\geq0.5$. $\vy^{i,k}_t=[y_t^{i,k*u},\cdots,y_t^{i,(k+1)*u}]$. The modified forward kernel groups sub-tokens is formulated as:
\begin{equation}
\begin{aligned}
q_\alpha(\vy_t|\vy_0)\triangleq \prod_{i=1}^L\prod_{k=1}^{\ell/u} q_\alpha(\vy^{i,k}_t|\vy_0)
\end{aligned}
\end{equation}
This modification incurs no additional FLOPs during training, integrates seamlessly into the existing pipeline, and yields consistent improvements in generation quality.

A second finding is that the index shuffling operation consistently improves log-likelihood across all timesteps, as evidenced by the upward shift in plot (b) for all token granularities $\ell$. This shift moves the crossover point from $t \approx 0.65$ to $t \approx 0.75$. These results demonstrate the effectiveness of the proposed Technique 1.

\begin{table}[t]
\vspace{0.5em}
\begin{minipage}{0.28\linewidth}
\newcommand{\boldtoprule}{\toprule[1.2pt]}
\newcommand{\boldbottomrule}{\bottomrule[1.2pt]}
\renewcommand{\arraystretch}{1.3}
\centering
\begin{center}
    \small
    \begin{tabular}{lccc}
        \boldtoprule
                  & $N$ &   $D$  &  Loss ($\downarrow$) \\
        \hline
        $\ell$=1 &  92M & 52.4B  & 2.925 \\
        $\ell$=4 &  92M & 52.4B  & 3.504 \\
        $\ell$=8 &  92M & 52.4B  & 3.962 \\
        \boldbottomrule
    \end{tabular}
    \vspace{-0.5em}
    \caption{Loss of ARM trained with $\ell$=1, 4, and 8 on OWT. $N$ and $D$ represents the model size and the number of training tokens, respectively. The symbol $\downarrow$ represents that lower values correspond to better performance.}
    \label{tab:apx:arm_l}
\end{center}
\end{minipage}\hfill
\begin{minipage}{0.7\linewidth}
    \centering
    \includegraphics[width=0.975\linewidth]{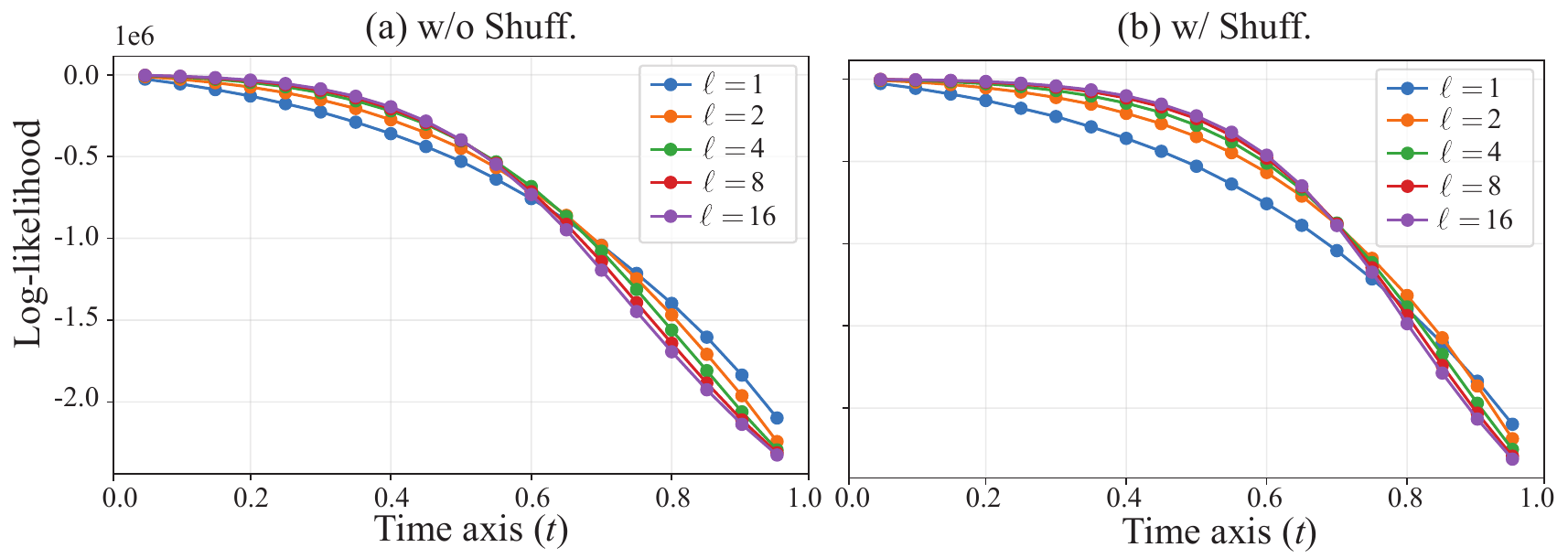}
    \vspace{-0.5em}
    \captionof{figure}{Log-likelihood of MDM ($\ell=$1) and MDM-Prime ($\ell>$1) across time. Subplots (a) and (b) show the results without and with the index shuffling operation, respectively. The maximum value for each curve is 0.}
    \label{fig:apx:error_distribution}
\end{minipage}
\vspace{-0.5em}
\end{table}

\subsubsection{Autoregression on Sub-tokens}
\label{sec:apx:experiments:arm_different_l}
This section presents the results for ARM trained with sub-tokenization using $\ell \in \{1, 4, 8\}$. Let $y_0^k$ denote $y_0^{i,j}$ where $i=\lceil k/\ell \rceil$ and $j=k\%\ell$. The negative log-likelihood of these models is expressed as follows: 
\begin{equation}
\begin{aligned}
\E_{q_\text{data\_y}(\vy_0)}\left[-\sum_{k=1}^{L\ell}\log p(y_0^{k}|\vy_0^{<k})\right],
\end{aligned}
\end{equation}
where $\vy_0^{<k}=[y_0^{1}, \cdots, y_0^{k-1}]$. Each model contains 92M non-embedding parameters and was trained on 52.4B tokens from the OWT dataset~\cite{gokaslan2019owt}. Table~\ref{tab:apx:arm_l} summarizes the results. Contrary to MDM, ARM exhibits performance degradation at higher $\ell$ values. We hypothesize this is because the predictive performance of ARM naturally declines as sequence length increases~\cite{rashkin2020plotmachinesoutlineconditionedgenerationdynamic, pillutla2021mauve}. Furthermore, unlike MDM-Prime, which uses a token-space model architecture (Section~\ref{sec:apx:specification:architecture}), ARM with sub-tokens requires explicit factorization, which increases the total FLOPs by $\ell$ times. For these reasons, we select the token-level ARM as a baseline in Sections~\ref{sec:experiment:scaling} and \ref{sec:experiment:1B_model}.

\begin{figure*}[t!]
    \centering
    \vspace{-0.5em}
    \includegraphics[width=\linewidth]{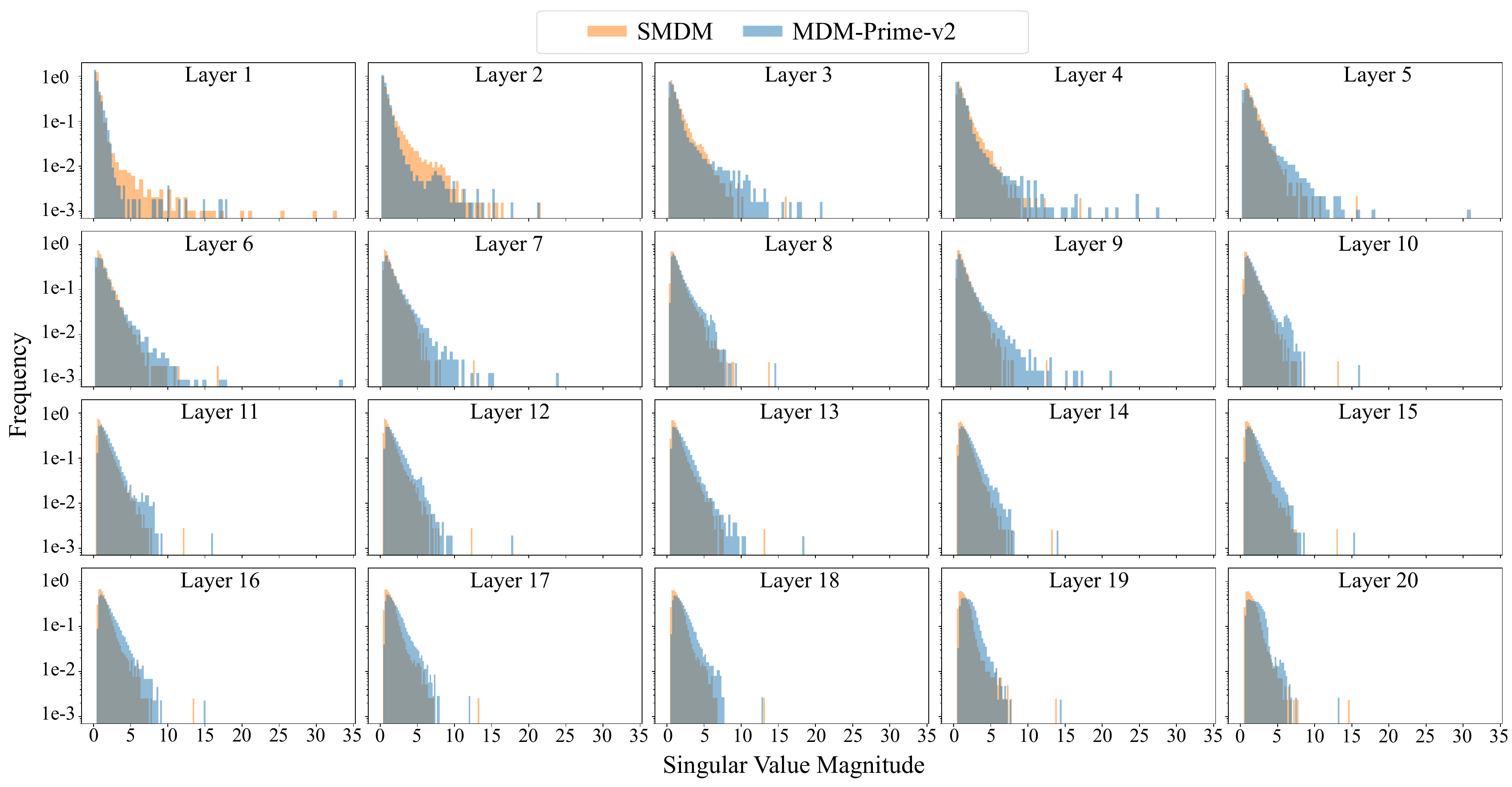}
    \vspace{-1.895em}
    \caption{Singular value distributions of the query-key-value projection matrices of SMDM (1.1B)~\cite{nie2025scalingmaskeddiffusionmodels} and MDM-Prime-v2 (1.1B). Both SMDM and MDM-Prime-v2 adopt the same architecture containing 20 layers. The y-axis represents frequency on a log scale. Compared to MDM, MDM-Prime-v2 exhibits a heavier-tailed distribution, particularly in intermediate layers (i.e., Layers 3–12).}
    \vspace{-0.5em}
    \label{fig:apx:spectra}
\end{figure*}

\begin{figure*}[h!]
    \centering
    \includegraphics[width=\linewidth]{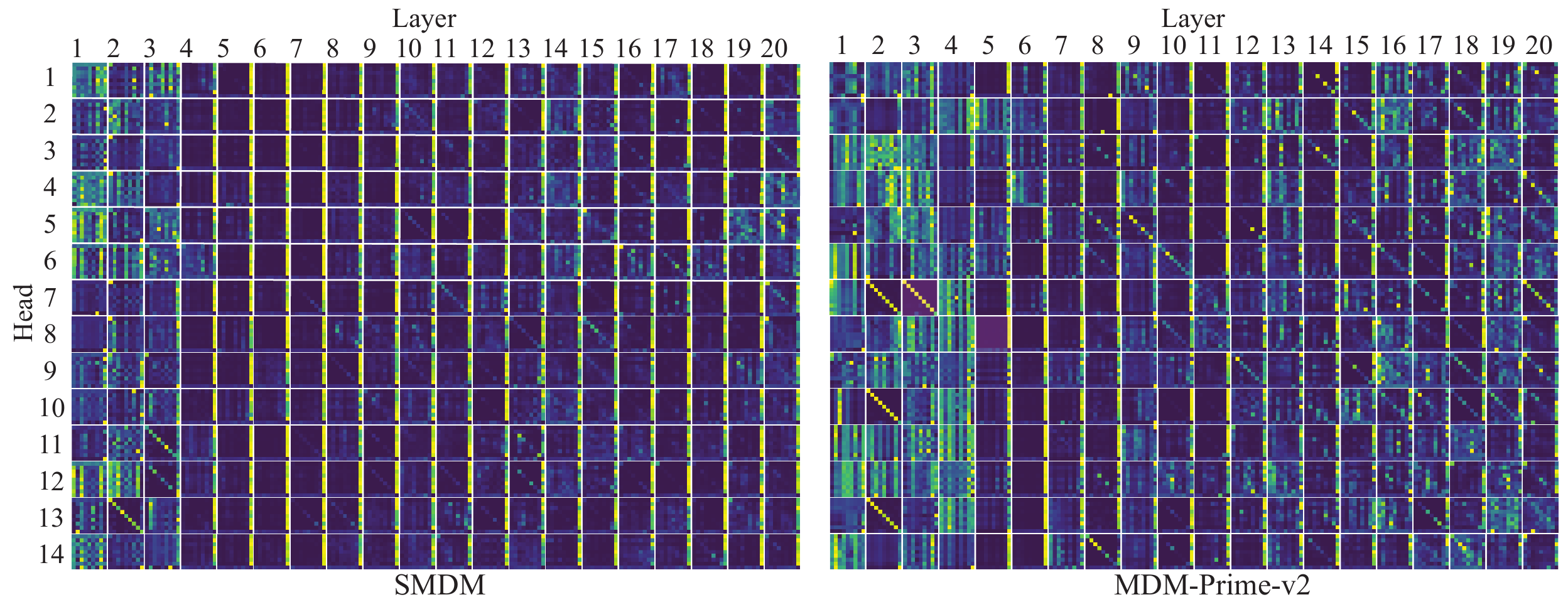}
    \vspace{-1.8em}
    \caption{Attention score patterns of SMDM and MDM-Prime-v2. Both models adopt the same model architecture containing 20 layers and 14 attention heads. SMDM exhibits vertical stripes across deeper layers (e.g., Layers 4–20). In contrast, MDM-Prime-v2 displays diagonal structures and a richer diversity of patterns.}
    \label{fig:apx:attention}
\end{figure*}

\begin{table}[t]
    \renewcommand{\arraystretch}{1.05}
    \newcommand{\boldtoprule}{\toprule[1.2pt]}
    \newcommand{\boldmidrule}{\midrule[0.5pt]}
    \newcommand{\boldbottomrule}{\bottomrule[1.2pt]}
    \centering
    \resizebox{\linewidth}{!}{%
    \begin{tabular}{cc|c|cccccccc|c}
        \boldtoprule
        Technique 1  & Technique 2 & Loss & SciQ & SocialIQA & McTaco & TruthfulQA & BoolQ & ANLI & ARC-e  & OBQA & Avg. \\
        \boldmidrule
                     &              & 2.63 & 68.7  & 34.2  &  53.2 &  26.0 & 48.4 & 34.9  & 33.6 & 21.2  & 40.02 \\
        $\checkmark$ &              & 2.34 & 69.5 & 35.3 & \textbf{65.9} & \textbf{28.4} & 48.8 & 34.4 & 34.0 & 23.8  & 42.51 \\
        $\checkmark$ & $\checkmark$ & \textbf{2.31} & \textbf{73.6} & \textbf{36.2} & 64.9 & 26.6 & \textbf{50.0} &  \textbf{35.4} &  \textbf{39.2} & \textbf{24.2} & \textbf{43.76} \\
        \boldbottomrule
    \end{tabular}}
    \vspace{-0.25em}
    \caption{Ablation study of the proposed two techniques on zero-shot benchmarks. The models contain 170M parameters and are trained on SlimPajama. The original MDM-Prime model is trained with $\ell=3$. MDM-Prime-v2 adopts Technique 1 (random shuffling) and Technique 2 (binary encoding) with $\ell=15$.}
    \vspace{-0.45em}
    \label{tab:experiment:zeroshot_qa_170}
\end{table}

\subsubsection{Ablation Analysis}
\label{sec:apx:experiments:ablation}
This section presents an ablation analysis of the two proposed techniques. We train MDM-Prime (170M parameters) on the SlimPajama dataset and report evaluation results across eight zero-shot benchmarks in Table~\ref{tab:experiment:zeroshot_qa_170}. Both techniques consistently reduce validation loss and improve zero-shot accuracy across all eight commonsense reasoning benchmarks. These results confirm two key findings: (1) lower validation loss correlates with stronger commonsense reasoning performance in MDM-Prime, and (2) the proposed techniques effectively reduce loss and improve accuracy. With the two techniques, the average accuracy increases from 40.02\% to 43.76\% with 3.74\% gains.
\begin{figure}[t]
    \centering
    \vspace{-0.25em}
    \includegraphics[width=0.975\linewidth]{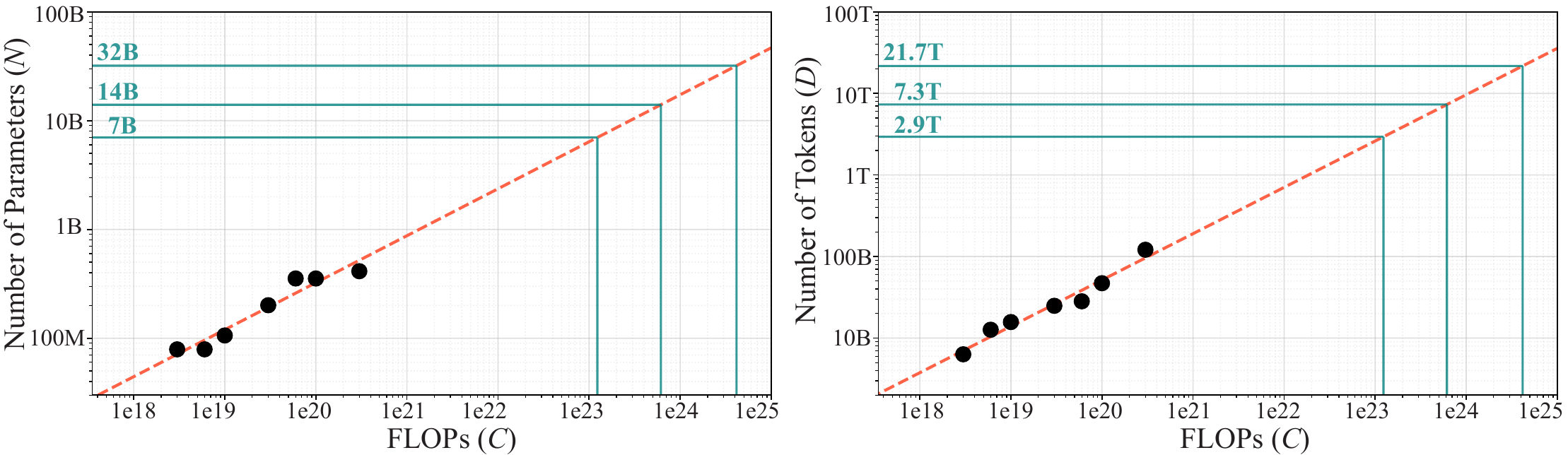}
    \vspace{-0.5em}
    \caption{Compute-optimal allocation of the number of non-embedding parameters ($N$) and training tokens ($D$) for MDM-Prime-v2 under different FLOPs ($C$). Left: Optimal $N$ as a function of $C$. Right: Optimal $D$ as a function of $C$. Black points represent the compute-optimal empirical samples. The red dashed lines indicate the fitted power-law regression extended to larger budgets. Teal annotations highlight the training token counts required to optimally train models of specific target sizes (e.g., 7B, 14B, and 32B).}
    \vspace{-1em}
    \label{fig:apx:scaling_curve_prime}
\end{figure}

\subsubsection{Compute-Optimal Scaling of Parameters and Training Tokens}
\label{sec:apx:experiments:scaling}

To guide future large-scale pretraining research for MDM-Prime-v2, we extend this analysis to determine the compute-optimal allocation of non-embedding parameters ($N$) and training tokens ($D$) for budgets up to $10^{25}$ FLOPs. As illustrated in Fig.~\ref{fig:apx:scaling_curve_prime}, our predictions indicate that for target model sizes of 7B, 14B, and 32B, the corresponding optimal training token counts are 2.9T, 7.3T, and 21.7T, respectively.

\begin{table}[t]
\vspace{1em}
\begin{minipage}{0.38\linewidth}
\newcommand{\boldtoprule}{\toprule[1.2pt]}
\newcommand{\boldbottomrule}{\bottomrule[1.2pt]}
\renewcommand{\arraystretch}{1.3}
\centering
\vspace{-0.25em}
\begin{center}
    \small
    \resizebox{\linewidth}{!}{%
    \begin{tabular}{lccc}
        \boldtoprule
        Tokens     & SciQ &  PIQA  &  ARC-Challenge \\
        \hline
        0.2T &  90.67  &  55.33  & 26.00  \\
        0.5T &  92.67  &  57.33  & 28.67  \\
        1.2T &  92.67  &  59.33  & 30.00 \\
        \boldbottomrule
    \end{tabular}}
    \vspace{-0.5em}
    \caption{Zero-shot evaluation of MDM-Prime-v2 (7B) on SciQ, PIQA, and ARC-Challenge as a function of training tokens. All metrics are accuracy-based; higher values indicate better performance. The results are evaluated with 256 Monte Carlo inference steps and 150 instances on each task.}
    \label{tab:apx:7B}
\end{center}
\end{minipage}\hfill
\begin{minipage}{0.58\linewidth}
    \centering
    \includegraphics[width=\linewidth]{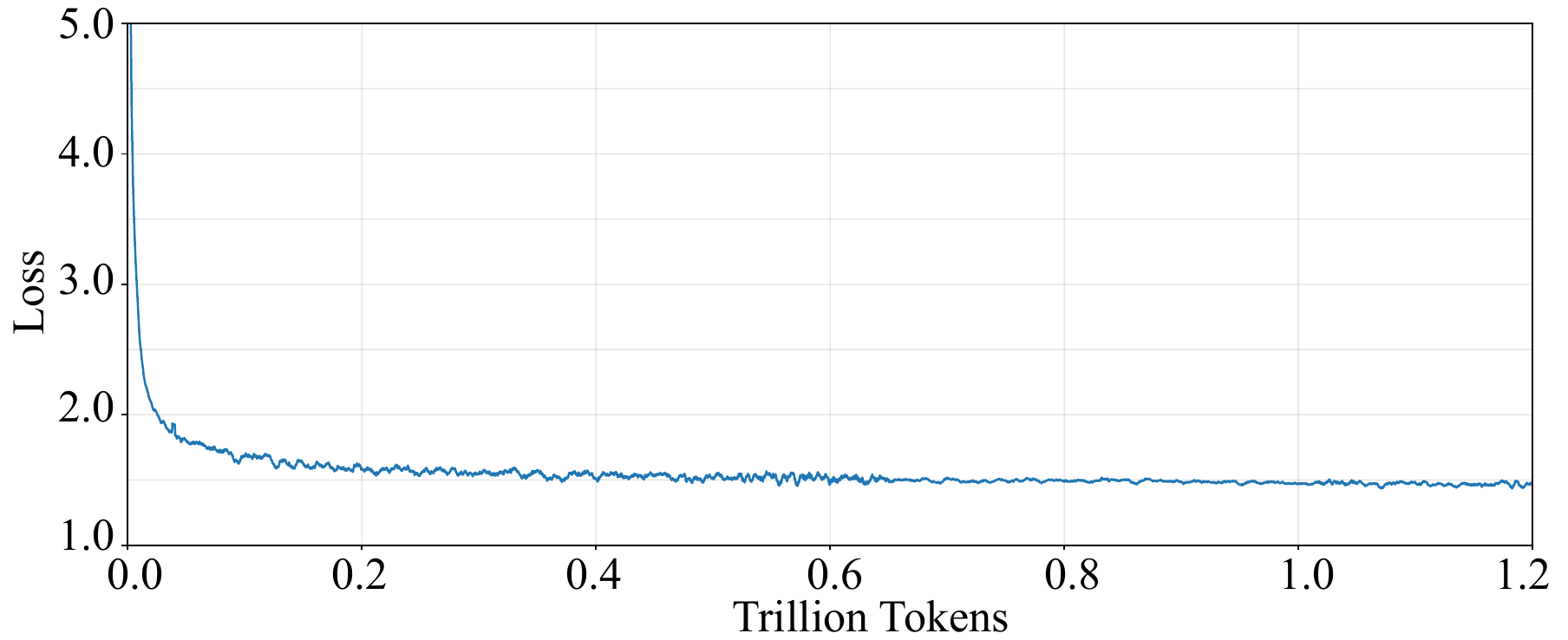}
    \vspace{-1.6em}
    \captionof{figure}{Training loss curve of MDM-Prime-v2 (7B) over the course of pre-training. The model is trained on 1.2T tokens using the Dolma-6T dataset.}
    \label{fig:7B}
\end{minipage}
\vspace{-0.5em}
\end{table}

\subsubsection{Preliminary Results on 7B-Parameter Scale}
\label{sec:apx:experiments:7B}
Current large-scale diffusion language models (e.g., LLaDA 8B~\cite{nie2025llmdiff}, Dream 7B~\cite{ye2025dream7bdiffusionlarge}) are not fully open-sourced and cannot be reproduced. The primary reason is that these models are trained on proprietary datasets, preventing future work from studying their pre-training dynamics. This motivates us to establish a training recipe for fully open diffusion LLMs. We adopt the fully open Dolma-6T dataset~\cite{olmo2025olmo3} and use the standard LLaMA~\cite{touvron2023llama} architecture with a context length of 4,096 for MDM-Prime-v2 (7B parameters). Our preliminary run trains MDM-Prime-v2 (7B) on 1.2T tokens using 192 NVIDIA H200 GPUs. As this run has not yet reached the compute-optimal setting discussed in Section~\ref{sec:apx:configuration:scaling}, where a 7B model should be trained on at least 2.9T tokens, we report only the training curve and downstream performance on SciQ~\cite{Welbl2017CrowdsourcingMC}, PIQA~\cite{Bisk2020}, and ARC-Challenge~\cite{Clark2018ThinkYH} as preliminary results. The loss curve is shown in Fig.~\ref{fig:7B}, and downstream results are presented in Table~\ref{tab:apx:7B}. The most significant challenge we encountered at this scale is training efficiency. For MDMs with bidirectional attention, model FLOPs utilization (MFU) peaks at only 37\%, falling short of the ~45\% typically achieved by standard ARMs. We hypothesize that \texttt{torch.compile} is optimized for causal attention transformers and does not yield full acceleration for diffusion LLMs. Developing more hardware-efficient pre-training methods remains an important direction for future work.



\subsection{Related Works}
\label{sec:apx:related_works}
The foundational framework for discrete diffusion was established by \cite{dickstein2015diffusion} and subsequently expanded to real-world generative tasks via D3PM~\cite{austin2021structurediff} and CTMC~\cite{campbell2022ctmc}. Recent efforts have adapted these principles to large language modeling, where MDMs~\cite{sahoo2024simplifieddiff, shi2024simplifieddiff} have emerged as a particularly scalable framework among various discrete diffusion alternatives~\cite{gulrajani2023plaid, gat2024dfm}. While the scaling behavior of MDMs have been thoroughly investigated under both compute-constrained~\cite{nie2025scalingmaskeddiffusionmodels, ni2025trainingoptimallargediffusion, vonrütte2025scalingbehaviordiscretediffusion} and data-constrained~\cite{prabhudesai2025diffdataconstrained} regimes, a significant efficiency gap against ARMs remains. We investigate the role of the subtokenizer in shaping the MDM-Prime training objective and its influence on scaling behavior and downstream performance. Through a principled analysis of token granularity and sub-token entropy, we develop MDM-Prime-v2 and validate its effectiveness through large-scale pretraining, demonstrating state-of-the-art zero-shot commonsense reasoning performance at the 1.1B parameter scale.

\subsection{Limitations and Discussions}
\label{sec:apx:discussion}

This section outlines three potential directions for further enhancing MDM-Prime-v2.

\begin{table}[t]
\vspace{0.5em}
\begin{minipage}{0.52\linewidth}
\newcommand{\boldtoprule}{\toprule[1.2pt]}
\newcommand{\boldbottomrule}{\bottomrule[1.2pt]}
\renewcommand{\arraystretch}{1.1}
\centering
\vspace{-1.5em}
\begin{center}
    \begin{tabular}{lcc}
        \boldtoprule
                  & Entropy  ($\uparrow$)  &   Loss  ($\downarrow$)   \\
        \hline
         Random shuffling &  0.9936  &  2.860  \\
        Greedy assignment  &  0.9943   &  2.858 \\
        Maximum  & 1.0000 & - \\
        \boldbottomrule
    \end{tabular}
    \vspace{-0.5em}
    \caption{Entropy of sub-tokens across different subtokenization strategies. The results are evaluated on C4. The maximum entropy is given by $-\log_2 \frac{1}{b}$, where $b=\lceil\sqrt[\ell]{V}\rceil$ and $V=50,257$. Random shuffling permutes token indices via a random assignment; greedy assignment iteratively pairs the most and least frequent tokens to balance probability mass across sub-tokens (see CDF in Fig.~\ref{fig:greedy_cdf}).}
    \label{tab:apx:greedy}
\end{center}
\end{minipage}\hfill
\begin{minipage}{0.43\linewidth}
    \centering
    \includegraphics[width=\linewidth]{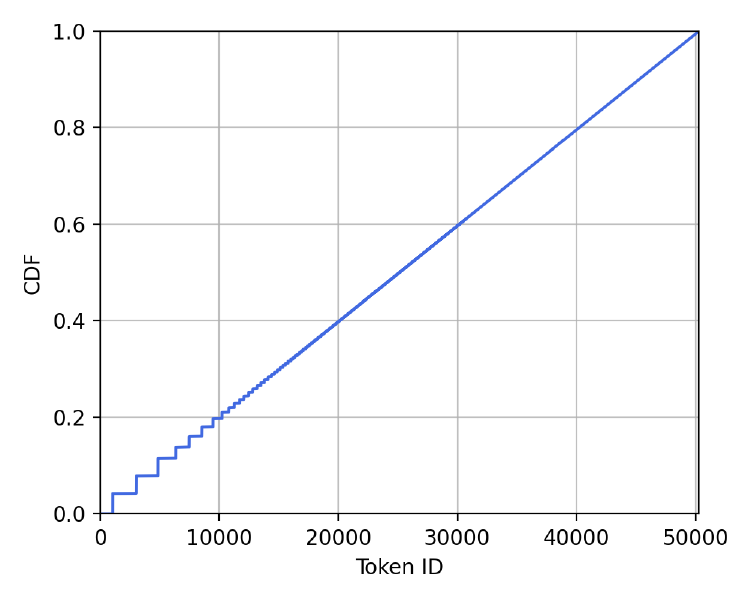}
    \vspace{-2em}
    \captionof{figure}{Cumulative distribution function (CDF) of token indices under greedy assignment, computed using the GPT-2 tokenizer on C4.}
    \label{fig:greedy_cdf}
\end{minipage}
\vspace{-1em}
\end{table}

\paragraph{Optimal Encoding Strategy.} MDM-Prime-v2 adopts a random token index shuffling operation as a simple implementation to address distributional issues caused by the BPE tokenizer. However, \textbf{Proposition~\ref{proposition:entropy_max}} suggests that the optimal design is a function that maximizes the entropy of $\vy_t$. Although our experimental results demonstrate that index shuffling effectively increases entropy—yielding a near-optimal design—we anticipate that a rigorously derived algorithmic approach could further enhance performance. As an ablation, we compare random shuffling against a greedy assignment algorithm that maximizes entropy given corpus token-frequency statistics; results are reported in Table~\ref{tab:apx:greedy} (49M-parameter models trained for $3\times 10^{18}$ FLOPs)  with the corresponding CDF of token IDs under greedy assignment shown in Fig.~\ref{fig:greedy_cdf}. Greedy assignment achieves marginally higher entropy but brings negligible performance improvement ($\Delta$loss = 0.002). Given that it requires additional corpus statistics whereas random shuffling is fully off-the-shelf, we recommend random shuffling for practical use. Nevertheless, designing entropy-maximizing assignment strategies that remain computationally lightweight represents a promising direction for future work within the MDM-Prime-v2 framework.

\paragraph{A Complete Training Pipeline at 7B Scale.} While 7B parameters has become a standard model size among contemporary language models, completing a full compute-optimal pretraining run at this scale remains beyond our current computational budget. A compute-optimal 7B model requires training on at least 2.9T tokens (Section~\ref{sec:apx:configuration:scaling}), yet our preliminary run, spanning approximately two weeks on 192 NVIDIA H200 GPUs including engineering optimization and debugging, reaches only 1.2T tokens. An additional challenge at this scale is training efficiency: MDMs with bidirectional attention achieve a peak model FLOPs utilization (MFU) of only 37\%, compared to approximately 45\% for standard ARMs, likely because \texttt{torch.compile} is optimized for causal attention and does not fully accelerate bidirectional architectures. Completing the compute-optimal 7B run and closing this efficiency gap, through hardware-aware kernel design or compiler support for bidirectional attention, represent important practical milestones for scaling MDM-Prime-v2 to the operating regime of modern language models.

\paragraph{Post-training of MDM-Prime-v2.} This work primarily focuses on improving the pretraining of MDM-Prime via enhanced subtokenizer designs. However, as established in recent works~\cite{nie2025llmdiff}, post-training also plays a critical role in downstream performance. Therefore, establishing an explicit relationship between subtokenizer design and downstream capabilities represents a promising direction for future research.


\subsection{Potential Risks \& AI Assistants}
Our work does not involve direct societal risks, as it is a fundamental research contribution focused on improving the training objective and architecture of masked diffusion language models. The datasets used (C4 and SlimPajama) are publicly available and widely adopted benchmarks. AI assistance was used solely for grammar editing.

\end{document}